\theoremstyle{plain}
\newtheorem{theorem}{Theorem}[section]
\newtheorem{lemma}[theorem]{Lemma}
\newtheorem{corollary}[theorem]{Corollary}
\theoremstyle{definition}
\newtheorem{definition}[theorem]{Definition}
\newtheorem{assumption}[theorem]{Assumption}
\theoremstyle{remark}
\newcommand{\mcs}{\mathcal{S}}
\newcommand{\mca}{\mathcal{A}}
\newcommand{\nn}{\nonumber}
\newcommand{\mE}{\mathbb{E}}
\newcommand{\proj}{\mathbf{\prod}_{(\Delta(\mca)^{|\mcs|}}}
\newcommand{\st}{s_{\theta}}
\newcommand{\pit}{\pi_{\theta}}
\newcommand{\lse}{\text{LSE}}
\newcommand{\sv}{V_\sigma}
\newcommand{\sq}{Q_\sigma}
\newcommand{\epe}{\epsilon_{\text{est}}}
\newcommand{\pone}{\pi_{\theta_1}}
\newcommand{\ptwo}{\pi_{\theta_2}}
\icmltitlerunning{Policy Gradient Method For Robust Reinforcement Learning}
\begin{document}

\twocolumn[
\icmltitle{Policy Gradient Method For Robust Reinforcement Learning}

% It is OKAY to include author information, even for blind
% submissions: the style file will automatically remove it for you
% unless you've provided the [accepted] option to the icml2022
% package.

% List of affiliations: The first argument should be a (short)
% identifier you will use later to specify author affiliations
% Academic affiliations should list Department, University, City, Region, Country
% Industry affiliations should list Company, City, Region, Country

% You can specify symbols, otherwise they are numbered in order.
% Ideally, you should not use this facility. Affiliations will be numbered
% in order of appearance and this is the preferred way.
\icmlsetsymbol{equal}{*}

\begin{icmlauthorlist}
\icmlauthor{Yue Wang}{ub}
\icmlauthor{Shaofeng Zou}{ub}
\end{icmlauthorlist}

\icmlaffiliation{ub}{Department of Electrical Engineering, University at Buffalo, New York, USA}
\icmlcorrespondingauthor{Shaofeng Zou}{szou3@buffalo.edu}

% You may provide any keywords that you
% find helpful for describing your paper; these are used to populate
% the "keywords" metadata in the PDF but will not be shown in the document
\icmlkeywords{Robust Reinforcement Learning, Policy Gradient, Actor-Critic, Global Optimality, Distributional Robustness}
\vskip 0.3in
]

% this must go after the closing bracket ] following \twocolumn[ ...

% This command actually creates the footnote in the first column
% listing the affiliations and the copyright notice.
% The command takes one argument, which is text to display at the start of the footnote.
% The \icmlEqualContribution command is standard text for equal contribution.
% Remove it (just {}) if you do not need this facility.

\printAffiliationsAndNotice{}  %leave blank if no need to mention equal contribution
%\printAffiliationsAndNotice{\icmlEqualContribution} % otherwise use the standard text.
\doparttoc % Tell to minitoc to generate a toc for the parts
\faketableofcontents
\begin{abstract}
This paper develops the first policy gradient method with global optimality guarantee and complexity analysis for robust reinforcement learning under model mismatch. 
Robust reinforcement learning is to learn a policy robust to model mismatch between simulator and real environment. We first develop the robust policy (sub-)gradient, which is applicable for any differentiable parametric policy class. We show that the proposed robust policy gradient method converges to the global optimum asymptotically under direct policy parameterization. We further develop a smoothed robust policy gradient method, and show that to achieve an $\epsilon$-global optimum, the complexity is $\mathcal O(\epsilon^{-3})$. We then extend our methodology to the general model-free setting, and design the robust actor-critic method with differentiable parametric policy class and value function. We further characterize its asymptotic convergence and sample complexity under the tabular setting. Finally, we provide simulation results to demonstrate the robustness of our methods.
% 
%Robust reinforcement learning (RL) aims to find an optimal policy under the worst-case over all the uncertain environments with no beforehand information given. When the problem scale is small, robust value-based methods \cite{roy2017reinforcement,wang2021online,badrinath2021robust} are usually adapted to solve the problems, but they become impractical when the problem becomes larger. Policy gradient (policy gradient) method, on the other hand, has received great success both empirically and theoretically in non-robust RL settings, especially when solving problems of great scale. Yet, little is known about whether policy gradient method can be extended to robust RL framework. In this paper, we design a robust policy gradient method to solve model-free robust RL problems, and we show our method can find a globally optimal policy under the worst case. Our algorithm is further smoothed to a smoothed robust policy gradient algorithm with a provable global convergence. We also extend our method to sample-based robust Actor-Critic method to solve both tabular and function approximation settings. Our simulation results also demonstrate the robustness of our robust methods compared to non-robust ones.
\end{abstract}

\section{Introduction}
In practical reinforcement learning (RL) \cite{sutton2018reinforcement} applications, the training environment may often times deviate from the test environment, resulting in a model mismatch between the two. 
Such model mismatch could be because of, e.g., modeling error between simulator and real-world applications, model deviation due to non-stationarity of the environment, unexpected perturbation and potential adversarial attacks. This may lead to a significant performance degradation in the testing environment.

% Compared to value-based methods like Temporal difference (TD) learning or Q-learning, the policy gradient method directly optimizes the policy and requires less space complexity. It has received huge empirical success \cite{schulman2015trust,schulman2017proximal} and demonstrated great performance on more challenging applications \cite{berner2019dota,akkaya2019solving}.

% However these good performances are based on an assumption that the environment on which the learned policy will be performed shall be the same as the one from which the policy is learned. Nevertheless, this assumption usually fails in practice, e.g., in autonomous driving, it would be too dangerous to train the car in real traffic. 

To solve the issue of model mismatch, a framework of robust Markov decision process (MDP) was introduced in \cite{bagnell2001solving,nilim2004robustness,iyengar2005robust}, where the MDP model is not fixed but comes from some uncertainty set. The goal of robust RL is to find a policy that optimize the worst-case performance over all possible MDP models in the uncertainty set. Value-based approaches have been extensively studied under the tabular setting and with function approximation, e.g., \cite{iyengar2005robust,nilim2004robustness,badrinath2021robust,wiesemann2013robust,roy2017reinforcement,tamar2014scaling,lim2013reinforcement,bagnell2001solving,satia1973markovian,xu2010distributionally,wang2021online}. There are also other approaches that are shown to be successful empirically, e.g., based on adversarial training, \cite{vinitsky2020robust,pinto2017robust,abdullah2019wasserstein,hou2020robust,rajeswaran2017epopt,atkeson2003nonparametric,morimoto2005robust,huang2017adversarial,kos2017delving,lin2017tactics,pattanaik2018robust,mandlekar2017adversarially}, which however lack theoretical robustness and optimality guarantee. 
% There are two different settings of robust RL problems, i.e., model-based and model-free. In the model-based setting, the uncertainty sets are beforehand known by the learner, many studies have showed that the robust problems can be solved using dynamic programming method \cite{bagnell2001solving,nilim2004robustness,iyengar2005robust}. However, in most practical scenarios, no information on the MDP model is known and the problems are model-free. The learner needs to use the training data to estimate the environment and learn an optimal policy.  Recently value-based methods have been extended to robust RL problems \cite{roy2017reinforcement,badrinath2021robust,wang2021online}. Although these methods can solve model-free robust RL problems, they require a large memory space and can be inefficient in large-scale problems. On the other hand, whether policy-based methods can be extended to robust RL problems still remains open. Some papers tried to answer this question \cite{russel2021lyapunov,russel2020robust,derman2021twice} but none of them fully obtains a satisfying result. The main challenge lies in the complicated relationship between the policy and the worst-case environment (w.r.t. this certain policy). This dependence can be hard to explicate and brings huge difficulties to analysis. 

The policy gradient method \cite{williams1992simple,sutton1999policy,konda2000actor,kakade2001natural}, which models and optimizes the policy directly, has been widely used in RL thanks to its ease of implementation in model-free setting, scalability to large/continuous state and action spaces, and applicability to any differentiable policy parameterization. Despite a large body of empirical and theoretical work on policy gradient method, 
development of policy gradient approach for robust RL with provable robustness to model mismatch and optimality guarantee still remains largely open in the literature.

In this paper, we develop the first policy gradient method  for robust RL under model mismatch with provable robustness, global optimality and complexity analysis. In this paper, we focus on the $R$-contamination uncertainty set model \cite{hub65,du2018robust,huber2009robust,wang2021online,nishimura2004search,prasad2020learning,prasad2020robust}. Our robust policy gradient method inherits advantages of vanilla policy gradient methods and their variants, and provide provable guarantee on global optimality and robustness. In particular, the challenges and our major contributions are summarized as follows.
\begin{itemize}[leftmargin=*]
    \item  Robust RL aims to optimize the worst-case performance, named robust value function,  where the worst-case is taken over some uncertainty set of MDPs. However, the robust value function involves a ``max" and thus may not be differentiable in the policy. Our first contribution in this paper is the development of robust policy gradient, where we derive the Fr\'{e}chet sub-gradient of the robust value function, and further show that it is the gradient almost everywhere. We would like to highlight that our robust policy gradient applies to any differentiable and Lipschitz policy class. 
    \item Motivated by recent advancements on the global optimality of vanilla policy gradient methods, we are interested in a natural question that whether the global optimum of robust RL can be attained by our robust policy gradient method. The major challenge lies in that the robust value function involves a ``max" over the uncertainty set, and thus has a much more complicated landscape than the vanilla value function. We consider the direct parametric policy class and show that the robust value function satisfies the Polyak-{\L}ojasiewicz (PL) condition \cite{polyak1963gradient,lojasiewicz1963topological}, and our robust policy gradient method converges to a global optimum almost surely.
    
    % therefore, the objective function may not be differentiable. which makes policy gradient method hard to be applied to robust RL problems. We use Fr\'{e}chet sub-gradient to design a robust policy gradient method to solve model-free robust RL problems. Moreover, similar to the non-robust case \cite{agarwal2021theory,mei2020global}, we show that the robust value function satisfies the Polyak-{\L}ojasiewicz condition, and our robust policy gradient algorithm can find a global optimal policy in the worst case. Hence we completely extend policy gradient method to robust RL problem settings.  
    
    \item The robust value function may not be differentiable everywhere, which is the major challenge in the convergence rate analysis. We then design a smoothed robust policy gradient method  as an approximation, where the corresponding smoothed objective function is differentiable. We show that smoothed robust policy gradient method converges to an $\epsilon$-global optimum of the original non-differentiable robust RL problem with a complexity of $\mathcal{O}(\epsilon^{-3})$. 
    % This result also helps us to understand robust policy gradient method better. 
    
    \item  To understand the fundamentals of our robust policy gradient method, our results discussed so far focus on the ideal setting assuming perfect knowledge of the (smoothed) robust policy gradient. Although this is a commonly used setting in recent studies of vanilla policy gradient, e.g., \cite{agarwal2021theory,cen2021fast,mei2020global,bhandari2019global}, such knowledge is typically unknown in practice and need to be estimated from samples. We then focus on the model-free setting, where only samples from the centroid of the uncertainty set are available, and we
    design a model-free robust actor-critic algorithm. Our robust actor-critic can be applied with arbitrary differential parametric policy class and value function approximation in practice. Theoretically, we prove the global optimality of our robust actor critic method under the tabular setting with direct parametric policy class. %Our numerical experiments show that neural robust AC can solve robust RL problem with large state space, and find a more robust policy than the policy non-robust AC finds.
\end{itemize}

\subsection{Related Works}

\textbf{Global optimality of vanilla policy gradient method.} In the non-robust setting, policy gradient methods \cite{williams1992simple,sutton1999policy} as well as their extensions \cite{sutton1999policy,konda2000actor,kakade2001natural,schulman2015trust,schulman2017proximal} have been  successful in various  applications, e.g., \cite{schulman2015trust,schulman2017proximal}. Despite the surge of interest in policy gradient methods, theoretical understanding remains limited to convergence to local optimum and stationary points. It was not until recently that the global optimality of various policy gradient methods was established \cite{bhandari2021linear,bhandari2019global,agarwal2021theory,mei2020global,li2021softmax,laroche2021dr,zhang2021global,cen2021fast,zhang2020variational,xiao2022On}.
% \cite{bhandari2021linear,bhandari2019global} show an equivalence between policy gradient and policy iteration which suggests the global optimality. A more comprehensive analysis of policy gradient algorithms under different policy classes or regularization terms appears in\cite{agarwal2021theory}, obtaining the convergence rates of policy gradient methods. The convergence of policy gradient with soft-max policy class is further studied in \cite{mei2020global,li2021softmax}. This global optimality is further extended to Actor-Critic and natural policy gradient algorithm in \cite{laroche2021dr,zhang2021global,cen2021fast}. 
In this paper, we focus on policy gradient methods for robust RL. The major new challenge lies in that the robust value function is not differentiable everywhere, and the landscape of the robust value function is much more complicated than the vanilla value function.
% show that our algorithms converge to global optimums and also derive their provable finite-time error bounds.

\textbf{Value-based approach for robust RL.} Robust MDP was introduced and studied in \cite{iyengar2005robust,nilim2004robustness,bagnell2001solving,satia1973markovian,wiesemann2013robust,lim2019kernel,xu2010distributionally,yu2015distributionally,lim2013reinforcement,tamar2014scaling}, where the uncertainty set is assumed to be \textit{known} to the learner, and the problem can be solved using dynamic programming. 
Later, the studies were generalized to the model-free setting where stochastic samples from the central MDP of the uncertainty set are available in an online fashion \cite{roy2017reinforcement,badrinath2021robust,wang2021online,tessler2019action} and an offline fashion \cite{zhou2021finite,yang2021towards,panaganti2021sample,goyal2018robust,kaufman2013robust,ho2018fast,ho2021partial,si2020distributionally}.
In this paper, we focus on approaches that model and optimize the policy directly, and develop robust policy gradient method. Our method inherits advantages of policy gradient, and has a broader applicability than value-based method for large-scale problems.

\textbf{Direct policy search for robust RL.} 
% Although policy gradient method has been well studied in non-robust setting, policy-based methods solving robust RL problems are still limited. 
Robust policy gradient method for constrained MDP was studied in \cite{russel2020robust}, however, there are mistakes in the gradient derivation. More specifically, the fact that the worst-case transition kernel is a function of the policy was ignored when deriving the gradient. A recent paper \cite{derman2021twice} showed the equivalence between robust MDP and regularized MDP, and developed a policy gradient method for the case with only reward uncertainty. In \cite{derman2021twice}, it was discussed that it is difficult to their methods extend to problems with uncertain transition kernel because of the dependency between the worst-case transition kernel and the policy. \cite{eysenbach2021maximum} showed a similar result  that maximum entropy regularized MDP is robust to model mismatch.
In this paper, we focus on the challenging problem with uncertain transition kernel, and derive the robust policy gradient. 
We also note that a separate line of work \cite{zhang2021robust,zhang2021corruption} studies the corruption-robust RL problems, where the goal is to learn a robust policy to data corruption, and is fundamentally different from the problem in this paper.

\section{Preliminaries}
\textbf{Markov Decision Process.}
An MDP  $(\mathcal{S},\mathcal{A},  \mathsf P, c, \gamma)$ is specified by: a state space $\mcs$, an action space $\mca$, a transition kernel $\mathsf P=\left\{p^a_s \in \Delta(\mcs), a\in\mca, s\in\mcs\right\}$\footnote{$\Delta(\mcs)$ denotes the $(|\mcs|-1)$-dimensional probability simplex on $\mcs$. }, where $p^a_s$ is the distribution of the next state over $\mcs$ upon the agent taking action $a$ in state $s$, a cost function $c: \mcs\times\mca \to [0,1]$, and a discount factor $\gamma\in[0,1)$. At each time step $t$, the agent in state $s_t$ takes an action $a_t$. The environment then transits to the next state $s_{t+1}$ according to the distribution $p^{a_t}_{s_t}$, and provides a cost signal $c(s_t,a_t)\in [0,1]$ to the agent.

A stationary policy $\pi: \mcs\to \Delta(\mca)$ maps any state to a distribution over the action space $\mca$. More specifically, in state $s$, the agent takes action $a$ with probability $\pi(a|s)$.
The value function of a stationary policy $\pi$ starting from $s\in\mcs$ measures the expected accumulated discounted cost by following policy $\pi$: 
$\mathbb{E}\left[\sum_{t=0}^{\infty}\gamma^t   c(S_t,A_t )|S_0=s,\pi\right]$, and the goal is to find a policy that minimizes the value function for any initial state $s\in\mcs$.

\textbf{Robust MDP.}
The transition kernel of the robust MDP  is not fixed but is from some uncertainty set $\mathcal{P}$. In this paper, we focus on the $(s,a)$-rectangular uncertainty set \cite{nilim2004robustness,iyengar2005robust}, i.e., $\mathcal{P}=\bigotimes_{s,a} \mathcal{P}^a_s$, where $\mathcal{P}^a_s \subseteq \Delta(\mcs)$. 
At each time step, after the agent takes an action, the environment transits to the next state following any transition kernel $\mathsf P\in\mathcal{P}$, and the choice of kernels can be time-varying. A sequence of transition kernel $\kappa=(\mathsf P_0,\mathsf P_1...)\in\bigotimes_{t\geq 0} \mathcal{P}$  can be viewed as a policy chosen by the nature and is referred to as the nature's policy.

The robust value function of a policy $\pi$ is defined as the worst-case expected accumulated discounted cost over $\kappa$ when  following $\pi$ and starting from $s$:
\begin{align}\label{eq:Vdef}
    V^\pi(s)\triangleq \max_{\kappa\in\bigotimes_{t\geq 0} \mathcal{P}} \mathbb{E}_{\kappa}\left[\sum_{t=0}^{\infty}\gamma^t   c(S_t,A_t )|S_0=s,\pi\right].
\end{align}
The robust action-value function can be defined:
$
    Q^\pi(s,a)\triangleq \max_{\kappa\in\bigotimes_{t\geq 0} \mathcal{P}} \mathbb{E}_{\kappa}\left[\sum_{t=0}^{\infty}\gamma^t   c(S_t,A_t )|S_0=s,A_0=a,\pi\right].
$ It has been shown that $V^\pi(s)=\sum_{a\in\mca} \pi(a|s) Q^\pi(s,a)$ \cite{nilim2004robustness,iyengar2005robust}.

The robust Bellman operator of a policy $\pi$ is defined as 
\begin{align}\label{eq:bellman}
    \mathbf T_\pi V(s)\triangleq \sum_{a\in\mca} \pi(a|s) \left(c(s,a)+\gamma \sigma_{\mathcal{P}^a_s}(V) \right),
\end{align}
where $\sigma_{\mathcal{P}^a_s}(V)\triangleq \max_{p\in\mathcal{P}^a_s} p^\top V$ is the support function of $V$ on $\mathcal{P}^a_s$. $\mathbf T_\pi$ is a contraction and $V^\pi$ is the unique fixed point \cite{nilim2004robustness,iyengar2005robust,puterman2014markov}.

 Define the expected worst-case  total cost function under the initial distribution $\rho$ as $J_\rho(\pi)\triangleq\mathbb{E}_{S\sim\rho}[V^\pi(S)].$
The goal of the agent is to find an optimal policy that minimizes $J_\rho(\pi)$:
$
    \min_{\pi} J_\rho(\pi).
$
% where the $\max$ is over all polices. 
% We denote the optimal robust value function and action-value function of $\pi^*$ by $V^*$ and $Q^*$. 

\textbf{$R$-Contamination Uncertainty Set.}
In this paper, we focus on an adversarial model of the uncertainty set, $R$-contamination, where the nature could arbitrarily perturb the state transition of the MDP with a small probability. Let $\mathsf P=\bigotimes_{s\in\mcs, a\in\mca} p_s^a $ be a transition kernel. The $R$-contamination uncertainty set centered at $\mathsf P$ is defined as $\mathcal{P}\triangleq\bigotimes_{s,a}\mathcal{P}^a_s$, where
\begin{align}
    \mathcal{P}^a_s\triangleq \left\{(1-R)p^a_s+Rq|q\in\Delta(\mcs) \right\},  s\in\mcs, a\in\mca.
\end{align}    
This uncertainty set model is widely used in the literature of robust learning and optimization, e.g., \cite{hub65,du2018robust,wang2021online,huber2009robust,nishimura2004search,Kiyohiko2006,prasad2020learning,prasad2020robust}. The $R$-contamination set models the scenario where the state transition could be arbitrarily perturbed with a small probability $R$, hence is more suitable for systems suffering from random perturbations, adversarial attacks, and outliers in sampling. $R$-contamination set can also be connected to  uncertainty sets defined by total variation, KL-divergence and Hellinger distance via inequalities, e.g., Pinsker’s inequality. On the other hand, the $R$-contamination model is more clean and straightforward, which makes the derivation of the robust policy gradient, and the convergence and complexity analyses tractable.

% In \cite{wang2021online}, the  $R$-Contamination model is considered as the uncertainty set. This model is more suitable for an adversarial setting than other uncertainty models. Namely, let $\mathsf P=\bigotimes_{s\in\mcs, a\in\mca} p_s^a $ be an \textbf{unknown} transition kernel. The $R$-contamination uncertainty set centered at $p^a_s$ is defined as
% \begin{align}
%     \mathcal{P}^a_s\triangleq \left\{(1-R)p^a_s+Rq|q\in\Delta(\mcs) \right\},  s\in\mcs, a\in\mca,
% \end{align}    
% for some $0\leq R\leq 1$. The $R$-contamination uncertainty set is then defined as $\mathcal{P}\triangleq\bigotimes_{s,a}\mathcal{P}^a_s$. 

% This model can be viewed as an adversarial model, i.e., at each time step, the adversarial nature can either do nothing and transit following the original transition kernel $p$, or it can modify the transition kernel using any $q\in\Delta(\mcs)$, and transit following $q$. 

Under the $R$-contamination model, the support function can be easily computed as follows:
\begin{align}
    \sigma_{\mathcal{P}^a_s}(V)=(1-R)\sum_{s'\in\mcs}p^a_{s,{s'}}V({s'})+R\max_{{s'}} V({s'}),
\end{align}
where $p^a_{s,{s'}}=p^a_{s}(s')$.

\begin{comment}
\subsection{Non-Robust Policy Gradient}
Consider a non-robust MDP $(\mcs,\mca,\mathsf P, c,\gamma)$ and denote the non-robust value function and action-value function by $V_n^\pi$ and $Q_n^\pi$. A parametric policy class is a set $\Pi_\Theta=\left\{ \pit: \theta \in\Theta\right\}$, where for each $\theta\in\Theta$, $\pit$ is a stationary policy. The goal is to find an optimal policy from the parametric policy $\Pi_\Theta$ such that \begin{align}
    \theta^*\in\arg\min_{\theta} \mathbb{E}_{S\sim\rho}[V_n^{\pit}(S)].
\end{align}
From the policy gradient theorem \cite{sutton1999policy}, the gradient of $J_\rho(\theta)$ can be computed as
\begin{align}
    &\nabla_\theta \mathbb{E}_{S\sim\rho}[V_n^{\pit}(S)]\nn\\
    &=\frac{1}{1-\gamma}\mE_{d^\pi_\rho\cdot\pit}\big[\nabla \log\pit(A|S)Q^{\pit}_{n}(S,A)\big],
\end{align}
where $d^{\pi}_{s}({s'})$ is the discounted visitation distribution:
\begin{align}\label{eq:visdis}
    d^{\pi}_{s}({s'}) \triangleq (1-\gamma) \sum_{k=0}^{\infty} \gamma^k \left(\mathbb{P}(S_k={s'}|S_0=s,\pi) \right),
\end{align}
and $d^{\pi}_{\rho}({s'})=\mE_{S\sim\rho}[ d^{\pi}_{S}({s'})]$. 

The policy gradient method then use this gradient to find the optimal policy. 
\end{comment}

\section{Robust Policy Gradient}\label{sec:rpgmain}
Consider a parametric policy class $\Pi_\Theta=\left\{ \pit: \theta \in\Theta\right\}$. Denote $J_\rho(\pit)$ by $J_\rho(\theta)$.  Robust RL aims to find an optimal policy $\pi_{\theta^*}\in \Pi_\Theta$ that minimizes the expected worst-case accumulated discounted cost:
\begin{align}
    {\theta^*}\in\arg\min_{\theta\in\Theta} J_\rho(\theta).
\end{align}
Let $J_\rho^*\triangleq \min_{\theta\in\Theta}J_\rho(\theta)$.
Recall the definition of $V^\pi(s)$ in \eqref{eq:Vdef}. Due to the $\max$ over $\kappa$, $V^\pi(s)$ may not be differentiable.  To solve this issue, we adopt the Fr\'{e}chet sub-differential \cite{kruger2003frechet}.
\begin{definition}
For a function $f:  \mathcal{X}\subseteq \mathbb{R}^N\to \mathbb{R}$, a vector $u\in\mathbb{R}^N$ is called a Fr\'{e}chet sub-differential of $f$ at $x$, if 
\begin{align}
    \lim_{h\to 0}\inf_{h\neq 0} \frac{f(x+h)-f(x)-\langle h, u\rangle}{\|h\|}\geq 0.
\end{align}
The set of all the sub-differential of $f$ at $x$ is denoted by $\partial f(x)$.
\end{definition}
Clearly, when $f(x)$ is differentiable at $x$, $\partial f(x)=\left\{ \nabla f(x)\right\}$. 
% Although the definition of the sub-differential is straightforward, the Fr\'{e}chet sub-differential doesn't necessarily exist (e.g., $f(x)=-|x|$ at $x=0$). And even if it exists, it is hard to find a sub-gradient in some cases. 

Without loss of generality, we assume that the parametric policy class is differentiable and Lipschitz.
\begin{assumption}\label{ass:lip}
The policy class $\Pi_\Theta$ is differentiable and $k_\pi$-Lipschitz, i.e., for any $s\in\mcs,a\in\mca$ and $\theta\in\Theta$, there exists a  universal constant $k_\pi$, such that
$
    \|\nabla \pit(a|s)\| \leq k_\pi.\label{eq:assum1}
$
\end{assumption}
This assumption  can be easily satisfied by many policy classes, e.g., direct parameterization \cite{agarwal2021theory}, soft-max \cite{mei2020global,li2021softmax,zhang2021global,wang2020finite}, or neural network with Lipschitz and smooth activation functions \cite{du2019gradient,neyshabur2017implicit,miyato2018spectral}.

Define the discounted visitation distribution as 
$d^\pi_s(s')\triangleq (1-\gamma+\gamma R) \sum^\infty_{t=0} \gamma^t(1-R)^t \cdot \mathbb{P}(S_t=s'|S_0=s,\pi),$
and let 
 $d^\pi_\rho(s')=\mathbb{E}_{S\sim\rho}[d^\pi_S(s')].$
Denote $\st\triangleq \arg\max_s V^{\pit}(s)$. Then, the (sub-)gradient of $J_\rho(\theta)$ can be computed in the following theorem. 
\begin{theorem}\label{coro:subJ}(Robust Policy Gradient)
Consider a class of policies $\Pi_\Theta$ satisfying \cref{ass:lip}.
For any distribution $\rho$, denote 
\begin{align}\label{eq:subgra}
    &\psi_\rho(\theta)\triangleq\nn\\
    &\frac{\gamma R}{(1-\gamma)(1-\gamma+\gamma R)}\sum_{s\in\mcs} d^{\pit}_{\st}(s) \sum_{a\in\mca} \nabla \pit(a|s) Q^{\pit}(s,a) \nn\\
    &+\frac{1}{1-\gamma+\gamma R} \sum_{s\in\mcs} d^{\pit}_{\rho}(s) \sum_{a\in\mca} \nabla \pit(a|s) Q^{\pit}(s,a),
\end{align}
Then, 
% where \begin{align}\label{eq:visdis}
%     d^\pi_s(s')&\triangleq (1-\gamma+\gamma R) \sum^\infty_{t=0} \gamma^t(1-R)^t \nn\\
%     &\quad \cdot \mathbb{P}(S_t=s'|S_0=s,\pi),
% \end{align}
% and 
% \begin{align}
%     d^\pi_\rho(s')=\mathbb{E}_{S\sim\rho}[d^\pi_S(s')]
% \end{align}
% being the discounted visitation distribution. Then
(1) almost everywhere in $\Theta$, $J_\rho(\theta)$ is differentiable and $\psi_\rho(\theta)=\nabla J_\rho(\theta)$; and
(2) at  non-differentiable $\theta$, $\psi_\rho(\theta)\in\partial J_\rho(\theta)$. 

\end{theorem}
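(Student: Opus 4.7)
My plan is to introduce an auxiliary \emph{fixed-nature} value function $\tilde V^{\pit}$ obtained by replacing the adversarial $\max_{s'} V(s')$ inside the robust Bellman recursion with $V(\st)$, where $\st$ is frozen at the reference $\theta$. Using the explicit form of the support function under $R$-contamination, the two value functions coincide at $\theta$; moreover, since freezing the nature's perturbation target is a suboptimal choice for the inner $\max$, one has the pointwise envelope bound $V^{\pi_{\theta'}}(s)\geq \tilde V^{\pi_{\theta'}}(s)$ for all $\theta'$. This reduces the derivation to differentiating a standard discounted Bellman equation and, at the same time, furnishes the key inequality needed for the subgradient claim at non-differentiable $\theta$.

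Differentiating the recursion for $\tilde V^{\pi_{\theta'}}$ at $\theta'=\theta$ and exploiting $\sum_a\nabla\pit(a|s)=0$ to absorb the $\gamma R V^{\pit}(\st)$ term into the definition of $Q^{\pit}$ yields
\begin{equation*}
\nabla \tilde V^{\pit}(s) = g(s) + \gamma(1-R)\sum_{s'}\Bigl(\sum_a \pit(a|s) p^a_{s,s'}\Bigr)\nabla \tilde V^{\pit}(s') + \gamma R\,\nabla \tilde V^{\pit}(\st),
\end{equation*}
with $g(s)\triangleq \sum_a \nabla \pit(a|s)\,Q^{\pit}(s,a)$. This is an affine fixed-point equation with effective discount $\gamma(1-R)$ and constant offset $\gamma R\,\nabla\tilde V^{\pit}(\st)$. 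Unrolling the geometric series and recognizing that the weights $(1-\gamma+\gamma R)\gamma^t(1-R)^t$ are exactly those appearing in the definition of $d^{\pit}_s$ gives
\begin{equation*}
\nabla \tilde V^{\pit}(s) = \tfrac{1}{1-\gamma+\gamma R}\sum_{s'} d^{\pit}_s(s')\,g(s') + \tfrac{\gamma R}{1-\gamma+\gamma R}\,\nabla \tilde V^{\pit}(\st).
\end{equation*}
Evaluating at $s=\st$ and solving for the constant gives $\nabla \tilde V^{\pit}(\st)=\tfrac{1}{1-\gamma}\sum_{s'} d^{\pit}_{\st}(s')\,g(s')$; substituting back and averaging over $s\sim\rho$ recovers precisely $\psi_\rho(\theta)$.

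For part (1), whenever $\arg\max_s V^{\pit}(s)$ is a singleton, continuity of $V^{\pit}$ in $\theta$ (from Assumption~\ref{ass:lip} together with the contraction property of $\mathbf T_\pi$) implies that the argmax is locally constant; hence $V^{\pi_{\theta'}}\equiv \tilde V^{\pi_{\theta'}}$ on a neighborhood and $\nabla J_\rho(\theta)=\psi_\rho(\theta)$. The tie set, where two or more states attain the max, is the finite union of zero-sets of the continuous functions $V^{\pit}(s_1)-V^{\pit}(s_2)$, which has Lebesgue measure zero. For part (2), at a non-differentiable $\theta$ pick any $\st$ achieving the max; the envelope bound gives $J_\rho(\theta+h)-J_\rho(\theta)\geq \tilde J_\rho(\theta+h)-\tilde J_\rho(\theta)$, and since $\tilde J_\rho$ is differentiable at $\theta$ with gradient $\psi_\rho(\theta)$,
\begin{equation*}
\liminf_{h\to 0}\frac{J_\rho(\theta+h)-J_\rho(\theta)-\langle h,\psi_\rho(\theta)\rangle}{\|h\|}\geq 0,
\end{equation*}
which is exactly the Fréchet subgradient criterion.

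The main obstacle is engineering the auxiliary $\tilde V$ so that it simultaneously lower-bounds $V$ pointwise and admits a clean gradient through a standard discounted Bellman recursion; once this envelope is in place, the gradient identity is pure linear algebra and the almost-everywhere identification reduces to the routine measure-zero statement about tie sets of continuous functions on $\Theta$.
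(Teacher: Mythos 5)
Your construction is a genuinely different route from the paper's, and for the subgradient content it is clean and correct. The paper works directly with an exact difference identity for $V^{\pi_{\theta_1}}-V^{\pi_{\theta_2}}$ (a robust performance--difference formula), lower-bounds the term $\max V^{\pi_{\theta_1}}-\max V^{\pi_{\theta_2}}$ by evaluating the identity at $s_{\theta_2}$, and then verifies the Fr\'{e}chet $\liminf$ condition term by term using the Lipschitz continuity of $Q^{\pit}$ in $\theta$; the candidate expression itself is obtained separately by sub-differential calculus on the Bellman recursion and solving the self-consistency equation $\phi(\theta)=c_1\phi(\theta)+c_2$. Your fixed-nature minorant $\tilde V^{\pi_{\theta'}}$ packages all of this into one envelope argument: $\tilde V^{\pi_{\theta'}}$ is the value under the stationary kernel obtained by choosing $q=\delta_{\st}$, which lies in the $R$-contamination set, so $V^{\pi_{\theta'}}\geq\tilde V^{\pi_{\theta'}}$ everywhere with equality at $\theta'=\theta$ by uniqueness of the fixed point; differentiability of this touching minorant immediately yields the Fr\'{e}chet inequality, and your unrolled recursion for $\nabla\tilde V^{\pit}$ --- including the step of evaluating at $s=\st$ to solve for $\nabla\tilde V^{\pit}(\st)=\frac{1}{1-\gamma}\sum_{s'}d^{\pit}_{\st}(s')g(s')$ --- reproduces $\psi_\rho(\theta)$ exactly. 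This is more modular than the paper's computation and makes the role of the worst-case perturbation target transparent.

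The gap is in your part (1). The claim that the tie set $\left\{\theta:\ |\arg\max_s V^{\pit}(s)|\geq 2\right\}$ has Lebesgue measure zero because it is a finite union of zero sets of the continuous functions $V^{\pit}(s_1)-V^{\pit}(s_2)$ is not justified: zero sets of continuous (even Lipschitz) functions can have positive measure, and for a symmetric or degenerate MDP one can have $V^{\pit}(s_1)=V^{\pit}(s_2)$ on an open set of parameters. The paper avoids this issue entirely: it first proves that $V^{\pit}$, hence $J_\rho$, is Lipschitz in $\theta$ (\cref{thm:lip}), invokes Rademacher's theorem to get differentiability almost everywhere, and then uses the fact that the Fr\'{e}chet subdifferential at a point of differentiability is the singleton $\{\nabla J_\rho(\theta)\}$, so the inclusion $\psi_\rho(\theta)\in\partial J_\rho(\theta)$ forces $\psi_\rho(\theta)=\nabla J_\rho(\theta)$ a.e. Your proof is repaired the same way, because your part (2) envelope argument never actually uses non-differentiability and therefore establishes $\psi_\rho(\theta)\in\partial J_\rho(\theta)$ at \emph{every} $\theta$; but you must then upgrade your appeal to mere continuity of $\theta\mapsto V^{\pit}$ (via \cref{ass:lip} and the contraction) to an explicit Lipschitz bound in order to invoke Rademacher. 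The local-constancy-of-the-argmax observation remains a nice way to see differentiability off the tie set, but it cannot by itself deliver the ``almost everywhere'' conclusion.
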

If we set $R=0$, i.e., the uncertainty set $\mathcal P$ reduces to a singleton $\{\mathsf P\}$ and there is no robustness, then $\psi_\rho(\theta)$ in \eqref{eq:subgra} reduces to the vanilla policy gradient \cite{sutton1999policy}. 

As can be seen from \eqref{eq:subgra}, the robust policy (sub)-gradient is a function of the robust Q-function. Note that in robust RL, $J_\rho(\theta)$ may not be differentiable everywhere, and therefore, the sub-gradient is needed. 
The mistake in \cite{russel2020robust} is due to the ignorance of the dependence of the worst-case kernel  on $\theta$, and their robust policy gradient is a function of the vanilla Q-function not the robust Q-function, which should not be the case.

In policy gradient approaches, the agent often has the challenge of exploration. If $\rho$ highly concentrates on a subset of $\mcs$, then the agent may not be able to explore all the states and may end up with a sub-optimal solution. To solve this issue, we introduce an  optimization measure $\mu$ satisfying $\mu_{\min} \triangleq\min_s\mu(s)>0$ \cite{agarwal2021theory}. The initial state distribution $\rho$ is called the performance measure. As we will show in the next section, although we want to minimize $\mE_{\rho}[V^\pi]$, we can perform sub-gradient descent with respect to $J_\mu\triangleq\mE_{\mu}[V^\pi]$ and the algorithm can still find an optimum of $\mE_{\rho}[V^\pi]$.

Given the robust policy gradient in \Cref{coro:subJ}, we design our robust policy gradient algorithm in \cref{alg:rpgfull}. We note that our \Cref{alg:rpgfull} can be applied to any arbitrarily parameterized policy class that is differentiable and Lipschitz. Here $\mathbf{\prod}_\Theta$ denotes the projection onto $\Theta$.
\begin{algorithm}[!htb]
\caption{Robust Policy Gradient}
\label{alg:rpgfull}
\textbf{Input}:   $T$, $\alpha_t$ \\
\textbf{Initialization}: $\theta_0$
\begin{algorithmic} %[1] enables line numbers
\FOR {$t=0,1,...,T-1$}
% \STATE {Compute sub-gradient $\psi_\mu(\theta_t)$ as \eqref{eq:subgra}}
\STATE {$\theta_{t+1}\leftarrow \mathbf{\prod}_\Theta(\theta_t-\alpha_t \psi_\mu(\theta_t))$}
\ENDFOR
\end{algorithmic}
\textbf{Output}: $\theta_T$
\end{algorithm}

\section{Global Optimality: Direct Parameterization}\label{sec:globalrpg}

% Inspired by recent advances in global optimality of policy gradient methods, 
% in the next section, wefurther show that $J_\mu$ satisfies the Polyak-{\L}ojasiewicz condition under direct parametrization, and \Cref{alg:rpgfull} converges to a globally optimal solution.
In this section, we show that the robust objective function $J_\rho(\theta)$ satisfies the Polyak-{\L}ojasiewicz (PL) condition when the direct parametric policy class is used, i.e., $\Theta=(\Delta(\mca))^{|\mcs|}$ and $\pit(a|s)=\theta_{s,a}$, and we further show the global optimality of \Cref{alg:rpgfull}.
% Our method can also be further extended to show the global optimality under other parametric policy classes.

\Cref{alg:rpgfull} is in fact a sub-gradient descent algorithm for a non-convex function $J_\mu$. Following classic results from stochastic approximation and optimization \cite{beck2017first,borkar2009stochastic,borkar2000ode}, \Cref{alg:rpgfull} is expected to converge to stationary points only. Showing the global optimality requires further characterization of $J_\mu$, which involves the ``max" over $\kappa$, and is thus more challenging than the vanilla non-robust case.

We first show that the robust objective function $J_\rho$ satisfies the PL-condition under the direct parameterization. 
Informally, a function $f(\theta)$ is said to satisfy the PL condition if 
$
    f(\theta)-f(\theta^*) \leq \mathcal{O}(F(\nabla f(\theta))),
$
for some suitable scalar notion of first-order stationarity $F$, which measures how large the gradient is \cite{karimi2016linear,bolte2007lojasiewicz}.
This condition implies that if a solution is close to some first-order stationary point, the function value is then close to the global optimum. 

% In the following theorem,  we show that 
\begin{theorem}[PL-Condition]\label{thm:PL}
Under direct policy parameterization and for any optimization measure $\mu\in\Delta(\mcs)$ and performance measure $\rho\in\Delta(\mcs)$,
\begin{align}\label{eq:PL}
    J_\rho(\theta)-J_\rho^*\leq C_{PL} \max_{\hat{\pi}\in (\Delta(\mca))^{|\mcs|}} \left\langle \pi_\theta-\hat{\pi}, \psi_\mu(\theta)  \right\rangle,
\end{align}
where $C_{PL}=\frac{1}{(1-\gamma)\mu_{\min}}$.
\end{theorem}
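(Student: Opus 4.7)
The plan is to reduce the robust PL inequality to a non-robust one inside a carefully chosen fixed-kernel MDP. Let $\mathsf P^*_\theta$ denote the (stationary) worst-case kernel for $\pit$: under $R$-contamination with $(s,a)$-rectangular uncertainty, this is simply $(1-R)p^a_s + R\delta_{\st}$ at every $(s,a)$. By definition of the robust value, $V^{\pit}_{\mathsf P^*_\theta} = V^{\pit}$ and $V^{\pi^*}_{\mathsf P^*_\theta} \leq V^{\pi^*}$, so, writing $\pi^* = \pi_{\theta^*}$ for an optimal policy,
\begin{align*}
J_\rho(\theta) - J_\rho^* \leq \mE_{S\sim\rho}\bigl[V^{\pit}_{\mathsf P^*_\theta}(S) - V^{\pi^*}_{\mathsf P^*_\theta}(S)\bigr].
\end{align*}

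Next I would apply the standard (non-robust) performance difference lemma inside the MDP $\mathsf P^*_\theta$ in the variant whose visitation is under $\pi^*$ and whose action-values are under $\pit$:
\begin{align*}
V^{\pit}_{\mathsf P^*_\theta}(\rho) - V^{\pi^*}_{\mathsf P^*_\theta}(\rho) = \tfrac{1}{1-\gamma} \sum_{s'} d^{\pi^*}_{\rho, \mathsf P^*_\theta}(s') \sum_a \bigl(\pit(a|s') - \pi^*(a|s')\bigr) Q^{\pit}_{\mathsf P^*_\theta}(s', a).
\end{align*}
The crucial identification is $Q^{\pit}_{\mathsf P^*_\theta} = Q^{\pit}$, i.e., the robust action-value function coincides with the fixed-kernel action-value because $\mathsf P^*_\theta$ is the nature-optimal kernel for $\pit$. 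Replacing $\sum_a \pi^*(a|s') Q^{\pit}(s',a) \geq \min_a Q^{\pit}(s', a)$ eliminates $\pi^*$ and gives $J_\rho(\theta) - J_\rho^* \leq \tfrac{1}{1-\gamma}\sum_{s'} d^{\pi^*}_{\rho, \mathsf P^*_\theta}(s')\, g(s')$, where $g(s) := V^{\pit}(s) - \min_a Q^{\pit}(s,a) \geq 0$.

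To match the right-hand side with $\psi_\mu(\theta)$, I would expand $(I - \gamma \mathsf P^*_\theta)^{-1}$ via Sherman--Morrison, exploiting the rank-one structure $\mathsf P^*_\theta = (1-R)P^{\pit} + R \mathbf{1} e_{\st}^T$, to obtain the identity
\begin{align*}
d^{\pit}_{\mu, \mathsf P^*_\theta}(s) = \tfrac{1-\gamma}{1-\gamma+\gamma R}\, d^{\pit}_\mu(s) + \tfrac{\gamma R}{1-\gamma+\gamma R}\, d^{\pit}_{\st}(s),
\end{align*}
which is exactly $(1-\gamma)$ times the coefficient $D^\theta_\mu(s)$ multiplying $Q^{\pit}(s,a)$ in $\psi_\mu(\theta)_{s,a}$ under direct parameterization. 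A state-by-state computation of the inner maximum then yields $\max_{\hat\pi}\langle \pit - \hat\pi, \psi_\mu(\theta)\rangle = \tfrac{1}{1-\gamma}\sum_s d^{\pit}_{\mu, \mathsf P^*_\theta}(s)\, g(s)$. The closing step is a change of measure: since $d^{\pit}_{\mu, \mathsf P^*_\theta}(s) \geq (1-\gamma)\mu(s) \geq (1-\gamma)\mu_{\min}$ and $g \geq 0$, one has $\sum_{s'} d^{\pi^*}_{\rho, \mathsf P^*_\theta}(s')\, g(s') \leq \tfrac{1}{(1-\gamma)\mu_{\min}}\sum_s d^{\pit}_{\mu, \mathsf P^*_\theta}(s)\, g(s)$; chaining the three inequalities produces exactly $C_{PL} = \tfrac{1}{(1-\gamma)\mu_{\min}}$.

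The principal obstacle is producing the \emph{robust} $Q^{\pit}$ out of the PDL, as opposed to $Q^{\pi^*}$ or some hybrid Q under a different kernel, because $\psi_\mu(\theta)$ depends precisely on $Q^{\pit}$. Both the particular PDL variant (visitation under $\pi^*$, Q under $\pit$) and the freezing of the kernel to $\mathsf P^*_\theta$ are needed for this; the latter is clean only because the $(s,a)$-rectangular $R$-contamination setting admits a single stationary worst-case kernel, depending on $\pit$ only through $\st$.
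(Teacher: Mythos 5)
Your proof is correct, but it takes a genuinely different route from the paper's. The paper never freezes the worst-case kernel: it derives a robust performance-difference inequality by recursively unrolling the robust Bellman equation, handling the $\gamma R(\max V^{\pi_{\theta_1}}-\max V^{\pi_{\theta_2}})$ term with a self-bounding trick (substitute $s=s_{\theta_1}$, solve the resulting inequality for the max-state difference, and plug back in), arriving at a bound with coefficients $l^*_\theta(s)=\frac{1}{1-\gamma+\gamma R}d^{\pi_{\theta^*}}_\rho(s)+\frac{\gamma R}{(1-\gamma)(1-\gamma+\gamma R)}d^{\pi_{\theta^*}}_{\st}(s)$; it then bounds the ratio $l^*_\theta/l_\theta$ by $C_{PL}$ and only at the very end replaces $\pi_{\theta^*}$ by the maximizing $\hat\pi$. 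You instead observe that under $R$-contamination the nature-optimal kernel for $\pit$ is the stationary kernel $(1-R)p^a_s+R\delta_{\st}$, reduce to the standard non-robust performance difference lemma in that fixed MDP (with $Q^{\pit}_{\mathsf P^*_\theta}=Q^{\pit}$), eliminate $\pi^*$ early via $\min_a Q^{\pit}(s,a)$, and recover the paper's modified visitation distributions through a Sherman--Morrison expansion of $(I-\gamma\mathsf P^*_\theta)^{-1}$ exploiting the rank-one perturbation. The two arguments in fact meet at the same intermediate object --- your identity $\frac{1}{1-\gamma}d^{\pi}_{\mu,\mathsf P^*_\theta}=\frac{1}{1-\gamma+\gamma R}d^{\pi}_\mu+\frac{\gamma R}{(1-\gamma)(1-\gamma+\gamma R)}d^{\pi}_{\st}$ is exactly what the paper's recursion computes by hand --- and both yield the same constant. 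What your route buys is modularity and transparency: the appearance of the robust $Q^{\pit}$ and of the $(1-R)$-discounted visitation measures is explained structurally rather than emerging from the recursion, and the standard PDL is reused off the shelf. What it costs is generality: it leans on the $R$-contamination set admitting a single stationary worst-case kernel determined by $\st$ alone, whereas the paper's Bellman-recursion machinery is the same one it already built for the Lipschitz and sub-gradient results. (Minor points, neither a gap: if $\arg\max_s V^{\pit}(s)$ is not unique, both your kernel and $\psi_\mu$ depend on a fixed choice of $\st$, and the argument holds for any consistent choice; and the maximum over $\hat\pi$ decomposes state-by-state only because the coefficient $l_\theta(s)$ multiplying $Q^{\pit}(s,\cdot)$ is nonnegative, which you implicitly use and which does hold.)
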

% \begin{remark}
Note that $\psi_\mu(\theta)$ on the right hand side of \eqref{eq:PL} is a sub-gradient of $J_\mu$, and $J_\rho(\theta)$ on the left hand side of \eqref{eq:PL} is the objective function with respect to $\rho$. Therefore, for any optimization measure $\mu$, a stationary point of $J_\mu$ is a global optimum of $J_\rho$. 
% Hence if we can find a stationary point of $J_\mu$ by sub-gradient descent, the original optimization problem is then solved. 
% \end{remark}
Thus the PL-condition in \cref{thm:PL} with results from stochastic approximation will lead to the global optimality of Algorithm \ref{alg:rpgfull}  in the following theorem. 
\begin{theorem}\label{thm:conv}
If $\alpha_t>0$, $\sum^{\infty}_{t=0} \alpha_t=\infty$ and $\sum^{\infty}_{t=0} \alpha^2_t<\infty$, then Algorithm \ref{alg:rpgfull} converges to a global optimum of $J_\rho(\theta)$ almost surely. 
\end{theorem}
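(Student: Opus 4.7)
The plan is to view \Cref{alg:rpgfull} as projected sub-gradient descent with diminishing step sizes on the non-smooth but Lipschitz objective $J_\mu$, analyze it via the ODE/differential-inclusion method from stochastic approximation, and then convert the resulting first-order stationarity into global optimality for $J_\rho$ via the PL condition of \Cref{thm:PL}. Preliminary regularity is straightforward: on the compact set $\Theta=(\Delta(\mca))^{|\mcs|}$, the sub-gradient $\psi_\mu(\theta)$ given by \eqref{eq:subgra} is uniformly bounded (use $Q^{\pit}\leq 1/(1-\gamma)$, that the discounted visitation distribution sums to at most one, and $\|\nabla\pit(a|s)\|\leq k_\pi$ from \Cref{ass:lip}), and $J_\mu$ is Lipschitz in $\theta$.

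Next, I would identify the update $\theta_{t+1}=\mathbf{\prod}_\Theta(\theta_t-\alpha_t\psi_\mu(\theta_t))$ as the Euler discretization of the projected differential inclusion $\dot\theta(t)\in -\psi_\mu(\theta(t))-N_\Theta(\theta(t))$, where $N_\Theta(\theta)$ is the normal cone of the product simplex at $\theta$. Under the Robbins--Monro conditions $\sum\alpha_t=\infty$, $\sum\alpha_t^2<\infty$, together with boundedness and upper semi-continuity of the sub-gradient map (the latter from \Cref{coro:subJ} and continuity of $Q^{\pit}$ in $\theta$), the Benaim--Hofbauer--Sorin / Kushner--Clark framework for stochastic approximation with differential inclusions yields that $\{\theta_t\}$ almost surely tracks trajectories of this inclusion. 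Since $J_\mu$ is a max over a compact family of smooth functions of $\theta$, it is Clarke regular, so along any absolutely continuous solution one has $\tfrac{d}{dt}J_\mu(\theta(t))=\langle \psi_\mu(\theta(t)),\dot\theta(t)\rangle\leq 0$ with strict inequality off equilibria; thus $J_\mu$ is a strict Lyapunov function and the iterates converge to the equilibrium set of the inclusion.

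It remains to identify equilibria as global minimizers of $J_\rho$. Any equilibrium $\theta^{\infty}$ satisfies $-\psi_\mu(\theta^{\infty})\in N_\Theta(\theta^{\infty})$, which on $(\Delta(\mca))^{|\mcs|}$ is equivalent to $\langle \psi_\mu(\theta^{\infty}),\hat\pi-\pi_{\theta^{\infty}}\rangle\geq 0$ for every $\hat\pi\in(\Delta(\mca))^{|\mcs|}$, i.e.\ $\max_{\hat\pi}\langle \pi_{\theta^{\infty}}-\hat\pi,\psi_\mu(\theta^{\infty})\rangle\leq 0$; the trivial choice $\hat\pi=\pi_{\theta^{\infty}}$ forces this maximum to equal $0$. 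Plugging into the PL inequality \eqref{eq:PL} of \Cref{thm:PL} then yields $J_\rho(\theta^{\infty})-J_\rho^{*}\leq 0$, so $\theta^{\infty}$ is globally optimal for $J_\rho$.

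The main obstacle will be the Lyapunov descent step: because $J_\mu$ is non-smooth and $\psi_\mu$ is only a Fr\'{e}chet sub-gradient, one has to invoke Clarke regularity of the max-type function $J_\mu$ (together with the uniform bound on $\|\psi_\mu\|$) to guarantee that $J_\mu$ strictly decreases along trajectories of the inclusion whenever the projected sub-gradient mapping is nonzero, rather than merely at differentiable $\theta$. Once this regularity is in hand, the remaining gap between ``the projected sub-gradient of $J_\mu$ vanishes at $\theta$'' and ``$J_\rho$ attains its minimum at $\theta$'' is exactly what the PL condition of \Cref{thm:PL} closes, which is the pivotal ingredient of the whole argument.
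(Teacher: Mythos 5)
Your overall architecture matches the paper's: both treat \Cref{alg:rpgfull} as projected sub-gradient descent tracked by a projected differential inclusion under the Robbins--Monro conditions, and both use the PL inequality of \Cref{thm:PL} to upgrade first-order stationarity of $J_\mu$ to global optimality of $J_\rho$. The paper instantiates this via Theorem 4.1 of Majewski et al.\ for the inclusion $\dot\theta\in-\partial_C J_\mu(\theta)-N_\Theta(\theta)$, where $\partial_C$ is the \emph{Clarke} sub-differential, after checking that the map is u.s.c., convex-compact valued and locally bounded and that the value of $J_\mu$ on the stationary set has empty interior.

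The genuine gap is in your identification of the equilibria. The limit set produced by the differential-inclusion machinery is $\{\theta:0\in\partial_C J_\mu(\theta)+N_\Theta(\theta)\}$: at a non-differentiable limit point the first-order condition is satisfied by \emph{some} element of the Clarke sub-differential (a convex combination of limits of gradients at nearby differentiable points), not necessarily by the particular Fr\'{e}chet sub-gradient $\psi_\mu(\theta^\infty)$ of \Cref{coro:subJ}. Your step ``any equilibrium satisfies $-\psi_\mu(\theta^\infty)\in N_\Theta(\theta^\infty)$'' therefore does not follow, and since the PL inequality \eqref{eq:PL} is stated only for $\psi_\mu$, you cannot plug the equilibrium condition into it directly. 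This is precisely what the paper's Lemma \ref{lemma:sta} repairs: writing $\partial_C J_\mu(\hat\theta)=\mathrm{conv}\{g:\exists\,\theta_n\to\hat\theta,\ \nabla J_\mu(\theta_n)\to g\}$ and using that under direct parameterization every entry of $\psi_\mu(\theta)$ is non-negative (see \eqref{eq:subgrad}), a zero convex combination of non-negative limit gradients forces one limit gradient to be exactly zero; one then applies PL along the corresponding sequence of differentiable points and passes to the limit by continuity of $J_\rho$. Without an argument of this type (or a version of \Cref{thm:PL} valid for every element of $\partial_C J_\mu$), your final step does not close. Secondarily, your Lyapunov-descent step leans on Clarke regularity of $J_\mu$ as a ``max over a compact family of smooth functions,'' which for the robust value function means a max over the infinite-dimensional set $\bigotimes_{t\ge0}\mathcal P$ of nature's policies; this is plausible but is asserted rather than established, whereas the paper avoids it entirely by checking the hypotheses of the packaged convergence theorem.
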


% \begin{remark}
Theorem \ref{thm:conv} suggests that our robust policy gradient algorithm converges to a global optimum, which matches the global optimality results for vanilla policy gradients in e.g., \cite{agarwal2021theory,mei2020global}. However, the analysis here is much more challenging due to the non-differentiable objective function and min-max problem structure. 
% Our indifferentiable objective function brings us many difficulties and  requires new technique and a more careful analysis.
% \end{remark}

% It is then natural to ask, how fast does our algorithm converge? Unfortunately, it is still an open problem to find the finite-time error bound of a non-smooth non-convex optimization algorithm. However, in the next section we design a smoothed algorithm which can be arbitrarily approximate to Algorithm \ref{alg:rpgfull} and we believe its finite-time error bound can help us understand the rate of Algorithm \ref{alg:rpgfull} better. 

\section{Smoothed Robust Policy Gradient}\label{sec:srpg}
It is in general challenging to analyze the convergence rate of \cref{alg:rpgfull}, which is a projected sub-gradient descent algorithm for non-differentiable non-convex function. In this section, we construct a smoothed approximation $J_{\sigma,\rho}$, which converges to $J_\rho$ as $\sigma\rightarrow\infty$. We develop a smoothed robust policy gradient, and show that the smoothed $J_{\sigma,\rho}$ satisfies the PL-condition. We characterize its global optimality and show that to achieve an $\epsilon$-global optimal, the complexity is $\mathcal O(\epsilon^{-3})$.

For convenience, in the remaining of this paper we assume that $\rho=\mu$ and  $\rho_{\min}>0$, and we omit the subscript $\rho$ in $J_\rho$ and $J_{\sigma,\rho}$. The algorithm design and theoretical results can be similarly extended to the general setting with $\rho\neq\mu$ as in  \cref{sec:rpgmain} and \cref{sec:globalrpg}.
% \subsection{Smoothed Robust Policy Gradient}

We use the LogSumExp (LSE) operator to approximate the $\max$ operator, where $$\lse(\sigma,V)=\frac{\log (\sum^d_{i=1} e^{\sigma V(i)})}{\sigma}$$ for $V\in\mathbb{R}^d$ and some $\sigma>0$.  
The approximation error $|\lse(\sigma,V)-\max V|\to 0$ as $\sigma\to\infty$. By replacing $\max$ in \eqref{eq:bellman} using  LSE, the smoothed Bellman operator is 
\begin{align}
    \mathbf T_{\sigma}^{\pi} V(s)&=\mE_{A\sim\pi(\cdot|s)}\bigg[c(s,A)+\gamma(1-R)\sum_{{s'}\in\mcs} p^{A}_{s,{s'}} V({s'})\nn\\
    &\quad\quad+\gamma R\cdot \lse(\sigma, V)\bigg].
\end{align}
The reason why we do not use soft-max is because with soft-max, the induced Bellman operator is not a contraction anymore \cite{Asadi2016,wang2021online}. 
With LSE,  $\mathbf T_{\sigma}^{\pi}$ is a contraction and has a unique fixed point \cite{wang2021online}, which we denote by  $V^\pi_\sigma$ and name as the smoothed robust value function. We can also define the smoothed robust action-value function $Q^\pi_\sigma(s,a)\triangleq c(s,a)+\gamma(1-R)\sum_{{s'}\in\mcs} p^{a}_{s,{s'}} V^\pi_\sigma({s'}) +\gamma R\cdot \lse(\sigma, V^\pi_\sigma)$.

Clearly, $V^\pi_\sigma$ is differentiable in $\theta$ and it converges to $V^\pi$ as $\sigma\to\infty$. 
 We then define the smoothed objective function as 
\begin{align}
    J_\sigma(\theta)=\sum_{s\in\mcs} \rho(s)V^{\pit}_\sigma(s),
\end{align}
and let
$
    J_\sigma^*=\min_\theta  J_\sigma(\theta).
$
Note $J_\sigma(\theta)$ is an approximation of $J(\theta)$, and $J_\sigma^*$ also converges to $J^*$ as $\sigma\to\infty$. In the following theorem, we derive the gradient of $J_\sigma$, which holds for any differentiable policy class.
\begin{theorem}\label{thm:gradJsigma}
Consider a policy class $\Pi_\Theta$ that is differentiable.
The gradient of $J_\sigma(\theta)$ is
\begin{align}\label{eq:gradJ}
    \nabla J_\sigma(\theta)=B(\rho,\theta)+\frac{\gamma R\sum_{s\in\mcs} e^{\sigma \sv^{\pit}(s)} B(s,\theta)}{(1-\gamma)\sum_{s\in\mcs} e^{\sigma \sv^{\pit}(s)}},
\end{align}
where $B(s,\theta)\triangleq\frac{1}{1-\gamma+\gamma R} \sum_{{s'}\in\mcs} d^{\pi}_s({s'})\sum_{a\in\mca} \nabla \pit(a|{s'}) \\\cdot \sq^{\pit}({s'},a)$, and $B(\rho,\theta)\triangleq\mathbb{E}_{S\sim\rho}[B(S,\theta)]$.
\end{theorem}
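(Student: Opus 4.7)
The plan is to derive $\nabla J_\sigma(\theta)$ by starting from the smoothed Bellman fixed-point identity $V^{\pit}_\sigma(s)=\sum_{a}\pit(a|s)\sq^{\pit}(s,a)$, differentiating both sides in $\theta$, and then solving the resulting linear system for the vector $\nabla V^{\pit}_\sigma$. The key simplification that will make the system tractable is the rank-one structure of the LSE term: since $\lse(\sigma,V)$ depends on $V$ only through a scalar, its gradient produces the same constant contribution at every state.

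First, I would fix $\theta$ and differentiate the Bellman equation for $V^{\pit}_\sigma(s)$. Using Assumption 1 (differentiability of $\pit$), the chain rule, and the identity
\begin{equation*}
\nabla \lse(\sigma,V^{\pit}_\sigma)=\sum_{s'}\frac{e^{\sigma V^{\pit}_\sigma(s')}}{\sum_{s''}e^{\sigma V^{\pit}_\sigma(s'')}}\,\nabla V^{\pit}_\sigma(s'),
\end{equation*}
I would obtain the self-referential identity
\begin{equation*}
\nabla V^{\pit}_\sigma(s)=f(s)+\gamma(1-R)\sum_{s'}P^{\pit}_{s,s'}\nabla V^{\pit}_\sigma(s')+\gamma R\,\langle w,\nabla V^{\pit}_\sigma\rangle,
\end{equation*}
where $f(s)\triangleq\sum_{a}\nabla\pit(a|s)\sq^{\pit}(s,a)$, $P^{\pit}_{s,s'}\triangleq\sum_a\pit(a|s)p^a_{s,s'}$, and $w(s')$ is the softmax weight above. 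The justification that $V^{\pit}_\sigma$ is differentiable in $\theta$ (so interchanging $\nabla$ with the Bellman fixed-point is valid) will come from the implicit function theorem applied to the smoothed Bellman equation, noting that $\mathbf{T}^{\pi}_\sigma$ is a $\gamma$-contraction with LSE (which is $C^\infty$).

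Next, I would solve the above linear system. Let $M\triangleq(I-\gamma(1-R)P^{\pit})^{-1}$ and observe two facts: (i) $M\mathbf{1}=\frac{1}{1-\gamma+\gamma R}\mathbf{1}$, since $P^{\pit}\mathbf{1}=\mathbf{1}$, and (ii) the entries of $M$ are exactly $[M]_{s,s'}=\frac{1}{1-\gamma+\gamma R}d^{\pit}_s(s')$ by the very definition of $d^{\pit}_s$ given in Section 3. Writing the system in vector form as $g=Mf+\gamma R\,\langle w,g\rangle\,M\mathbf{1}$ and taking the inner product with $w$ on both sides (using $\langle w,\mathbf{1}\rangle=1$) yields the scalar equation
\begin{equation*}
\langle w,g\rangle\Big(1-\tfrac{\gamma R}{1-\gamma+\gamma R}\Big)=\langle w,Mf\rangle,
\end{equation*}
which solves to $\langle w,g\rangle=\tfrac{1-\gamma+\gamma R}{1-\gamma}\langle w,Mf\rangle$. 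Substituting back gives $g(s)=(Mf)(s)+\tfrac{\gamma R}{1-\gamma}\langle w,Mf\rangle$.

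Finally, I would recognize $(Mf)(s)=B(s,\theta)$ and $\langle w,Mf\rangle=\frac{\sum_s e^{\sigma \sv^{\pit}(s)}B(s,\theta)}{\sum_s e^{\sigma\sv^{\pit}(s)}}$, take $\nabla J_\sigma(\theta)=\mathbb{E}_{S\sim\rho}[\nabla V^{\pit}_\sigma(S)]$, and use $\mathbb{E}_{S\sim\rho}[B(S,\theta)]=B(\rho,\theta)$ to arrive at \eqref{eq:gradJ}. The main obstacle is the self-referential term coming from the LSE derivative; the crucial observation that turns this from a genuine inversion problem into a scalar equation is that $W=\mathbf{1}w^\top$ has rank one and $M\mathbf{1}$ is a scalar multiple of $\mathbf{1}$, which together let us solve for $\langle w,\nabla V^{\pit}_\sigma\rangle$ in closed form rather than inverting $I-\gamma(1-R)P^{\pit}-\gamma R W$ directly.
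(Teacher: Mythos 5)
Your proposal is correct and follows essentially the same route as the paper's proof: the paper unrolls the differentiated Bellman identity into a geometric series (which is exactly your $M=(I-\gamma(1-R)P^{\pit})^{-1}$, giving $(Mf)(s)=B(s,\theta)$), and then resolves the self-referential LSE term by plugging the expression for $\nabla V^{\pit}_\sigma$ into $\nabla\lse(V^{\pit}_\sigma)$ and solving the resulting scalar fixed-point equation, which is precisely your inner-product-with-$w$ step. Your matrix/rank-one packaging and the explicit appeal to the implicit function theorem for differentiability are slightly tidier presentations of the same argument.
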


We then design the smoothed robust policy gradient algorithm in \cref{alg:srpgfull}.
\begin{algorithm}[!htb]
\caption{Smoothed Robust Policy Gradient}
\label{alg:srpgfull}
\textbf{Input}:   $T$, $\sigma$, $\alpha_t$\\
\textbf{Initialization}: $\theta_0$
\begin{algorithmic}
\FOR {$t=0,1,...,T-1$}
% \STATE {Compute  $\nabla J_\sigma(\theta_t)$ as \eqref{eq:gradJ}}
\STATE {$\theta_{t+1}\leftarrow \mathbf{\prod}_\Theta \left(\theta_t-\alpha_t \nabla J_\sigma(\theta_t) \right)$}
\ENDFOR
\end{algorithmic}
\textbf{Output}: $\theta_T$
\end{algorithm}

\subsection{Global Optimality Under Direct Parameterization}
We focus on direct policy parameterization.
We show that the smoothed objective function $J_\sigma$ satisfies the PL-condition and develop the global optimality of \cref{alg:srpgfull}.

\begin{theorem}(PL-Condition)\label{thm:smPL}
% If there exists $\theta\in(\Delta(\mca))^{|\mcs|}$ such that for any $s\in\mcs,a\in\mca$, $\pit(a|s)=\theta_{s,a}$, then  
Consider direct policy parameterization. Then
\begin{align}\label{eq:smPL}
    J_\sigma\left(\theta\right)-J_\sigma^*&\leq C_{PL} \max_{\hat{\pi}\in \left(\Delta\left(|\mca|\right)\right)^{|\mcs|}} \left\langle \pi_\theta-\hat{\pi}, \nabla J_\sigma\left(\theta\right) \right\rangle\nn\\
    &\quad+\left(\frac{\gamma R}{1-\gamma}\right)\frac{2\log |\mcs|}{\sigma}.
\end{align} 
\end{theorem}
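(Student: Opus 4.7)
The plan is to mirror the argument of Theorem 4.1 (the PL inequality for the non-differentiable $J_\rho$), with $\lse$ replacing $\max$ and absorbing the induced approximation via the standard bound $|\lse(\sigma,V) - \max_{s'} V(s')| \leq \frac{\log|\mcs|}{\sigma}$. Applying this symmetrically at $\sv^{\pit}$ and at the minimizer of $J_\sigma$ is what produces the additive $\frac{\gamma R}{1-\gamma}\cdot\frac{2\log|\mcs|}{\sigma}$ remainder; the rest of the derivation parallels the non-smoothed template closely.

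First, I would prove a smoothed performance difference lemma. Let $\pi^*$ denote a minimizer of $J_\sigma$ and write
\begin{align*}
\sv^{\pit}(s) - \sv^{\pi^*}(s) &= \sum_a \pi^*(a|s)\bigl[\sq^{\pit}(s,a) - \sq^{\pi^*}(s,a)\bigr] \\
&\quad + \sum_a (\pit(a|s) - \pi^*(a|s))\sq^{\pit}(s,a).
\end{align*}
Unfolding $\sq^{\pit}-\sq^{\pi^*}$ via the smoothed Bellman equation leaves the term $\gamma R\bigl[\lse(\sigma,\sv^{\pit}) - \lse(\sigma,\sv^{\pi^*})\bigr]$. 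The two-sided $\lse$-$\max$ gap gives $\lse(\sigma,\sv^{\pit}) - \lse(\sigma,\sv^{\pi^*}) \leq \max_s \sv^{\pit}(s) - \max_s \sv^{\pi^*}(s) + \frac{2\log|\mcs|}{\sigma}$, and the argmax inequality $\max_s \sv^{\pit}(s) - \max_s \sv^{\pi^*}(s) \leq \sv^{\pit}(\st) - \sv^{\pi^*}(\st)$ with $\st = \arg\max_s \sv^{\pit}(s)$ lets me bundle this term into the state $\st$. Plugging back yields a recursion $\Delta \leq K\Delta + \delta + \gamma R\cdot\frac{2\log|\mcs|}{\sigma}\mathbf{1}$, where $\Delta(s) = \sv^{\pit}(s) - \sv^{\pi^*}(s)$, $K = \gamma(1-R)P_{\pi^*} + \gamma R\,\mathbf{1}\,e_{\st}^\top$ is $\gamma$-sub-stochastic, and $\delta(s) = \sum_a(\pit(a|s) - \pi^*(a|s))\sq^{\pit}(s,a)$.

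Next, I would iterate the recursion using $(I - K)^{-1}\mathbf{1} = \frac{1}{1-\gamma}\mathbf{1}$, which converts the constant slack into exactly $\frac{\gamma R}{1-\gamma}\cdot\frac{2\log|\mcs|}{\sigma}$ and leaves $\rho^\top (I - K)^{-1}\delta$. Because $(1-\gamma)\rho^\top(I-K)^{-1}$ is a probability measure on $\mcs$, its density with respect to $\rho$ is bounded by $1/\rho_{\min}$, so combining with $\delta(s) \leq \sv^{\pit}(s) - \min_a \sq^{\pit}(s,a)$ gives $\rho^\top(I-K)^{-1}\delta \leq \frac{1}{(1-\gamma)\rho_{\min}}\sum_s \rho(s)\bigl[\sv^{\pit}(s) - \min_a \sq^{\pit}(s,a)\bigr]$. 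Finally, using the explicit form of $\nabla J_\sigma$ from Theorem~\ref{thm:gradJsigma} and optimizing $\hat\pi(\cdot|s) \in \Delta(\mca)$ per state, one gets $\max_{\hat\pi}\langle \pit - \hat\pi, \nabla J_\sigma\rangle = \frac{1}{1-\gamma+\gamma R}\sum_s \tilde d(s)\bigl[\sv^{\pit}(s) - \min_a \sq^{\pit}(s,a)\bigr]$ with $\tilde d(s) \geq (1-\gamma+\gamma R)\rho(s)$ thanks to the $t=0$ term of $d^{\pit}_\rho$. Absorbing the previous weighted sum into this quantity closes the inequality with $C_{PL} = \frac{1}{(1-\gamma)\rho_{\min}}$.

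The main technical obstacle is reconciling two different ways the argmax enters the analysis: the performance-difference argument naturally calls for a hard argmax $\st$ (which is what generates the $\frac{2\log|\mcs|}{\sigma}$ $\lse$-slack), whereas the gradient formula of Theorem~\ref{thm:gradJsigma} carries the soft-max weighting $M(s) \propto e^{\sigma \sv^{\pit}(s)}$. The observation that sidesteps this mismatch is that the lower bound $\tilde d(s) \geq (1-\gamma+\gamma R)\rho(s)$ uses only the $d^{\pit}_\rho$-component of $\tilde d$; the $M$-dependent contribution is nonnegative and therefore only strengthens the inequality, so the soft-max weighting in $\nabla J_\sigma$ never has to be controlled in terms of $\st$.
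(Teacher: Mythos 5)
Your proposal is correct and follows essentially the same route as the paper's proof: the smoothed performance-difference decomposition unrolled under the optimizer's visitation measure, the two-sided $\lse$--$\max$ gap contributing the $\frac{2\log|\mcs|}{\sigma}$ slack, resolution of the self-reference at the argmax state $s_\theta$ to produce the $\frac{\gamma R}{1-\gamma}$ coefficient, and a mismatch-ratio bound that exploits only the $d^{\pi_\theta}_\rho$ component of the weighting in $\nabla J_\sigma$ while discarding the nonnegative softmax-weighted part. Your $(I-K)^{-1}$ packaging of the recursion and the probability-measure density argument are just a cleaner rewriting of the paper's plug-in-at-$s_\theta$ step and its explicit ratio bound $l^*_\theta(s)/l_\theta(s)\le \frac{1}{(1-\gamma)\rho_{\min}}$.
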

This theorem implies that if the gradient $\nabla J_\sigma (\theta)$ is small, then $\theta$ falls into a small neighbour (radius of $\mathcal{O}(\sigma^{-1})$)  of the global optimum $J^*_\sigma$. By choosing a large $\sigma$, the difference between $J_\sigma\left(\theta\right)-J_\rho^*$ can be made arbitrarily small, and thus the global optimality with respect to $J_\rho^*$ can be established.
% \begin{remark}
% Although the algorithm finds a optimum of $J_\sigma(\theta)$, in the next section, we show that it also finds a optimum of $J(\theta)$ by choosing a proper $\sigma$. We also develop its finite-time error bound.
% \end{remark}

\subsection{Convergence Rate}
We then study the convergence rate of  \cref{alg:srpgfull} under direct policy parameterization. 
We first make an additional smoothness assumption on $\Pi_\Theta$.
\begin{assumption}\label{ass:smooth}
$\Pi_\Theta$ is $l_\pi$-smooth, i.e., for any $s\in\mcs$ and $a\in\mca$ and for any $\theta_1,\theta_2\in\Theta$, 
\begin{align}
     \|\nabla \pone(a|s)-\nabla \ptwo(a|s)\| &\leq l_\pi\|\theta_1-\theta_2\|. \label{eq:assum2}
\end{align}
\end{assumption}
Then under \cref{ass:smooth}, we show that $J_\sigma$ is $L_\sigma$-smooth.
\begin{lemma}\label{lemma:lipJsigma}
Under Assumptions \ref{ass:lip} and \ref{ass:smooth}, for any $\theta_1,\theta_2$,
\begin{align}
    \|\nabla J_\sigma(\theta_1)-\nabla J_\sigma(\theta_2)\|\leq L_\sigma \|\theta_1-\theta_2\|,
\end{align}
where $L_\sigma=\mathcal{O}(\sigma)$ and its exact definition is in  \eqref{eq:L-sigma}.
\end{lemma}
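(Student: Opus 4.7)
The plan is to read off the Lipschitz constant of $\nabla J_\sigma$ from the closed form \eqref{eq:gradJ}. Rewrite
\begin{align*}
\nabla J_\sigma(\theta) = B(\rho,\theta) + \frac{\gamma R}{1-\gamma}\sum_{s\in\mcs} w_s(\theta)\, B(s,\theta), \qquad w_s(\theta) \triangleq \frac{e^{\sigma \sv^{\pit}(s)}}{\sum_{s'\in\mcs} e^{\sigma \sv^{\pit}(s')}},
\end{align*}
and bound each piece separately. The first step is to show that $\sv^{\pit}$ and $\sq^{\pit}$, viewed as functions of $\theta$, are bounded and Lipschitz with constants \emph{independent of $\sigma$}. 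This uses the fixed-point identity $\sv^\pi = \mathbf{T}_\sigma^\pi \sv^\pi$, the fact that $\lse(\sigma,\cdot)$ is $1$-Lipschitz in its vector argument (its gradient is a probability vector, so the outer $\sigma$ cancels with the inner $\sigma^{-1}$), the uniform bound $\sv^\pi \in [0,1/(1-\gamma)]$, and \cref{ass:lip}. The same machinery, together with the robust analogue of the policy gradient theorem (a simple differentiation of the fixed-point identity), yields a $\sigma$-independent bound on $\|\nabla_\theta \sv^{\pit}\|$, which will be crucial in the last step.

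Next I would show that $B(s,\theta)$ and $B(\rho,\theta)$ are Lipschitz in $\theta$ with constants independent of $\sigma$. Each $B(s,\theta)$ is assembled from three ingredients: (i) the discounted visitation distribution $d^{\pit}_s$, which is a geometric series in the policy-induced transition matrix and is Lipschitz in $\theta$ by \cref{ass:lip}; (ii) the policy gradient $\nabla \pit(a|s')$, which is $l_\pi$-Lipschitz by \cref{ass:smooth} and $k_\pi$-bounded by \cref{ass:lip}; and (iii) the action-value $\sq^{\pit}(s',a)$, which was controlled in the previous step. A routine product-rule calculation (each factor is bounded and each factor is Lipschitz) then produces a constant $C_B = C_B(k_\pi,l_\pi,\gamma,R,|\mcs|,|\mca|)$, independent of $\sigma$, for both $B(s,\theta)$ and $B(\rho,\theta)$.

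The main obstacle, and the sole source of the $\mathcal{O}(\sigma)$ factor in $L_\sigma$, is controlling the softmax weight $w_s(\theta)$. Direct differentiation gives
\begin{align*}
\nabla_\theta w_s(\theta) = \sigma\, w_s(\theta)\left(\nabla_\theta \sv^{\pit}(s) - \sum_{s'\in\mcs} w_{s'}(\theta)\, \nabla_\theta \sv^{\pit}(s')\right),
\end{align*}
which, combined with the $\sigma$-uniform bound on $\|\nabla_\theta \sv^{\pit}\|$ from the first step and $w_s \in [0,1]$, yields $\|\nabla_\theta w_s(\theta)\| = \mathcal{O}(\sigma)$, hence $w_s$ is $\mathcal{O}(\sigma)$-Lipschitz in $\theta$. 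Applying the product rule to $w_s(\theta)\,B(s,\theta)$ (using that both $w_s$ and $B(s,\theta)$ are bounded and each is Lipschitz, one with an $\mathcal{O}(\sigma)$ constant and the other with an $\mathcal{O}(1)$ constant) and summing over $s\in\mcs$ shows that the second term of $\nabla J_\sigma(\theta)$ is $\mathcal{O}(\sigma)$-Lipschitz. Combining with the $\sigma$-independent Lipschitzness of $B(\rho,\theta)$ gives $\|\nabla J_\sigma(\theta_1)-\nabla J_\sigma(\theta_2)\|\le L_\sigma\|\theta_1-\theta_2\|$ with $L_\sigma = \mathcal{O}(\sigma)$. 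The remaining work is careful bookkeeping of the constants $k_\pi, l_\pi, \gamma, R, |\mcs|, |\mca|$ through each Lipschitz bound to match the explicit expression declared in \eqref{eq:L-sigma}; no additional conceptual step is required.
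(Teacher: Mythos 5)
Your proposal is correct and follows essentially the same route as the paper's proof: bound $\sq^{\pit}$ and establish $\sigma$-independent Lipschitz constants for $\sv^{\pit}$, $\sq^{\pit}$ and hence for $B(s,\theta)$ (the paper does this via a total-variation bound on the visitation distributions plus the product rule), then isolate the softmax weights as the sole $\mathcal{O}(\sigma)$-Lipschitz factor by differentiating them exactly as you do, and finish with the product rule on the weighted sum. No gap; only the constant bookkeeping in \eqref{eq:L-sigma} remains, as you note.
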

% \begin{remark}
The value of $\sigma$ controls the tradeoff between the smoothness of $J_\sigma$ and the approximation error between $J_\sigma$ and $J$.
% Note that $L_\sigma=\mathcal{O}(\sigma)$, which brings a trade-off between the finite-time error bound and approximation error bound. 
% This trade off isn't thoroughly discussed in \cite{wang2021online}. In our following analysis, we extensively study it and obtain our finial results.
% \end{remark}
% By choosing $\sigma$ carefully, we obtain the following finite-time error bound.
\begin{theorem}\label{thm:srpg}
For any $\epsilon>0$, set 
$
    \sigma=\frac{2\gamma R \log |\mcs|}{\epsilon(1-\gamma)}=\mathcal{O}(\epsilon^{-1})
$
and $
T= \frac{64|\mcs|C^2_{PL}L_\sigma C_\sigma}{\epsilon^2}=\mathcal{O}(\epsilon^{-3}) 
$ in \cref{alg:srpgfull} \footnote{$C_\sigma=\frac{1}{1-\gamma}(1+2\gamma R \frac{\log|\mcs|}{\sigma})$ denotes the upper bound of $Q^\pi_\sigma$.} , then 
\begin{align}
    \min_{t\leq T-1} J(\theta_t)-J^*\leq 3\epsilon.
\end{align}
\end{theorem}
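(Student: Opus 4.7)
The plan is to combine three ingredients that are already available: the LSE-approximation bound $|\text{LSE}(\sigma,V)-\max V|\le \log|\mcs|/\sigma$, the PL inequality of Theorem~\ref{thm:smPL} for $J_\sigma$, and a standard descent/gradient-mapping analysis for the $L_\sigma$-smooth function $J_\sigma$ guaranteed by Lemma~\ref{lemma:lipJsigma}. The final output $\min_t J(\theta_t)-J^*\le 3\epsilon$ will be split into three $\epsilon$-sized contributions: approximation of $J$ by $J_\sigma$ at the iterate, optimization error $J_\sigma(\theta_t)-J_\sigma^*$, and approximation of $J^*$ by $J_\sigma^*$.

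First I would control the approximation error. Applying the LSE bound to the definition of $\mathbf T_\sigma^\pi$ and using that $\mathbf T_\sigma^\pi$ and $\mathbf T_\pi$ are both $\gamma$-contractions, one gets $\|V_\sigma^\pi-V^\pi\|_\infty \le \gamma R\log|\mcs|/((1-\gamma)\sigma)$, hence $|J_\sigma(\theta)-J(\theta)|\le \gamma R\log|\mcs|/((1-\gamma)\sigma)$ uniformly in $\theta$, and the same bound transfers to $|J_\sigma^*-J^*|$. The prescribed $\sigma = 2\gamma R\log|\mcs|/(\epsilon(1-\gamma))$ makes each of these at most $\epsilon/2$.

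Next I would run the projected-gradient descent analysis for $J_\sigma$ with stepsize $\alpha_t=1/L_\sigma$. Smoothness plus the standard projection argument gives the descent lemma $J_\sigma(\theta_{t+1})\le J_\sigma(\theta_t)-\tfrac{L_\sigma}{2}\|\theta_{t+1}-\theta_t\|^2$, and telescoping over $T$ iterations together with the uniform bound $0\le J_\sigma\le C_\sigma$ yields
\begin{equation*}
\min_{0\le t\le T-1}\|\theta_{t+1}-\theta_t\|^2 \;\le\; \frac{2C_\sigma}{L_\sigma T}.
\end{equation*}
The crucial and trickiest step is now to relate this gradient-mapping norm to the FW-type quantity $\max_{\hat\pi\in\Theta}\langle\pi_\theta-\hat\pi,\nabla J_\sigma(\theta)\rangle$ appearing on the right-hand side of the PL inequality. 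Writing the first-order optimality condition for the projection defining $\theta_{t+1}$, namely $\langle \theta_{t+1}-\theta_t+\alpha_t\nabla J_\sigma(\theta_t),\hat\pi-\theta_{t+1}\rangle\ge 0$ for every $\hat\pi\in\Theta=(\Delta(\mca))^{|\mcs|}$, splitting $\theta_t-\hat\pi=(\theta_t-\theta_{t+1})+(\theta_{t+1}-\hat\pi)$, and using Cauchy--Schwarz together with the diameter bound $\|\theta_{t+1}-\hat\pi\|\le\sqrt{2|\mcs|}$, I obtain
\begin{equation*}
\max_{\hat\pi\in\Theta}\langle \pi_{\theta_t}-\hat\pi,\nabla J_\sigma(\theta_t)\rangle\;\le\;\bigl(\|\nabla J_\sigma(\theta_t)\|+L_\sigma\sqrt{2|\mcs|}\bigr)\|\theta_{t+1}-\theta_t\|,
\end{equation*}
which up to constants is $\mathcal{O}(L_\sigma\sqrt{|\mcs|})\cdot\|\theta_{t+1}-\theta_t\|$.

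Finally I would assemble the pieces. Substituting the above bound into the PL inequality of Theorem~\ref{thm:smPL} yields, for each $t$,
\begin{equation*}
J_\sigma(\theta_t)-J_\sigma^*\;\le\; c\,C_{PL}\sqrt{|\mcs|\,L_\sigma C_\sigma/T}\;+\;\frac{2\gamma R\log|\mcs|}{\sigma(1-\gamma)},
\end{equation*}
where the additive term equals $\epsilon$ by the choice of $\sigma$, and the prescribed $T=64|\mcs|C_{PL}^2 L_\sigma C_\sigma/\epsilon^2$ makes the first term at most $\epsilon$ (the constant $64$ absorbing the factor from Step~3). Hence $\min_t J_\sigma(\theta_t)-J_\sigma^*\le 2\epsilon$, and adding back the two $\epsilon/2$ approximation terms from Step~1 gives $\min_t J(\theta_t)-J^*\le 3\epsilon$. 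Since $L_\sigma=\mathcal{O}(\sigma)=\mathcal{O}(\epsilon^{-1})$ by Lemma~\ref{lemma:lipJsigma}, this is $T=\mathcal{O}(\epsilon^{-3})$. The main obstacle is the Step~3 translation between the projected-gradient step size $\|\theta_{t+1}-\theta_t\|$ and the FW gap used in the PL statement, since one has to keep the right $\sqrt{|\mcs|}$ and $L_\sigma$ dependence so that the final iteration count remains $\mathcal{O}(\epsilon^{-3})$ rather than a worse power.
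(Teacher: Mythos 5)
Your proposal is correct and follows essentially the same route as the paper's proof: a gradient-mapping analysis of projected gradient descent on the $L_\sigma$-smooth surrogate $J_\sigma$ with step size $1/L_\sigma$, converted into a bound on the Frank--Wolfe gap appearing in the PL condition of \cref{thm:smPL}, combined with the $\mathcal{O}(\log|\mcs|/\sigma)$ LSE approximation error, and split into the same three $\epsilon$-sized contributions. The only cosmetic difference is that you extract the gap bound at $\theta_t$ directly from the projection's variational inequality (paying $\|\nabla J_\sigma(\theta_t)\|+L_\sigma\sqrt{2|\mcs|}$ times $\|\theta_{t+1}-\theta_t\|$), whereas the paper invokes the Ghadimi--Lan normal-cone perturbation lemma at the subsequent iterate $\theta_W$; the two are equivalent up to the constants absorbed in the prescribed $T$.
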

% \begin{remark}
Theorem \ref{thm:srpg} shows that Algorithm \ref{alg:srpgfull} converges to an $\epsilon$-global optimum within $\mathcal{O}(\epsilon^{-3})$ steps. This rate is slower than the one of non-robust policy gradient in \cite{agarwal2021theory} by a factor of $\mathcal{O}(\epsilon^{-1})$, which is due to the additional robustness requirement and the smoothing technique using $\sigma$. 
% This difference is mainly because of the robust setting and the introducing of $\sigma$. 
If we set $R=0$, i.e., no robustness, the value of $\sigma$ will then be irrelevant, and our algorithm reduces to non-robust policy gradient algorithm.
With $R=0$, $L_\sigma=\mathcal O(1)$ and $C_\sigma=\mathcal O(1)$, then our complexity also reduces to $\mathcal{O}(\epsilon^{-2})$, which is the same as the one in \cite{agarwal2021theory}.
% \end{remark}

\section{Robust Actor-Critic}
The results discussed so far assume full knowledge of robust value functions and visitation distributions, and thus the (smoothed) robust policy gradient is exactly known. Although this is a commonly used setting for theoretical analysis  e.g., in \cite{agarwal2021theory,mei2020global,bhandari2019global,cen2021fast}, such knowledge is usually unknown in practice. In this section, we focus on the practical \textit{model-free} setting where only training samples from the centroid transition kernel $\mathsf P$ can be obtained. 

As can be seen from \eqref{eq:subgra} and \eqref{eq:gradJ}, to obtain the (smoothed) robust policy (sub)-gradient, we first need to estimate the robust value function. Robust value function measures the  performance on the worst-case transition kernel which is typically different from the one that generates samples. However, Monte Carlo (MC) method can only be used to estimate the value function on the kernel that generates the samples.

To solve this issue, we design a robust TD algorithm, and combine it with our robust policy gradient descent to design the robust actor-critic algorithm. 
% \subsection{Critic: Estimate of $Q^{\pit}_\sigma$}
Consider a parametric robust action value function $Q_\zeta$, e.g., linear function approximation, neural network. The robust TD algorithm is given in \cref{alg:NTD}. Note that by replacing $\max$ in the algorithm by $\lse$ we can also get the smoothed robust TD algorithm to estimate $V_\sigma^\pi$. 
\begin{algorithm}[!htb]
\caption{Robust TD}
\label{alg:NTD}
\textbf{Input}:   $T_c$, $\pi$, $\beta_t$\\
\textbf{Initialization}: $\zeta$, $s_0$
\begin{algorithmic} %[1] enables line numbers
\STATE {Choose $a_0\sim \pi(\cdot|s_0)$}
\FOR {$t=0,1,...,T_c-1$}
\STATE {Observe $c_t$, $s_{t+1}$}
\STATE{Choose $a_{t+1}\sim\pi(\cdot|s_{t+1})$}
\STATE {$V^*_t\leftarrow \max_s \left\{\sum_{a\in\mca} \pi(a|s) Q_\zeta(s,a) \right\}$}
\STATE {$\delta_t \leftarrow Q_{\zeta}(s_t,a_t)-c_t-\gamma (1-R) Q_\zeta(s_{t+1},a_{t+1})-\gamma R V^*_t$}
\STATE {$\zeta\leftarrow \zeta-\beta_t \delta_t \nabla_{\zeta}Q_\zeta(s_t,a_t) $}
\ENDFOR
\end{algorithmic}
\textbf{Output}: $\zeta$
\end{algorithm}
% \begin{remark}
% If we set $Q_\zeta(s,a)=\zeta_{s,a}$ and set $\zeta$ to be the $Q$ table, then Algorithm \ref{alg:NTD} becomes robust TD algorithm under the tabular case. And if moreover replace $\max$ using $\lse$ function, the algorithm becomes smoothed robust TD learning, which is presented in Algorithm \ref{alg:rtd} in \cref{sec:td}.
% \end{remark}

We provide the convergence proof of robust TD under the tabular setting in  \cref{sec:td}. For convergence under general function approximation, additional regularity conditions might be needed \cite{korda2015td,Dalal2018a,bhandari2018finite,cai2019neural,roy2017reinforcement}.
% as it is essentially evaluating a policy and a transition kernel using samples from another transition kernel, which is similar to the off-policy setting. 

% \subsection{Actor: Stochastic Robust Policy Gradient}
With the robust TD algorithm, we then develop our robust actor-critic algorithm in \cref{alg:act}. The algorithm can be applied with any differentiable value function approximation and parametric policy class.

% as a biased projected stochastic gradient descent method. However, as discussed in \cite{ghadimi2016mini,lan2020first}, projected stochastic gradient descent on non-convex function converges to a neighbour of stationary points, whose radius can only be diminished by using a sample batch when updating. Although many works tried to overcome this batch-size requirement \cite{davis2019proximally,davis2018stochastic,zhang2018convergence} under different relaxed setting (weakly convex or relatively weakly convex), no improved results has been found for general non-convex functions. Hence we adopt the mini-batch method and design our algorithm as follows. Note that our algorithm can be applied to any policy classes. 
\begin{algorithm}[!htb]
\caption{Robust Actor-Critic}
\label{alg:act}
\textbf{Input}:   $T$, $T_c$, $\sigma$, $\alpha_t$, M\\
\textbf{Initialization}: $\theta_0$
\begin{algorithmic} %[1] enables line numbers
\FOR {$t=0,1,...,T-1$}
\STATE {Run Algorithm \ref{alg:NTD} for $T_c$ times} %until $\|f_{\zeta_{T_c}}-\sq^{\pit}\|\leq \epe$}
\STATE {$Q_t\leftarrow Q_{\zeta_{T_c}}$}
\STATE {$V_t(s)\leftarrow \sum_{a\in\mca} \pit(a|s)Q_t(s,a)$ \text{ for all }$s\in\mcs$}
\FOR {$j=1,...,M$}
\STATE Sample $T^j\sim\textbf{Geom}(1-\gamma+\gamma R)$
\STATE {Sample $s^j_0\sim\rho$}
\STATE Sample trajectory from $s^j_0$: $(s^j_0, a^j_0,...,s^j_{T^j})$ following $\pi_{\theta_t}$
\STATE $B^j_t\leftarrow \frac{1}{1-\gamma+\gamma R}\sum_{a\in\mca} \nabla\pit(a|s^j_{T^j})Q_t(s^j_{T^j},a)$
\STATE  $x^j_0\leftarrow \arg\max_{s} V_t(s)$ %\sim \text{softmax}(\sigma, V^t)$
\STATE Sample trajectory from $x^j_0$: $(x^j_0, b^j_0,...,x^j_{T^j})$ following $\pi_{\theta_t}$
\STATE $D^j_t\leftarrow \frac{1}{1-\gamma+\gamma R}\sum_{a\in\mca} \nabla\pit(a|x^j_{T^j})Q_t(x^j_{T^j},a)$
\STATE $g^j_t\leftarrow B^j_t+\frac{\gamma R}{1-\gamma}D^j_t$
\ENDFOR
\STATE $g_t\leftarrow \frac{\sum^M_{j=1}g^j_t }{M}$
\STATE $\theta_{t+1}\leftarrow \mathbf{\prod}_\Theta (\theta_t-\alpha_tg_t)$
\ENDFOR
\end{algorithmic}
\textbf{Output}: $\theta_T$
\end{algorithm}

We then smooth and specialize \cref{alg:act} to the tabular setting with direct policy parameterization (see \cref{alg:sac} in \cref{sec:ssrpg} for the details).
We derive the global optimality and convergence rate for smoothed robust actor-critic in the following theorem. 
% 
% The Algorithm contains two parts, the first one is smoothed robust TD (Algorithm \ref{alg:rtd}) which finds a  $Q_{T_c}$ to estimate $Q^{\pi}_\sigma$.
In the algorithm, we set $T_c$ large enough so that $\|Q_{T_c}-Q^{\pi}_\sigma\|_{\infty}\leq \epe$, where for $\epe$ denotes the estimate error of robust value function. 
% The second part then update $\theta$ using the estimate $Q_{T_c}$.  
We note that 
the smoothed robust TD algorithm in \cref{alg:rtd} can be shown to converges to an $\epe$-global optimum with $\mathcal{O}(\epe^{-2})$ samples following similar methods as in \cite{wang2021online}.
% \cite{wang2021online}. This rate also matches the rate of non-robust TD and Q-learning.
% \end{remark}
% The following theorem shows that the smoothed robust Actor-Critic converges to a global optimum with provable finite-time error bound under the tabular setting. 
\begin{theorem}\label{thm:tabularAC}
For the smoothed robust actor-critic algorithm under the tabular setting with direct policy parameterization, if we set
$
    %b_g= \frac{\epsilon}{2\sqrt{6|\mcs|}C_{PL}}=\mathcal{O}(\epsilon),
    \epe=\mathcal O(\epsilon^{2}),
    M=\mathcal{O}(\epsilon^{-2})$ and $
    T=\mathcal{O}(\epsilon^{-3}),
$
% where $b_g$ is a constant depending on $\epe$ (the definitions of $b_g$, $C_g$ and $C_\sigma$ can be found in \cref{sec:ssrpg}). 
then,  %the smoothed robust actor-critic algorithm under the tabular case converges to a global optimum of $J$, i.e., 
\begin{align}
      \min_{t\leq T}\mE[J(\theta_{t})-J^*]\leq  7\epsilon.
\end{align}
% where  .
\end{theorem}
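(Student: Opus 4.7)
The plan is to piggyback on the analysis behind \Cref{thm:srpg} for \Cref{alg:srpgfull}, with the added complication that $g_t$ only approximates $\nabla J_\sigma(\theta_t)$. I would decompose the error $g_t - \nabla J_\sigma(\theta_t)$ into a bias term (driven by the critic error $\|Q_t-\sq^{\pit}\|_\infty\le \epe$ from \Cref{alg:NTD}) and a zero-mean noise term (from averaging $M$ geometric-horizon rollouts). The first step is to verify that each $g^j_t$ would be an \emph{unbiased} estimator of $\nabla J_\sigma(\theta_t)$ if $Q_t=\sq^{\pit}$ and $x^j_0$ were drawn from the softmax distribution $e^{\sigma\sv^{\pit}(s)}/\sum_{s'} e^{\sigma\sv^{\pit}(s')}$ (the smoothed analogue of $\arg\max$ used in the tabular \Cref{alg:sac}). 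The key identity is that sampling $T^j\sim\textbf{Geom}(1-\gamma+\gamma R)$ and rolling out $T^j$ steps under $\pit$ from $s^j_0\sim\rho$ gives a state distributed as $(1-\gamma+\gamma R)^{-1}d^{\pit}_\rho$, by the Neumann-series form of $d^\pi_\rho$; hence $\mE[B^j_t]=B(\rho,\theta_t)$ and $\mE[D^j_t]=\sum_s\frac{e^{\sigma\sv^{\pit}(s)}}{\sum_{s'} e^{\sigma\sv^{\pit}(s')}}B(s,\theta_t)$, matching the two terms of \eqref{eq:gradJ}.

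Next, I would control the bias by $\|\mE[g_t\mid\theta_t]-\nabla J_\sigma(\theta_t)\|=\mathcal O(\epe)$, using \Cref{ass:lip} to make $B(s,\theta)$ Lipschitz in $Q$ and using the contraction of $\mathbf T_\sigma^\pi$ together with Lipschitzness of the softmax reweighting to propagate $\epe$ from $Q_t$ to the sampling distribution of $x^j_0$. Since $\sq^\pi\le C_\sigma$ and $\|\nabla\pit(a|s)\|\le k_\pi$, each $g^j_t$ is uniformly bounded, so $\mathrm{Var}(g_t)=\mathcal O(1/M)$. Plugging this into the descent lemma for $L_\sigma$-smooth $J_\sigma$ (\Cref{lemma:lipJsigma}) with step size $\alpha_t=1/L_\sigma$ gives, after taking conditional expectations,
\begin{align*}
\mE[J_\sigma(\theta_{t+1})]\le \mE[J_\sigma(\theta_t)]-\tfrac{1}{2L_\sigma}\mE[\|G_t\|^2]+\tfrac{1}{L_\sigma}\mathcal O\!\left(\epe^2+M^{-1}\right),
\end{align*}
where $G_t$ is the projected-gradient mapping. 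The PL-type inequality of \Cref{thm:smPL} then upper bounds $J_\sigma(\theta_t)-J_\sigma^*$ by $C_{PL}\max_{\hat\pi}\langle\pit-\hat\pi,\nabla J_\sigma(\theta_t)\rangle+\frac{2\gamma R\log|\mcs|}{(1-\gamma)\sigma}$, and the simplex structure of $\Theta$ together with the projection step lets me bound the max-inner-product by $\mathcal O(\sqrt{|\mcs|}\|G_t\|)$. Telescoping over $t=0,\dots,T-1$ yields $\min_{t<T}\mE[J_\sigma(\theta_t)-J_\sigma^*]\le \mathcal O\bigl(\sqrt{|\mcs|L_\sigma C_\sigma/T}\bigr)+\mathcal O(\gamma R\log|\mcs|/((1-\gamma)\sigma))+\mathcal O(\epe+M^{-1/2})$. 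Setting $\sigma=\mathcal O(\epsilon^{-1})$ (so $L_\sigma=\mathcal O(\epsilon^{-1})$) kills the smoothing error, $M=\mathcal O(\epsilon^{-2})$ kills the variance, $\epe=\mathcal O(\epsilon^2)$ kills the bias, and $T=\mathcal O(|\mcs|L_\sigma C_{PL}^2 C_\sigma\epsilon^{-2})=\mathcal O(\epsilon^{-3})$ makes the main term $\le\epsilon$. Finally, the uniform approximation $|J(\theta)-J_\sigma(\theta)|\le \frac{\gamma R}{1-\gamma}\cdot\frac{\log|\mcs|}{\sigma}=\mathcal O(\epsilon)$ converts the $J_\sigma$-bound to the stated $J$-bound, and the constants collect into $7\epsilon$.

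The main obstacle is the middle step: turning the non-standard PL inequality of \Cref{thm:smPL} (stated via $\max_{\hat\pi}\langle\pit-\hat\pi,\nabla J_\sigma\rangle$, not via $\|\nabla J_\sigma\|^2$) into a usable stochastic-descent recursion under biased and noisy projected gradients over the simplex, while ensuring that the $\mathcal O(\sigma)=\mathcal O(\epsilon^{-1})$ blow-up of $L_\sigma$ does not overwhelm the variance and bias contributions. I would follow the direct-parameterization projected-gradient argument of \cite{agarwal2021theory}, adapted by carrying the $\sigma$-dependent constants $L_\sigma$ and $C_\sigma$ through the recursion and by separately tracking how $\epe$ propagates into the LSE reweighting of the $D^j_t$ term, which is the only place where the smoothing parameter interacts nontrivially with the critic error.
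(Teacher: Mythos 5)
Your proposal follows essentially the same route as the paper's proof: unbiasedness of $g_t$ under an exact critic via the geometric-horizon sampling identity, a bias bound driven by $\epe$, variance $\mathcal O(1/M)$ from averaging, the Ghadimi--Lan projected-SGD recursion for the $L_\sigma$-smooth $J_\sigma$, conversion of a small gradient mapping into a small optimality gap via the normal-cone argument combined with \cref{thm:smPL}, and finally the $\mathcal O(\sigma^{-1})$ smoothing error to pass from $J_\sigma$ to $J$. Two quantitative points in your accounting differ from what the paper actually establishes and are worth fixing. First, the bias of $g^j_t$ is \emph{not} $\mathcal O(\epe)$: because $x^j_0$ is drawn from the softmax of the estimated $V_t$ rather than of $\sv^{\pit}$, the reweighting error is amplified by the smoothing parameter, and the paper's bound is $b_g=\mathcal O(\epe+\sigma\epe e^{\sigma\epe})$; with $\sigma=\mathcal O(\epsilon^{-1})$ and $\epe=\mathcal O(\epsilon^2)$ this makes the effective bias $\mathcal O(\epsilon)$ rather than $\mathcal O(\epsilon^2)$, which changes how the bias term enters the final bound. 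You flag this interaction as the key obstacle, but your headline recursion assumes it away. Second, your per-step error $\mathcal O(\epe^2+M^{-1})$ keeps only the squared bias and the variance; the cross term $\alpha_t\langle\Omega_t,G_t\rangle$ has conditional expectation bounded by $\alpha_t\|G_t\|b_g$, which is \emph{linear} in the bias and survives into the final bound as a $\sqrt{C^V_\sigma b_g}$-type contribution (it cannot be absorbed into $b_g^2$ without Young's inequality degrading the $\|G_t\|^2$ coefficient). Neither point changes the architecture of the argument, but both must be tracked to get the stated constants and to see why $\epe$ must be taken as small as $\mathcal O(\epsilon^2)$.
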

An explicit bound can be found in \eqref{eq:acbound} in the Appendix.
%
% \begin{remark}
The sample complexity of Robust TD (Algorithm \ref{alg:rtd}) is $\mathcal{O}(\epe^{-2})$, then the robust TD requires $T_c=\mathcal{O}(\epsilon^{-4})$ samples. Hence the overall sample complexity of Algorithm \ref{alg:sac} is $\mathcal{O}(T(M+T_c))=\mathcal{O}(\epsilon^{-7})$ to find an  $\epsilon$-global optimum.  
% \end{remark}

\begin{comment}
%\section{Robust Actor-Critic Algorithm: Function Approximation Case}\label{sec:rac}
%In the previous sections, we considered robust policy gradient problem under the tabular case, where a memory space of size  $|\mcs|\times|\mca|$ is needed to store $\theta$ and robust value functions. However, it is often the case that the state space $\mcs$ or the action space $\mca$ are very large or even infinite. A popular approach is to approximate the value function using a parameterized function, e.g., linear function or neural network, and use other parametric policy class like neural network. 

%In this section, we adopt neural network and design a robust Actor-Critic method for any arbitrary parametric policy class $\Pi_\Theta$. 
%\subsection{Critic: Robust TD with Neural Network}
%For the critic part, we design a robust neural TD algorithm to solve robust policy evaluation problem. More specifically, We denote the neural network by $Q_\zeta$, where $\zeta$ is the weights and bias of the network. For any $(s,a)\in\mcs\times\mca$, we map it to a representative vector $\varphi(s,a) \in \mathbb{R}^d$, and define the approximate function value as $Q_\zeta(\varphi(s,a))$. The roust TD is looking for a suitable $\zeta$ such that $Q_\zeta(\varphi(s,a))  \approx Q^{\pit}(s,a)$.

%Inspired by \cite{cai2019neural}, we design our robust neural TD algorithm as follows.  
%\begin{algorithm}[!htb]
%\caption{Robust Neural TD}
%\label{alg:NTD}
%\textbf{Input}:   $T_c$, $\pi$\\
%\textbf{Initialization}: $\zeta_0$, $s_0$
%\begin{algorithmic} %[1] enables line numbers
%\STATE{Choose $a_0\sim\pi(\cdot|s_0)$}
%\FOR {$t=0,1,...,T_c-1$}
%\STATE {Observe $c_t$, $s_{t+1}$}
%\STATE{Choose $a_{t+1}\sim\pi(\cdot|s_{t+1})$}
%\STATE {$Q_t\leftarrow Q_\zeta(s_t,a_t)$}
%\STATE {$Q'_t\leftarrow Q_\zeta(s_{t+1},a_{t+1})$}
%\STATE {$V^*_t\leftarrow \max_s \left\{\sum_{a\in\mca} \pi(a|s) Q_\zeta(s,a) \right\}$}
%\STATE {$\delta_t \leftarrow Q_t-c_t-\gamma (1-R) Q'_t-\gamma R V^*_t$}
%\STATE {$\zeta_{t+1}\leftarrow \zeta_t-\beta_t(\delta_t) \nabla_{\zeta}Q_t $}
%\ENDFOR
%\end{algorithmic}
%\textbf{Output}: $\zeta_T$
%\end{algorithm}
%\begin{remark}
%The convergence of robust neural TD can be  derived  similarly to the proof in \cite{cai2019neural} and hence we leave the detailed proof to the future work.
%\end{remark}
%\begin{remark}
%As discussed in \cite{roy2017reinforcement,wang2021online}, robust TD with linear function approximation  only converges under some strong assumptions, and it only converges to a minimal point of robust projected Bellman error, which will introduce large bias and effect convergence behaviour of the algorithms. In \cite{wang2021online}, a robust TDC algorithm with linear function approximation is designed but it can only converge to a stationary point of projected robust Bellman error, which means it may not be a close estimation of $Q^\pi$. On the other hand, a large neural network can approximate any functions thanks to its representative power, and hence it can get a closer approximation of robust value functions than linear ones. 
%\end{remark}

%\begin{remark}
%With neural network, we can get a close approximation of robust value functions and hence the algorithm is expected to converge to a global optimum. The proof should be incremental but tedious. We leave its convergence proof and further analysis to the future works. Instead we use numerical experiments to demonstrate our algorithm.
%\end{remark}
\end{comment}

\section{Numerical Results}
In this section, we demonstrate the convergence and robustness of our algorithms using numerical experiments. 
% We aim to verify three claims from our paper: 
% \begin{itemize}
%     \item Robust policy gradient method (Algorithm \ref{alg:rpgfull}) can find an optimal policy that is more robust than the one found by non-robust policy gradient method;
%     \item As $\sigma$ becomes larger, the smoothed robust policy gradient method (Algorithm \ref{alg:srpgfull}) has more similar robustness to the ones of robust policy gradient;
%     \item When function approximation is introduced, our robust Actor-Critic algorithm (Algorithm \ref{alg:act}) converges to a more robust policy than the one  vanilla Actor-Critic algorithm converges to. Also, our robust Actor-Critic method is more robust than the adversarial RL training method in  \cite{pinto2017robust}. 
% \end{itemize}
We test our algorithms on the Garnet problem \cite{archibald1995generation} and the Taxi environment from OpenAI \cite{brockman2016openai}. 
The Garnet problem can be specified by $\mathcal{G}(S_n,A_n)$, where the state space $\mcs$ has $S_n$ states $(s_1,...,s_{S_n})$ and action space has $A_n$ actions $(a_1,...,a_{A_n})$. The agent can take any actions in any state, but only gets reward $r=1$ if it takes $a_1$ in $s_1$ or takes $a_2$ in other states (it will receive 0 reward in other cases). The transition kernels are randomly generated.

\subsection{Robust v.s. Non-robust Policy Gradient}
We first compare our robust policy gradient method with vanilla policy gradient. To show the robustness of our algorithm over the vanilla policy gradient method, we compare their robust value functions, i.e., worst-case performance, for different values of $R$. 
We first train the robust policy gradient algorithm and store the obtained policy $\theta_t$ at each time step. At each time step,  we run robust TD in \cref{alg:rtd} with a sample size 200 for 30 times to estimate the average objective function value $J(\theta_t)$. We then plot $J(\theta_t)$ v.s. the number of iterations $t$ on the Garnet problems $\mathcal{G}(12,6)$ and $\mathcal{G}(20,10)$ in Figure \ref{Fig.rpg} and Figure \ref{Fig.g20}, respectively, and plot results on the Taxi environment from OpenAI in Figure \ref{Fig.taxi}. 
We do the same for the vanilla policy gradient method. 
The upper and lower envelopes of the curves correspond to the 95 and 5 percentiles of the 30 curves, respectively. 

As can be seen from \cref{Fig.rpg}, when $R=0$, the robust policy gradient method reduces to the non-robust vanilla policy gradient, and our results show both algorithms have the same performance. When $R>0$, the robust policy gradient obtains a policy that performs much better than the non-robust vanilla policy gradient, which demonstrates the robustness of our method. 
\begin{figure}[htbp]
\begin{center}
\subfigure[$R=0$.]{
\label{Fig.g1}
\includegraphics[width=0.47\linewidth]{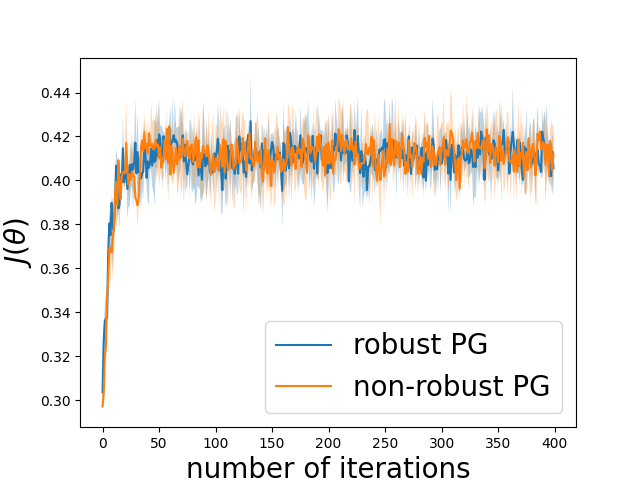}}
\subfigure[$R=0.1$.]{% of $G_{t+1}(\theta_t,\omega_t)$.]{
\label{Fig.g2}
\includegraphics[width=0.47\linewidth]{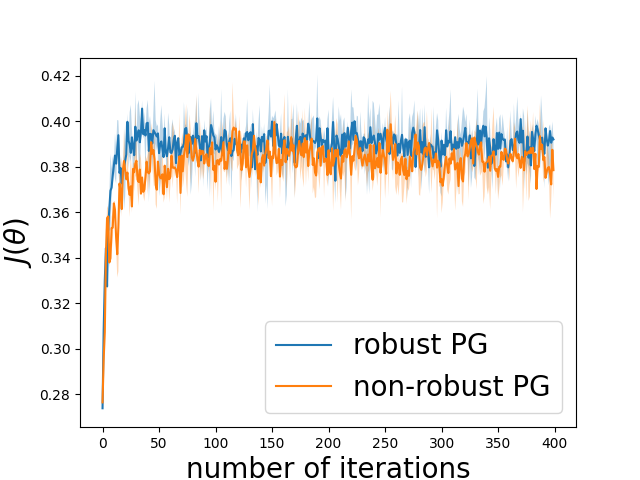}}
\subfigure[$R=0.15$.]{
\label{Fig.g3}
\includegraphics[width=0.47\linewidth]{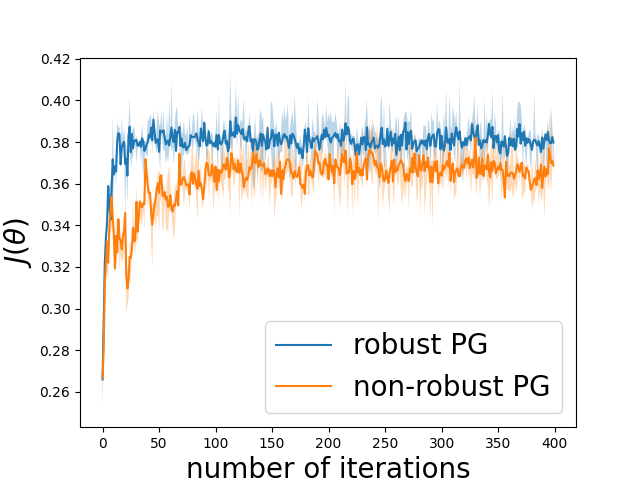}}
\subfigure[$R=0.25$.]{
\label{Fig.g4}
\includegraphics[width=0.47\linewidth]{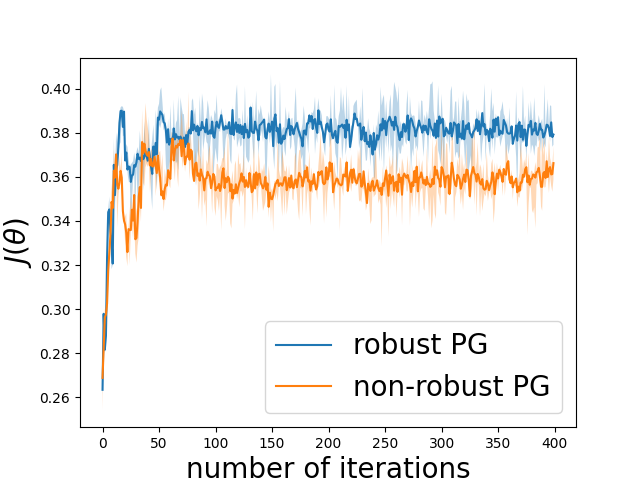}}
\caption{Robust Policy Gradient v.s. Non-robust Policy Gradient on Garnet Problem $\mathcal{G}(12,6)$.}
\label{Fig.rpg}
\end{center}
\vskip -0.2in
\end{figure}

\begin{figure}[ht]
\begin{center}
\subfigure[$R=0.2$.]{
\label{Fig.g201}
\includegraphics[width=0.47\linewidth]{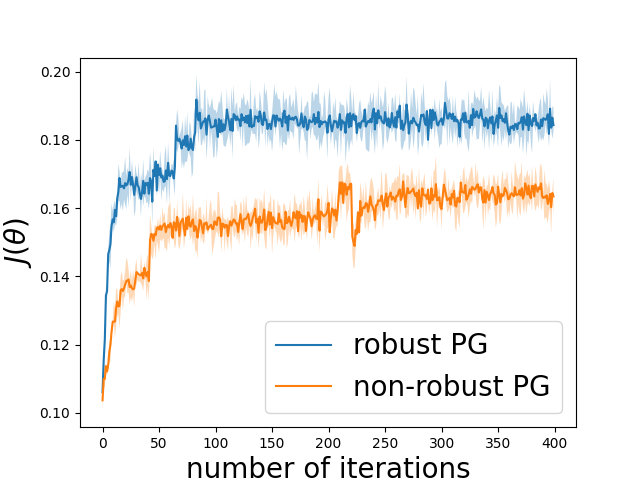}}
\subfigure[$R=0.25$.]{% of $G_{t+1}(\theta_t,\omega_t)$.]{
\label{Fig.g202}
\includegraphics[width=0.47\linewidth]{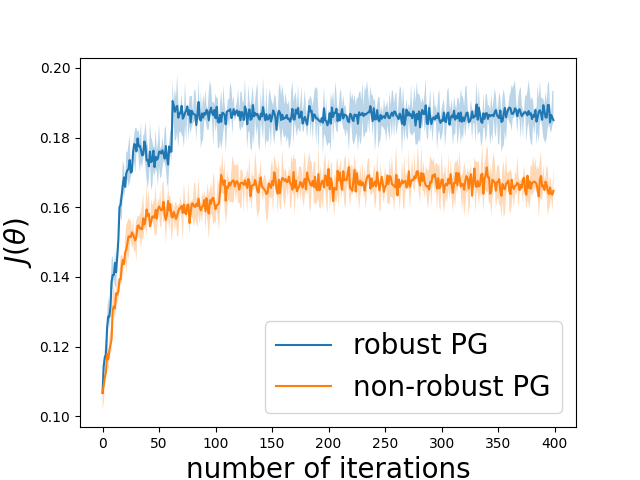}}
\caption{Robust Policy Gradient v.s. Non-robust Policy Gradient on Garnet Problem $\mathcal{G}(20,10)$.}
\label{Fig.g20}
\end{center}
\vskip -0.2in
\end{figure}

\begin{figure}[ht]
\begin{center}
\subfigure[$R=0.1$.]{
\label{Fig.tax1}
\includegraphics[width=0.47\linewidth]{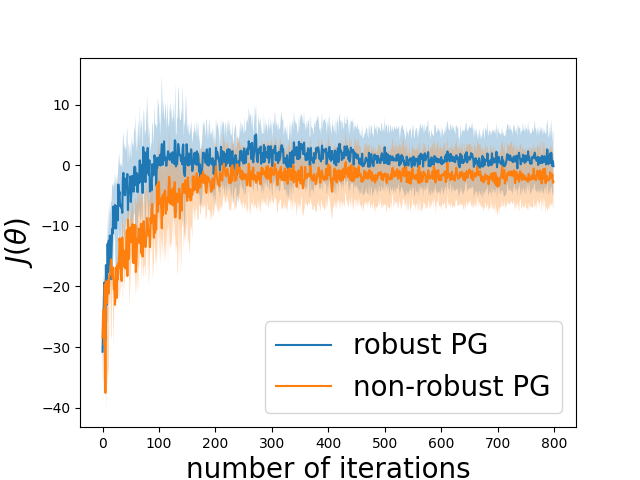}}
\subfigure[$R=0.15$.]{% of $G_{t+1}(\theta_t,\omega_t)$.]{
\label{Fig.tax2}
\includegraphics[width=0.47\linewidth]{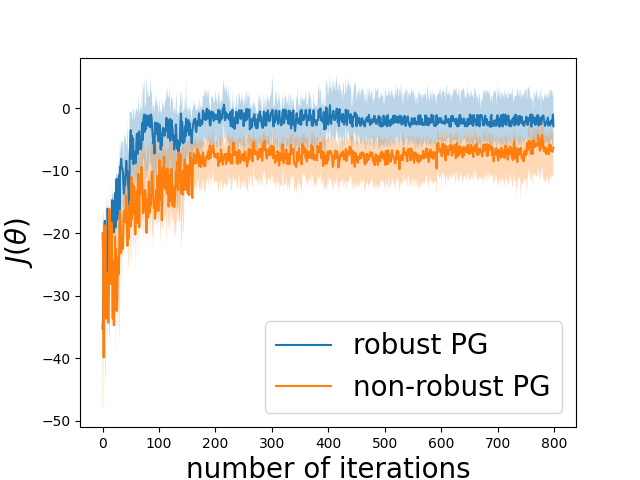}}
\caption{Robust Policy Gradient v.s. Non-robust Policy Gradient on Taxi Problem.}
\label{Fig.taxi}
\end{center}
\vskip -0.2in
\end{figure}

\subsection{Smoothed Robust Policy Gradient}
In this section, we demonstrate the performance of our smoothed robust policy gradient method on the Garnet problem $\mathcal{G}(12,6)$. As we showed in Section \ref{sec:srpg}, the smoothed algorithm approximate the robust policy gradient algorithm as $\sigma\to\infty$. Here, we set different values of $\sigma$ in Algorithm \ref{alg:srpgfull} and plot their objective functions v.s. number of iterations to demonstrate such an approximation. 

As shown in Figure \ref{Fig.smooth}, when $\sigma$ is small (e.g., $\sigma=1$), the performance of smoothed robust policy gradient is poor. As $\sigma$ increases, smoothed robust policy gradient behaves similarly to the robust policy gradient, which corresponds to the curve with $\sigma=\infty$. This experiment hence verifies our theoretical results that we can approximate the robust policy gradient by choosing a suitably large $\sigma$.
\begin{figure}[ht]
\begin{center}
\subfigure[$R=0.1$.]{
\label{Fig.sigma1}
\includegraphics[width=0.47\linewidth]{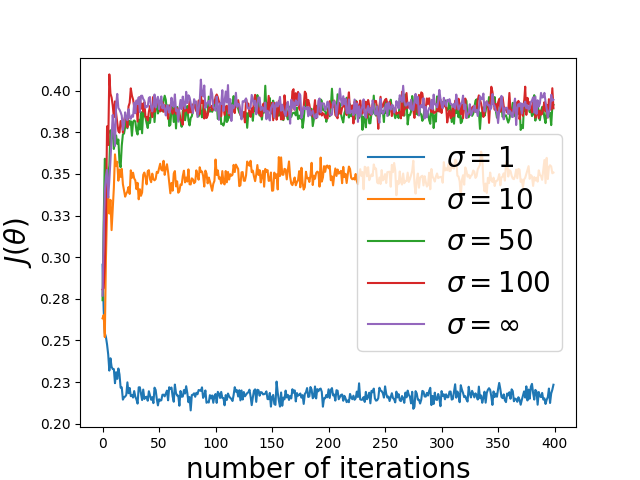}}
\subfigure[$R=0.15$.]{% of $G_{t+1}(\theta_t,\omega_t)$.]{
\label{Fig.sigma2}
\includegraphics[width=0.47\linewidth]{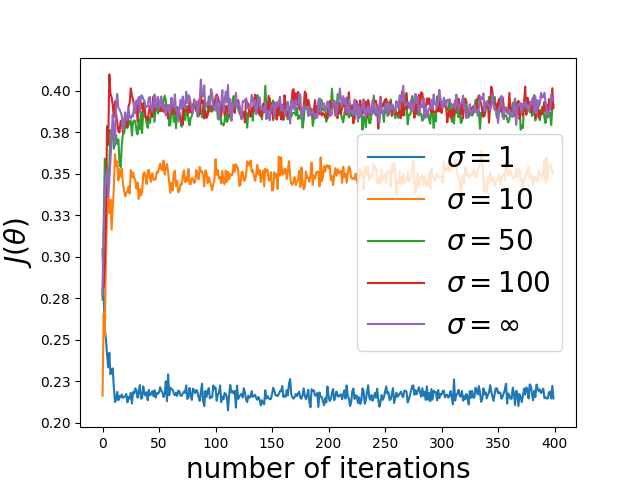}}
\caption{Smoothed Robust Policy Gradient.}
\label{Fig.smooth}
\end{center}
\vskip -0.2in
\end{figure}

\subsection{Robust Actor-Critic}
% In \cref{Fig.AC}, we consider Garnet problem with larger state and action spaces $\mathcal{G}(30,10)$ using direct parameterized policy, and we use a two-layer neural network (with 20 neurons in the hidden layer) in the critic to approximate the robust value function.
In \cref{Fig.nac}, we consider Garnet problem $\mathcal{G}(30,20)$ using neural network parameterized policy, where we use a two-layer neural network with 15 neurons in the hidden layer to parameterize the policy $\pit$. We then use a two-layer neural network (with 20 neurons in the hidden layer) in the critic.
At each time step, we run \cref{alg:NTD} for 30 times to estimate the robust value function. We then use the estimate to simulate \cref{alg:act}. We plot $J(\theta_t)$ v.s. the number of iterations in \cref{Fig.nac}, and the upper and lower envelopes of the curves correspond to the 95 and 5 percentiles of the 30 trajectories. As the results show, our robust actor-critic algorithm finds a policy that achieves a higher accumulated discounted reward on the worst-case transition kernel than the vanilla actor-critic algorithm \cite{sutton2018reinforcement}. 
% \begin{figure}[ht]
% \vskip 0.2in
% \begin{center}
% \subfigure[$R=0.15$.]{
% \label{Fig.ac1}
% \includegraphics[width=0.47\linewidth]{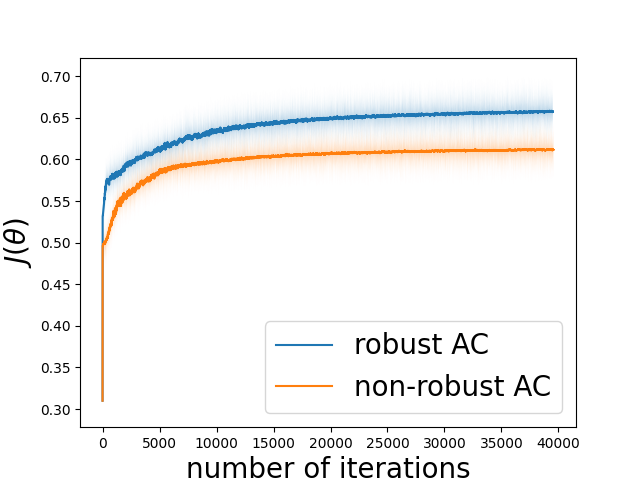}}
% \subfigure[$R=0.2$.]{% of $G_{t+1}(\theta_t,\omega_t)$.]{
% \labelAA
% \includegraphics[width=0.47\linewidth]{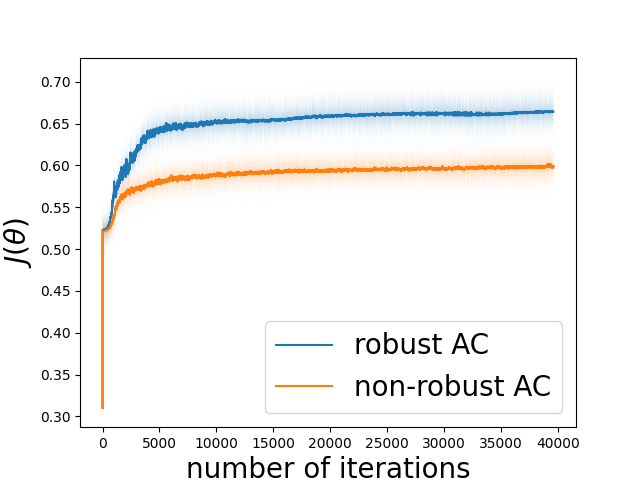}}
% \caption{Robust Actor-Critic (AC) v.s. Non-robust Actor-Critic with Direct Policy on Garnet Problem $\mathcal{G}(30,10)$.}
% \label{Fig.AC}
% \end{center}
% \vskip -0.2in
% \end{figure}

% We also consider Garnet problem $\mathcal{G}(30,20)$ using neural policy, where we use a two-layer neural network with 15 neurons in the hidden layer to parameterize the policy $\pit$. We again use a two-layer neural network (with 20 neurons in the hidden layer) in the critic. At each time step, we run Algorithm \ref{alg:NTD} for 30 times to estimate the robust value function. We then use the estimate to simulate Algorithm \ref{alg:act}. We plot $J(\theta_t)$ v.s. the number of iterations in Figure \ref{Fig.nac} and the upper and lower envelopes of the curves correspond to
% the 95 and 5 percentiles of the 30 trajectories. As the results show, our robust actor-critic algorithm obtains a neural policy which has a higher accumulated discounted reward under the worst-case than the vanilla actor-critic algorithm. 
\begin{figure}[ht]
\begin{center}
\subfigure[$R=0.15$.]{
\label{Fig.nac1}
\includegraphics[width=0.47\linewidth]{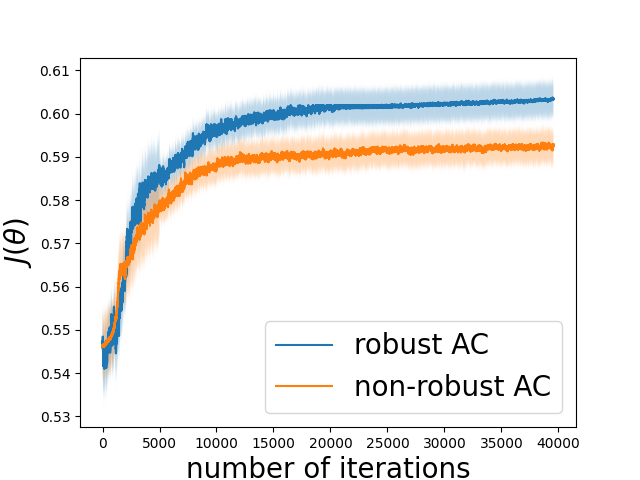}}
\subfigure[$R=0.2$.]{% of $G_{t+1}(\theta_t,\omega_t)$.]{
\label{Fig.nac2}
\includegraphics[width=0.47\linewidth]{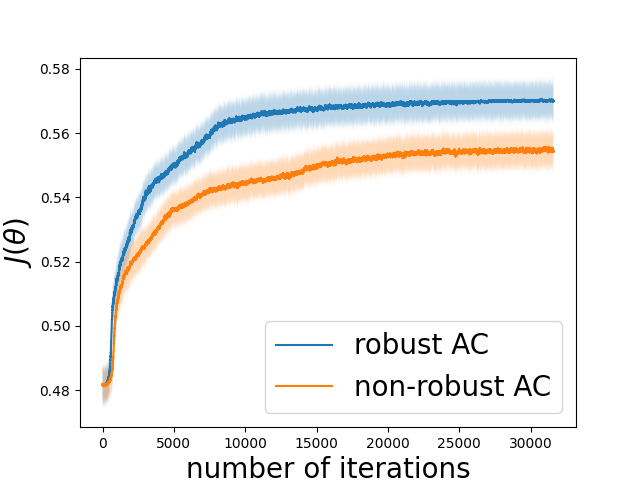}}
\caption{Robust Actor-Critic (AC) v.s. Non-robust Actor-Critic on Garnet Problem $\mathcal{G}(30,20)$.}
\label{Fig.nac}
\end{center}

\end{figure}

\subsection{Comparison with RARL}
We compare our robust algorithms with the robust adversarial reinforcement learning (RARL) approach in \citep{pinto2017robust}. 
The basic idea of the RARL approach is to introduce an adversary that perturbs the state transition to minimize the accumulated discounted reward. Then the agent and the adversary are trained alternatively using adversarial training.  
To apply their algorithm to our problem setting, we set an adversarial player, whose goal is to minimize the accumulated discounted reward that the agent receives. The action space $\mca_{ad}$ of the adversary is the state space $\mca_{ad}\triangleq \mcs$. The agent and the adversary take actions $a_{a}, a_{ad}$, then the environment will transit to state $a_{ad}$ with probability $R$ or transit following the unperturbed MDP $p_s^{a_a}$ with probability $1-R$. We compare our robust actor-critic algorithm with the RARL algorithm on the Taxi environment. Similarly, at each time step, we run Algorithm \ref{alg:NTD} with neural function approximation for 30 times to estimate the robust value function. We then use the results to simulate Algorithm \ref{alg:act} and RARL. We plot the robust value function $J(\theta_t)$ v.s. the number of iterations in Figure \ref{Fig.ra}. The upper and lower envelops correspond the 95 and 5 percentiles of the 30 trajectories.  As \cref{Fig.ra} shows, our robust 
actor-critic algorithm achieves a much higher accumulative discounted reward than the RARL approach under the worst-case transition kernel, and thus is more robust to model mismatch.
\begin{figure}[ht]
\begin{center}
\subfigure[$R=0.1$.]{
\label{Fig.ra1}
\includegraphics[width=0.47\linewidth]{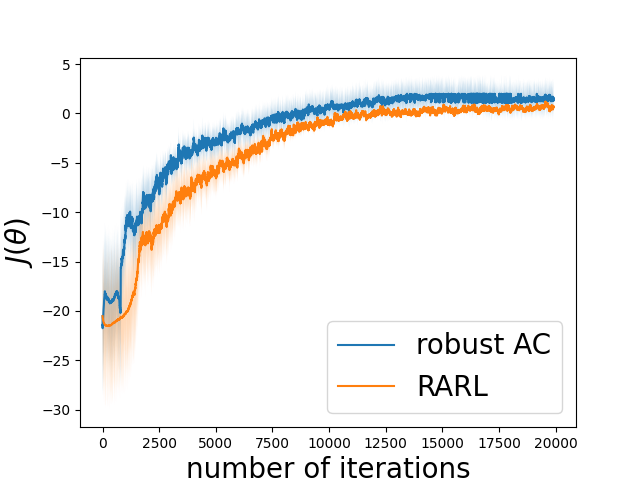}}
\subfigure[$R=0.15$.]{% of $G_{t+1}(\theta_t,\omega_t)$.]{
\label{Fig.ra2}
\includegraphics[width=0.47\linewidth]{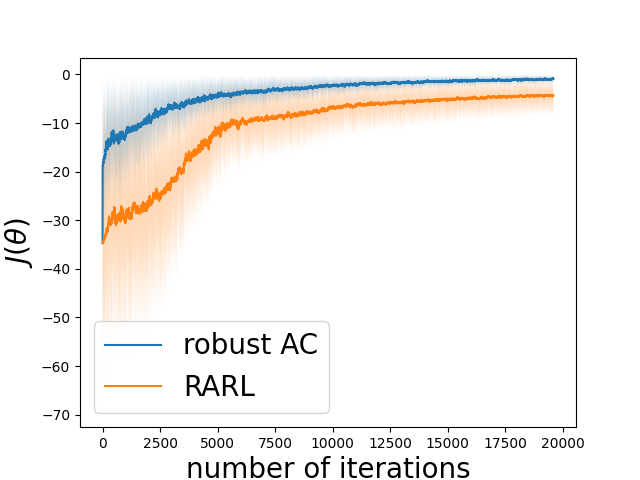}}
\caption{Robust Actor-Critic v.s. RARL on Taxi Environment.}
\label{Fig.ra}
\end{center}

\end{figure}
 
\section{Discussions}
In this paper, we develop direct policy search method for robust RL.  Our robust algorithms can be applied with arbitrary differentiable value function approximation and  policy parameterization, and thus is scalable to problems with large state and action spaces.
In this paper, the analysis is for the direct policy parameterization. Our approach can also be extended to establish the global optimality under other policy parameterizations, e.g., softmax. It is also of future interest to develop robust natural policy gradient approaches for robust RL.
We focus on the $R$-contamination model for the uncertainty set, which can be closely related to sets defined by total variation and Kullback-Leibler divergence using Pinsker’s inequality. It is also of future interest to investigate model-free approaches for other uncertainty sets defined via e.g., total variation, Kullback-Leibler divergence, Wasserstain distance. 
% Our method explicates the complicated relation between the policy and the transition kernel of the worst case, and can find a global optimal policy which has the most robustness under the model uncertainty. We also studied its finite-time error bound by approximating it using a smoothed method. Our robust policy gradient method is further extended to robust Actor-Critic method, for both tabular and function approximation settings. Our numerical results also demonstrate the robustness of our methods.

\newpage
\bibliographystyle{icml2022}
\bibliography{RPG.bib}

%%%%%%%%%%%%%%%%%%%%%%%%%%%%%%%%%%%%%%%%%%%%%%%%%%%%%%%%%%%%%%%%%%%%%%%%%%%%%%%
%%%%%%%%%%%%%%%%%%%%%%%%%%%%%%%%%%%%%%%%%%%%%%%%%%%%%%%%%%%%%%%%%%%%%%%%%%%%%%%
% APPENDIX
%%%%%%%%%%%%%%%%%%%%%%%%%%%%%%%%%%%%%%%%%%%%%%%%%%%%%%%%%%%%%%%%%%%%%%%%%%%%%%%
%%%%%%%%%%%%%%%%%%%%%%%%%%%%%%%%%%%%%%%%%%%%%%%%%%%%%%%%%%%%%%%%%%%%%%%%%%%%%%%
\newpage
\appendix
\onecolumn
\addcontentsline{toc}{section}{Appendix} % Add the appendix text to the document TOC
\part{Appendix} % Start the appendix part
\parttoc

In the following proofs, for a vector $v\in\mathbb{R}^d$, $\|v\|$ denotes its $l_2$ norm, and $\max v\triangleq \max_i v(i)$ denotes its largest entry. For a matrix $A$, $\|A\|$ denotes its operator norm. For a policy $\pit\in\Pi_\Theta$, denote $s_\theta\triangleq \arg\max_s V^{\pit}(s)$, where $V^{\pit}$ is the robust value function. 

%Sometimes we denote $V^{\pit}$ by $V^\theta$, and $Q^{\pit}$ by $Q^\theta$.  

\section{Robust Policy Gradient}\label{sec:lip}
\subsection{Robust Value Function Is Lipschitz}
In this section, we show that the robust value function is Lipschitz.
\begin{lemma}\label{thm:lip}
Under Assumption \ref{ass:lip}, the robust value function is Lipschitz in $\theta$, i.e., for any $s\in\mcs$, any $\theta_1,\theta_2\in\Theta$,
\begin{align}\label{eq:lipV}
     |V^{\pi_{\theta_1}}(s)-V^{\pi_{\theta_2}}(s)| \leq \frac{k_\pi |\mca|}{(1-\gamma)^2}\|\theta_1-\theta_2\|\triangleq L_V\|\theta_1-\theta_2\|,
\end{align}
where $L_V=\frac{k_\pi |\mca|}{(1-\gamma)^2}$.
\end{lemma}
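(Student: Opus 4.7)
The plan is to bound $\Delta \triangleq \max_{s}|V^{\pi_{\theta_1}}(s)-V^{\pi_{\theta_2}}(s)|$ by a recursive inequality derived from the robust Bellman equation, and then solve for $\Delta$. Since $V^{\pi}$ is the unique fixed point of $\mathbf T_{\pi}$, under $R$-contamination we can write
\begin{align}
V^{\pi}(s) = \sum_{a\in\mca}\pi(a|s)\Bigl[c(s,a)+\gamma(1-R)\sum_{s'}p^{a}_{s,s'}V^{\pi}(s')+\gamma R\max_{s'}V^{\pi}(s')\Bigr].
\end{align}
Fix any $s$, denote $\pi_i = \pi_{\theta_i}$, add and subtract $\sum_a \pi_1(a|s) Q^{\pi_2}(s,a)$ inside the difference $V^{\pi_1}(s)-V^{\pi_2}(s)$, and split into two pieces: a \emph{policy mismatch} term $\sum_a(\pi_1(a|s)-\pi_2(a|s))Q^{\pi_2}(s,a)$ and a \emph{value mismatch} term $\sum_a \pi_1(a|s)(Q^{\pi_1}(s,a)-Q^{\pi_2}(s,a))$.

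For the policy mismatch term, I would use the standard bound $\|Q^{\pi_2}\|_\infty\leq \tfrac{1}{1-\gamma}$ (since costs lie in $[0,1]$ and $\mathbf T_{\pi_2}$ is a $\gamma$-contraction on $\ell_\infty$), together with Assumption~\ref{ass:lip} which gives $|\pi_1(a|s)-\pi_2(a|s)|\leq k_\pi\|\theta_1-\theta_2\|$ for every $(s,a)$. Summing over the $|\mca|$ actions yields a contribution of at most $\tfrac{k_\pi |\mca|}{1-\gamma}\|\theta_1-\theta_2\|$, independently of $s$.

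For the value mismatch term, I would expand $Q^{\pi_1}(s,a)-Q^{\pi_2}(s,a)$ using the $R$-contamination representation of $\sigma_{\mathcal P^a_s}$: this difference equals $\gamma(1-R)\sum_{s'}p^a_{s,s'}[V^{\pi_1}(s')-V^{\pi_2}(s')]+\gamma R(\max_{s'}V^{\pi_1}(s')-\max_{s'}V^{\pi_2}(s'))$. The key observation (and arguably the only subtle step) is that the $\max$ operator is $1$-Lipschitz with respect to the $\ell_\infty$ norm, so $|\max_{s'}V^{\pi_1}(s')-\max_{s'}V^{\pi_2}(s')|\leq \Delta$; combined with $\sum_{s'}p^a_{s,s'}|V^{\pi_1}(s')-V^{\pi_2}(s')|\leq\Delta$, this yields $|Q^{\pi_1}(s,a)-Q^{\pi_2}(s,a)|\leq \gamma\Delta$ uniformly in $(s,a)$.

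Combining the two bounds, taking $\max$ over $s$ on the left side, and using $\sum_a \pi_1(a|s)=1$ gives the recursive estimate
\begin{align}
\Delta \leq \gamma\,\Delta + \frac{k_\pi|\mca|}{1-\gamma}\|\theta_1-\theta_2\|,
\end{align}
which rearranges to $\Delta\leq \tfrac{k_\pi|\mca|}{(1-\gamma)^2}\|\theta_1-\theta_2\|$, exactly the claimed constant $L_V$. I do not anticipate a genuine obstacle here; the only point that requires care is handling the $\max$ term from $R$-contamination correctly, since that is the new ingredient absent from the non-robust Lipschitz argument, and using $1$-Lipschitzness of $\max$ in $\ell_\infty$ is what keeps the contraction factor at $\gamma$ rather than something worse.
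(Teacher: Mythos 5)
Your proof is correct, and it takes a genuinely different (and shorter) route than the paper's. You close the argument with a single sup-norm contraction step: writing $\Delta=\max_s|V^{\pi_{\theta_1}}(s)-V^{\pi_{\theta_2}}(s)|$ (finite since $\|V^\pi\|_\infty\le\frac{1}{1-\gamma}$), using the one-step robust Bellman equation, the $1$-Lipschitzness of $\max$ in $\ell_\infty$, and $\|Q^{\pi_{\theta_2}}\|_\infty\le\frac{1}{1-\gamma}$ to get $\Delta\le\gamma\Delta+\frac{k_\pi|\mca|}{1-\gamma}\|\theta_1-\theta_2\|$, which rearranges to the claimed constant. The paper instead unrolls the Bellman recursion to infinity, obtaining the exact identity
\begin{align}
V^{\pi_{\theta_1}}(s)-V^{\pi_{\theta_2}}(s)=\frac{1}{1-\gamma(1-R)}\sum_{s'}d^{\pi_{\theta_2}}_{s}(s')\sum_{a}(\pi_{\theta_1}(a|s')-\pi_{\theta_2}(a|s'))Q^{\pi_{\theta_1}}(s',a)+\frac{\gamma R}{1-\gamma+\gamma R}\left(\max V^{\pi_{\theta_1}}-\max V^{\pi_{\theta_2}}\right),
\end{align}
and then controls the max term by evaluating the inequality at the maximizing state $s_{\theta_1}$ and solving a scalar inequality; the two contributions sum to the same $\frac{k_\pi|\mca|}{(1-\gamma)^2}$. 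What your argument buys is brevity and transparency about where each factor of $\frac{1}{1-\gamma}$ comes from; what the paper's buys is the explicit visitation-distribution representation above, which is not incidental --- it is reused verbatim as the starting point for the sub-gradient derivation in Theorem~\ref{coro:subJ} and the PL-condition in Theorem~\ref{thm:PL}, so the paper extracts more from the longer computation. For the Lipschitz claim in isolation, your proof is complete as stated.
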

\begin{proof}
For any $\theta_1,\theta_2\in\Theta$, we have that
\begin{align}\label{eq:28}
    &V^{\pi_{\theta_1}}(s)-V^{\pi_{\theta_2}}(s)\nn\\
    &=\sum_{a\in\mca} (\pi_{\theta_1}(a|s)Q^{\pi_{\theta_1}}(s,a))-\sum_{a\in\mca} (\pi_{\theta_2}(a|s)Q^{\pi_{\theta_2}}(s,a))\nn\\
    &=\sum_{a\in\mca} (\pi_{\theta_1}(a|s)Q^{\pi_{\theta_1}}(s,a))-\sum_{a\in\mca} (\pi_{\theta_2}(a|s)Q^{\pi_{\theta_1}}(s,a)) + \sum_{a\in\mca} (\pi_{\theta_2}(a|s)Q^{\pi_{\theta_1}}(s,a))-\sum_{a\in\mca} (\pi_{\theta_2}(a|s)Q^{\pi_{\theta_2}}(s,a))\nn\\
    &=\sum_{a\in\mca} (\pi_{\theta_1}(a|s)-\pi_{\theta_2}(a|s))Q^{\pi_{\theta_1}}(s,a)+\sum_{a\in\mca} \pi_{\theta_2}(a|s)(Q^{\pi_{\theta_1}}(s,a)-Q^{\pi_{\theta_2}}(s,a))\nn\\
    &\overset{(a)}{=}\sum_{a\in\mca} (\pi_{\theta_1}(a|s)-\pi_{\theta_2}(a|s))Q^{\pi_{\theta_1}}(s,a)\nn\\
    &\quad+\sum_{a\in\mca} \pi_{\theta_2}(a|s)\left(\gamma (1-R) \sum_{s'\in\mcs} p^a_{s,{s'}} \left(V^{\pi_{\theta_1}}({s'})-V^{\pi_{\theta_2}}({s'})\right)+\gamma R\left(\max V^{\pi_{\theta_1}}-\max V^{\pi_{\theta_2}}\right) \right)\nn\\
    &\overset{(b)}{=}\sum_{a\in\mca} (\pi_{\theta_1}(a|s)-\pi_{\theta_2}(a|s))Q^{\pi_{\theta_1}}(s,a)\nn\\
    &\quad+\gamma (1-R)\sum_{s'\in\mcs} \mathbb{P}(S_1={s'}|S_0=s,\pi_{\theta_2}) \underbrace{(V^{\pi_{\theta_1}}({s'})-V^{\pi_{\theta_2}}({s'}))}_{(c)}+ \gamma R \left(\max V^{\pi_{\theta_1}}-\max V^{\pi_{\theta_2}}\right),
\end{align}
where the equation $(a)$ is from the fact that $Q^\pi$ is the fixed point of the robust Bellman operator $ \mathbf T_\pi$, i.e., $Q^\pi(s,a)=c(s,a)+\gamma (1-R) \sum_{s'\in\mcs} p^a_{s,s'} V^\pi(s')+\gamma R \max V^\pi$, where $V^\pi(s)=\sum_{a\in\mca} \pi(a|s)Q^\pi(s,a)$, and equation $(b)$ is because $\mathbb{P}(S_1={s'}|S_0=s,\pi_{\theta_2})=\sum_{a\in\mca} \pi_{\theta_2}(a|s) p^a_{s,{s'}}$. Note that the term $(c)$   $V^{\pi_{\theta_1}}({s'})-V^{\pi_{\theta_2}}({s'})$ is again the difference between robust value functions, hence apply \eqref{eq:28} recursively, and we obtain that 
\begin{align}
    &V^{\pi_{\theta_1}}(s)-V^{\pi_{\theta_2}}(s)\nn\\
    &=\sum_{a\in\mca} (\pi_{\theta_1}(a|s)-\pi_{\theta_2}(a|s))Q^{\pi_{\theta_1}}(s,a)\nn\\
    &\quad+\gamma (1-R)\sum_{s'\in\mcs} \mathbb{P}(S_1={s'}|S_0=s,\pi_{\theta_2}) {(V^{\pi_{\theta_1}}({s'})-V^{\pi_{\theta_2}}({s'}))}+ \gamma R \left(\max V^{\pi_{\theta_1}}-\max V^{\pi_{\theta_2}}\right)\nn\\
    &=\sum_{a\in\mca} (\pi_{\theta_1}(a|s)-\pi_{\theta_2}(a|s))Q^{\pi_{\theta_1}}(s,a)+\gamma R \left(\max V^{\pi_{\theta_1}}-\max V^{\pi_{\theta_2}}\right)\nn\\
    &\quad+ \gamma (1-R)\sum_{s'\in\mcs} \mathbb{P}(S_1={s'}|S_0=s,\pi_{\theta_2}) \bigg(\sum_{a'} (\pi_{\theta_1}(a'|{s'})-\pi_{\theta_2}(a'|{s'})) Q^{\pi_{\theta_1}}({s'},a') \nn\\
    &\quad+\gamma R \left(\max V^{\pi_{\theta_1}}-\max V^{\pi_{\theta_2}}\right)+\gamma (1-R)\sum_{s''\in\mcs} \mathbb{P}(S_1={s''}|S_0={s'},\pi_{\theta_2}) (V^{\pi_{\theta_1}}({s''})-V^{\pi_{\theta_2}}({s''})) \bigg)\nn\\
    &=\sum_{a\in\mca} (\pi_{\theta_1}(a|s)-\pi_{\theta_2}(a|s))Q^{\pi_{\theta_1}}(s,a)+\gamma R \left(\max V^{\pi_{\theta_1}}-\max V^{\pi_{\theta_2}}\right)\nn\\
    &\quad+\gamma (1-R) \sum_{s'\in\mcs} \mathbb{P}(S_1={s'}|S_0=s,\pi_{\theta_2}) \sum_{a'} (\pi_{\theta_1}(a'|{s'})-\pi_{\theta_2}(a'|{s'})) Q^{\pi_{\theta_1}}({s'},a')\nn\\
    &\quad+\gamma R \gamma (1-R)\left(\max V^{\pi_{\theta_1}}-\max V^{\pi_{\theta_2}}\right)\nn\\
    &\quad+\gamma^2(1-R)^2\sum_{s'\in\mcs} \mathbb{P}(S_1={s'}|S_0=s,\pi_{\theta_2}) \sum_{s''\in\mcs} \mathbb{P}(S_1={s''}|S_0={s'},\pi_{\theta_2}) (V^{\pi_{\theta_1}}({s''})-V^{\pi_{\theta_2}}({s''}))\nn\\
    &=\sum_{{s'}\in\mcs} \left(\mathbb{P}(S_0={s'}|S_0=s,\pi_{\theta_2}) + \gamma (1-R) \mathbb{P}(S_1={s'}|S_0=s,\pi_{\theta_2})\right) \left(\sum_{a\in\mca} (\pi_{\theta_1}(a|{s'})-\pi_{\theta_2}(a|{s'})) Q^{\pi_{\theta_1}}(x,a)\right)\nn\\
    &\quad + \gamma R (1+\gamma (1-R)) \left(\max V^{\pi_{\theta_1}}-\max V^{\pi_{\theta_2}}\right)\nn\\
    &\quad +\gamma^2(1-R)^2\sum_{s'\in\mcs} \mathbb{P}(S_1={s'}|S_0=s,\pi_{\theta_2}) \sum_{s''\in\mcs} \mathbb{P}(S_1={s''}|S_0={s'},\pi_{\theta_2}) (V^{\pi_{\theta_1}}({s''})-V^{\pi_{\theta_2}}({s''})).
\end{align}
Note that in the last term, $\sum_{s'\in\mcs} \mathbb{P}(S_1={s'}|S_0=s,\pi_{\theta_2})\mathbb{P}(S_1={s''}|S_0={s'},\pi_{\theta_2})=\mathbb{P}(S_2={s''}|S_0=s,\pi_{\theta_2})$, hence the last term can be written as $\gamma^2(1-R)^2\sum_{s'\in\mcs} \mathbb{P}(S_2={s'}|S_0=s,\pi_{\theta_2}) (V^{\pi_{\theta_1}}({s'})-V^{\pi_{\theta_2}}({s'}))$. Then the difference between robust value functions can be further written as
\begin{align}
     &V^{\pi_{\theta_1}}(s)-V^{\pi_{\theta_2}}(s)\nn\\
     &=\sum_{{s'}\in\mcs} \left(\mathbb{P}(S_0={s'}|S_0=s,\pi_{\theta_2}) + \gamma (1-R) \mathbb{P}(S_1={s'}|S_0=s,\pi_{\theta_2})\right) \left(\sum_{a\in\mca} (\pi_{\theta_1}(a|{s'})-\pi_{\theta_2}(a|{s'})) Q^{\pi_{\theta_1}}({s'},a)\right)\nn\\
    &\quad + \gamma R (1+\gamma (1-R)) \left(\max V^{\pi_{\theta_1}}-\max V^{\pi_{\theta_2}}\right)\nn\\
    &\quad +\gamma^2(1-R)^2\sum_{s'\in\mcs} \mathbb{P}(S_2={s'}|S_0=s,\pi_{\theta_2}) (V^{\pi_{\theta_1}}({s'})-V^{\pi_{\theta_2}}({s'})).
\end{align}
Recursively applying this equation, we  have that
\begin{align}
    &V^{\pi_{\theta_1}}(s)-V^{\pi_{\theta_2}}(s)\nn\\
    &=\sum_{{s'}\in\mcs}  \left(\sum_{k=0}^{\infty} (\gamma(1-R))^k \mathbb{P}(S_k={s'}|S_0=s,\pi_{\theta_2}) \left(\sum_{a\in\mca} (\pi_{\theta_1}(a|{s'})-\pi_{\theta_2}(a|{s'}))Q^{\pi_{\theta_1}}({s'},a) \right) \right)\nn\\
    &\quad+\gamma R(\sum^{\infty}_{k=0} (\gamma(1-R))^k) \left(\max V^{\pi_{\theta_1}}-\max V^{\pi_{\theta_2}}\right)\nn\\
    &=\sum_{{s'}\in\mcs}  \left(\sum_{k=0}^{\infty} (\gamma(1-R))^k \mathbb{P}(S_k={s'}|S_0=s,\pi_{\theta_2}) \left(\sum_{a\in\mca} (\pi_{\theta_1}(a|{s'})-\pi_{\theta_2}(a|{s'}))Q^{\pi_{\theta_1}}({s'},a) \right) \right)\nn\\
    &\quad+\frac{\gamma R}{1-\gamma +\gamma R} \left(\max V^{\pi_{\theta_1}}-\max V^{\pi_{\theta_2}}\right).
\end{align}
 The first term can be treated as an expectation under the discounted visitation distribution:
\begin{align}\label{eq:diff1}
   V^{\pi_{\theta_1}}(s)-V^{\pi_{\theta_2}}(s) &=\frac{1}{1-\gamma(1-R)}\sum_{{s'}\in\mcs}  \left(d^{\pi_{\theta_2}}_{s}({s'}) \left(\sum_{a\in\mca} (\pi_{\theta_1}(a|{s'})-\pi_{\theta_2}(a|{s'}))Q^{\pi_{\theta_1}}({s'},a) \right) \right)\nn\\
    &\quad+\frac{\gamma R}{1-\gamma +\gamma R} \left(\max V^{\pi_{\theta_1}}-\max V^{\pi_{\theta_2}}\right),
\end{align}
where $d^{\pi_{\theta_2}}_{s}({s'})=(1-\gamma(1-R)) \sum_{k=0}^{\infty} \gamma^k(1-R)^k \left(\mathbb{P}(S_k={s'}|S_0=s_0,\pi) \right)$.
 
Taking absolute value on both sides, we then have that
\begin{align}\label{eq:difV1}
     |V^{\pi_{\theta_1}}(s)-V^{\pi_{\theta_2}}(s)| &=\bigg|\frac{1}{1-\gamma(1-R)}\sum_{{s'}\in\mcs}  \left(d^{\pi_{\theta_2}}_{s}({s'}) \left(\sum_{a\in\mca} (\pi_{\theta_1}(a|{s'})-\pi_{\theta_2}(a|{s'}))Q^{\pi_{\theta_1}}({s'},a) \right) \right)\nn\\
     &\quad+\frac{\gamma R}{1-\gamma +\gamma R} \left(\max V^{\pi_{\theta_1}}-\max V^{\pi_{\theta_2}}\right)\bigg|.
\end{align}

We then consider the second term $\max V^{\pi_{\theta_1}}-\max V^{\pi_{\theta_2}}$ in \eqref{eq:difV1}. Without loss of generality, assume that $V^{\pi_{\theta_1}}(s_{\theta_1})\geq V^{\pi_{\theta_2}}(s_{\theta_2})$.
Then $|\max V^{\pi_{\theta_1}}-\max V^{\pi_{\theta_2}}|=V^{\pi_{\theta_1}}(s_{\theta_1})-V^{\pi_{\theta_2}}(s_{\theta_2})=V^{\pi_{\theta_1}}(s_{\theta_1})-V^{\pi_{\theta_2}}(s_{\theta_1})+V^{\pi_{\theta_2}}(s_{\theta_1})-V^{\pi_{\theta_2}}(s_{\theta_2})\leq V^{\pi_{\theta_1}}(s_{\theta_1})-V^{\pi_{\theta_2}}(s_{\theta_1})$ because $V^{\pi_{\theta_2}}(s_{\theta_1})\leq V^{\pi_{\theta_2}}(s_{\theta_2})$. Then \eqref{eq:difV1} can be written as 
\begin{align}\label{eq:difV2}
   |V^{\pi_{\theta_1}}(s)-V^{\pi_{\theta_2}}(s)| &\leq \left|\frac{1}{1-\gamma(1-R)}\sum_{{s'}\in\mcs}  \left(d^{\pi_{\theta_2}}_{s}({s'}) \left(\sum_{a\in\mca} (\pi_{\theta_1}(a|{s'})-\pi_{\theta_2}(a|{s'}))Q^{\pi_{\theta_1}}({s'},a) \right) \right)\right|\nn\\
    &\quad\quad+\frac{\gamma R}{1-\gamma +\gamma R} \left|\left(V^{\pi_{\theta_1}}(s_{\theta_1})-V^{\pi_{\theta_2}}(s_{\theta_1})\right)\right|.
\end{align}
Note that this inequality holds for any state $s$. By setting $s=s_{\theta_1}$ we have
\begin{align}
     |V^{\pi_{\theta_1}}(s_{\theta_1})-V^{\pi_{\theta_2}}(s_{\theta_1})|
    &\leq \left|\frac{1}{1-\gamma(1-R)}\sum_{{s'}\in\mcs}  \left(d^{\pi_{\theta_2}}_{s_{\theta_1}}({s'}) \left(\sum_{a\in\mca} (\pi_{\theta_1}(a|{s'})-\pi_{\theta_2}(a|{s'}))Q^{\pi_{\theta_1}}({s'},a) \right) \right)\right|\nn\\
    &\quad+\frac{\gamma R}{1-\gamma +\gamma R} |\left(V^{\pi_{\theta_1}}(s_{\theta_1})-V^{\pi_{\theta_2}}(s_{\theta_1})\right)|,
\end{align}
which implies that 
\begin{align}
     |V^{\pi_{\theta_1}}(s_{\theta_1})-V^{\pi_{\theta_2}}(s_{\theta_1})| \leq \left|\frac{1}{1-\gamma}\sum_{{s'}\in\mcs}  \left(d^{\pi_{\theta_2}}_{s_{\theta_1}}({s'}) \left(\sum_{a\in\mca} (\pi_{\theta_1}(a|{s'})-\pi_{\theta_2}(a|{s'}))Q^{\pi_{\theta_1}}({s'},a) \right) \right)\right|.
\end{align}
Plugging this inequality in \eqref{eq:difV2} implies that
\begin{align}\label{eq:difV3}
    &|V^{\pi_{\theta_1}}(s)-V^{\pi_{\theta_2}}(s)|\nn\\
    &\leq \left|\frac{1}{1-\gamma(1-R)}\sum_{{s'}\in\mcs}  \left(d^{\pi_{\theta_2}}_{s}({s'}) \left(\sum_{a\in\mca} (\pi_{\theta_1}(a|{s'})-\pi_{\theta_2}(a|{s'}))Q^{\pi_{\theta_1}}({s'},a) \right) \right)\right|\nn\\
    &\quad+\left|\frac{\gamma R}{(1-\gamma)(1-\gamma+\gamma R)}\sum_{{s'}\in\mcs}  \left(d^{\pi_{\theta_2}}_{s_{\theta_1}}({s'}) \left(\sum_{a\in\mca} (\pi_{\theta_1}(a|{s'})-\pi_{\theta_2}(a|{s'}))Q^{\pi_{\theta_1}}({s'},a) \right) \right)\right|.
\end{align}
Note that from $c(s,a)\in[0,1]$, $Q^\pi(s,a)=\max_\kappa \mathbb{E}_\kappa [\sum^\infty_{t=0} \gamma^t c(S_t,A_t)|S_0=s,A_0=a,\pi]\leq \frac{1}{1-\gamma}$ for any $\pi,s,a$; and  from Assumption \ref{ass:lip}, $ |\pi_{\theta_1}(a|{s'})-\pi_{\theta_2}(a|{s'})|\leq   k_\pi\|\theta_1-\theta_2\|$, hence we have that 
\begin{align} 
     |V^{\pi_{\theta_1}}(s)-V^{\pi_{\theta_2}}(s)| \leq &\left(\frac{1}{1-\gamma(1-R)}+\frac{\gamma R}{(1-\gamma)(1-\gamma+\gamma R)}\right)\frac{k_\pi|\mca|}{1-\gamma} \|\theta_1-\theta_2\|\nn\\
     &=\frac{k_\pi |\mca|}{(1-\gamma)^2}\|\theta_1-\theta_2\|,
\end{align}
which completes the proof.
\end{proof}
The following corollary is straightforward hence the proof is omitted:
\begin{corollary}
The robust action-value functions $Q^{\pi_{\theta}}(s,a)$ for any $s \in
\mcs, a\in \mca$  and the objective function $J_\rho(\theta)=\sum_{s\in\mcs} \rho(s) V^{\pi_{\theta}}(s)$ are Lipschitz in $\theta$ with constant $L_V$, i.e., for any $\theta_1,\theta_2\in\Theta$,
\begin{align}
    |Q^{\pi_{\theta_1}}(s,a)-Q^{\pi_{\theta_2}}(s,a)|\leq L_V\|\theta_1-\theta_2\|,\\
    |J_\rho(\theta_1)-J_\rho(\theta_2)|\leq L_V\|\theta_1-\theta_2\|.
\end{align}
\end{corollary}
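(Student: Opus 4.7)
My plan is that both claims are straightforward consequences of the Lipschitz bound on $V^{\pi_\theta}$ already established in the preceding lemma, so the entire proof amounts to reducing each quantity to a linear/affine combination of robust value functions and then invoking inequality \eqref{eq:lipV}.

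For the bound on $J_\rho$, I would simply observe that since $J_\rho(\theta) = \sum_{s\in\mcs} \rho(s) V^{\pi_\theta}(s)$ and $\rho$ is a probability distribution, the triangle inequality gives
\begin{align*}
|J_\rho(\theta_1) - J_\rho(\theta_2)|
\le \sum_{s\in\mcs} \rho(s)\, |V^{\pi_{\theta_1}}(s) - V^{\pi_{\theta_2}}(s)|
\le L_V \|\theta_1 - \theta_2\|,
\end{align*}
where the last step applies the lemma pointwise in $s$ and then uses $\sum_s \rho(s) = 1$.

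For the Q-bound, the key is to exploit that $Q^\pi$ is the fixed point of the robust Bellman operator under $R$-contamination, so
\begin{align*}
Q^{\pi}(s,a) = c(s,a) + \gamma(1-R)\sum_{s'\in\mcs} p^a_{s,s'} V^{\pi}(s') + \gamma R \max_{s'} V^{\pi}(s').
\end{align*}
Subtracting the two versions at $\theta_1$ and $\theta_2$, the cost term cancels and I am left with a convex combination (weights $\gamma(1-R)$ and $\gamma R$) of $\sum_{s'} p^a_{s,s'}(V^{\pi_{\theta_1}}(s') - V^{\pi_{\theta_2}}(s'))$ and $\max V^{\pi_{\theta_1}} - \max V^{\pi_{\theta_2}}$. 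The transition-kernel term is bounded by $L_V\|\theta_1-\theta_2\|$ by pointwise application of the lemma followed by convexity in $s'$. For the max term I use the standard fact that $\max$ is $1$-Lipschitz in the supremum norm, i.e.\ $|\max V^{\pi_{\theta_1}} - \max V^{\pi_{\theta_2}}| \le \max_{s'} |V^{\pi_{\theta_1}}(s') - V^{\pi_{\theta_2}}(s')| \le L_V\|\theta_1-\theta_2\|$. Combining the two contributions yields a bound of $\gamma L_V \|\theta_1 - \theta_2\| \le L_V \|\theta_1-\theta_2\|$ since $\gamma<1$.

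There is no real obstacle here: the contractive structure of the robust Bellman operator and the previously-proven Lipschitzness of $V^{\pi_\theta}$ do all the work, and the only minor subtlety is remembering that $\max$ is nonexpansive in $\ell_\infty$ so that the contamination term contributes at most $\gamma R L_V$ rather than something larger. Since $\gamma(1-R) + \gamma R = \gamma \le 1$, the final constant remains $L_V$, matching the statement.
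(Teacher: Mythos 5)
Your proof is correct and is exactly the argument the paper has in mind (the paper omits it as ``straightforward''): the $J_\rho$ bound follows from averaging \eqref{eq:lipV} over $\rho$, and the $Q$ bound follows from the robust Bellman fixed-point identity $Q^{\pi}(s,a)=c(s,a)+\gamma(1-R)\sum_{s'}p^a_{s,s'}V^{\pi}(s')+\gamma R\max V^{\pi}$ (already used in the proof of the preceding lemma) together with the nonexpansiveness of $\max$ in $\ell_\infty$, giving the even slightly stronger constant $\gamma L_V\leq L_V$. No gaps.
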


\subsection{Proof of Theorem \ref{coro:subJ}: Sub-gradient of Robust Value Function}\label{sec:subg}
In this section, we derive the sub-gradient of robust value function $J_\rho(\theta)$ and prove Theorem \ref{coro:subJ}.  

Denote $\st=\arg\max_s V^{\pit}(s)$, and define 
\begin{align}
    \hat{\phi}_s(\theta)&\triangleq\frac{\gamma R}{(1-\gamma)(1-\gamma+\gamma R)}\sum_{{s'}\in\mcs} d^{\pit}_{\st}({s'}) \sum_{a\in\mca} \nabla \pit(a|{s'}) Q^{\pit}({s'},a)\nn\\
    & \quad\quad+\frac{1}{1-\gamma+\gamma R} \sum_{{s'}\in\mcs} d^{\pit}_{s}({s'}) \sum_{a\in\mca} \nabla \pit(a|{s'}) Q^{\pit}({s'},a),
\end{align}
Recall from \eqref{eq:subgra} that $\psi_\rho(\theta)$ is the average of $\hat{\phi}_s(\theta)$ under distribution $\rho$. In the following, we show that $\hat{\phi}_s(\theta)\in\partial V^{\pit}(s)$, and hence naturally, $\psi_\rho(\theta)\in\partial J_\rho(\theta)$. In Section \ref{sec:phi}, we will demonstrate how we derive the expression of $\hat{\phi}_s(\theta)$.

From \eqref{eq:diff1} we can show that
\begin{align} \label{eq:40}
   V^{\pi_{\theta_1}}(s)-V^{\pi_{\theta_2}}(s) &=\frac{1}{1-\gamma(1-R)}\sum_{{s'}\in\mcs}  \left(d^{\pi_{\theta_2}}_{s}({s'}) \left(\sum_{a\in\mca} (\pi_{\theta_1}(a|{s'})-\pi_{\theta_2}(a|{s'}))Q^{\pi_{\theta_1}}({s'},a) \right) \right)\nn\\
    &\quad+\frac{\gamma R}{1-\gamma +\gamma R} \left(V^{\pi_{\theta_1}}(s_{\theta_1})-V^{\pi_{\theta_2}}(s_{\theta_2})\right)\nn\\
    &= \frac{1}{1-\gamma(1-R)}\sum_{{s'}\in\mcs}  \left(d^{\pi_{\theta_2}}_{s}({s'}) \left(\sum_{a\in\mca} (\pi_{\theta_1}(a|{s'})-\pi_{\theta_2}(a|{s'}))Q^{\pi_{\theta_1}}({s'},a) \right) \right)\nn\\
    &\quad+\frac{\gamma R}{1-\gamma +\gamma R} \left(V^{\pi_{\theta_1}}(s_{\theta_1})-V^{\pi_{\theta_1}}(s_{\theta_2})+V^{\pi_{\theta_1}}(s_{\theta_2})-V^{\pi_{\theta_2}}(s_{\theta_2})\right)\nn\\
    &\geq \frac{1}{1-\gamma(1-R)}\sum_{{s'}\in\mcs}  \left(d^{\pi_{\theta_2}}_{s}({s'}) \left(\sum_{a\in\mca} (\pi_{\theta_1}(a|{s'})-\pi_{\theta_2}(a|{s'}))Q^{\pi_{\theta_1}}({s'},a) \right) \right)\nn\\
    &\quad+\frac{\gamma R}{1-\gamma +\gamma R} \left(V^{\pi_{\theta_1}}(s_{\theta_2})-V^{\pi_{\theta_2}}(s_{\theta_2})\right),
\end{align}
where the last inequality is from $V^{\pi_{\theta_1}}(s_{\theta_1})-V^{\pi_{\theta_1}}(s_{\theta_2})\geq 0$. Set $s=s_{\theta_2}$ in the LHS of \eqref{eq:40}, we further have that
\begin{align} \label{eq:41}
   &V^{\pi_{\theta_1}}(s_{\theta_2})-V^{\pi_{\theta_2}}(s_{\theta_2}) \geq \frac{1}{  (1-\gamma)  }\sum_{{s'}\in\mcs}  \left(d^{\pi_{\theta_2}}_{s_{\theta_2}}({s'}) \left(\sum_{a\in\mca} (\pi_{\theta_1}(a|{s'})-\pi_{\theta_2}(a|{s'}))Q^{\pi_{\theta_1}}({s'},a) \right) \right).
\end{align}
Plug \eqref{eq:41} back into \eqref{eq:40}, it follows that
\begin{align}
    V^{\pi_{\theta_1}}(s)-V^{\pi_{\theta_2}}(s)&\geq \frac{1}{1-\gamma(1-R)}\sum_{{s'}\in\mcs}  \left(d^{\pi_{\theta_2}}_{s}({s'}) \left(\sum_{a\in\mca} (\pi_{\theta_1}(a|{s'})-\pi_{\theta_2}(a|{s'}))Q^{\pi_{\theta_1}}({s'},a) \right) \right)\nn\\
    &\quad+\frac{\gamma R}{ (1-\gamma+\gamma R) (1-\gamma)  }\sum_{{s'}\in\mcs}  \left(d^{\pi_{\theta_2}}_{s_{\theta_2}}({s'}) \left(\sum_{a\in\mca} (\pi_{\theta_1}(a|{s'})-\pi_{\theta_2}(a|{s'}))Q^{\pi_{\theta_1}}({s'},a) \right) \right).
\end{align}
Hence for any $\theta, \theta+h \in\Theta$, we have that 
\begin{align}
    V^{\pi_{\theta+h}}(s)-V^{\pit}(s)&\geq \frac{1}{1-\gamma(1-R)}\sum_{s'\in\mcs} d^{\pit}_s({s'})\sum_{a\in\mca} \left(\pi_{\theta+h}(a|{s'})-\pit(a|{s'}) \right)Q^{\pi_{\theta+h}}({s'},a)\nn\\
    &\quad+\frac{\gamma R}{(1-\gamma)(1-\gamma+\gamma R)}\sum_{s'\in\mcs} d^{\pit}_{s_\theta}({s'})\sum_{a\in\mca} \left(\pi_{\theta+h}(a|{s'})-\pit(a|{s'}) \right)Q^{\pi_{\theta+h}}({s'},a).
\end{align}
We then can show that
\begin{align}\label{eq:V}
     V^{\pi_{\theta+h}}(s)-V^{\pit}(s)-\langle \hat{\phi}_s(\theta), h\rangle&\geq \frac{1}{1-\gamma(1-R)}\sum_{s'\in\mcs} d^{\pit}_s({s'})\sum_{a\in\mca} \left(\pi_{\theta+h}(a|{s'})-\pit(a|{s'}) \right)Q^{\pi_{\theta+h}}({s'},a)\nn\\
    &\quad-\frac{1}{1-\gamma+\gamma R} \sum_{{s'}\in\mcs} d^{\pit}_{s}({s'}) \sum_{a\in\mca} \langle h, \nabla \pit(a|{s'}) \rangle Q^{\pit}({s'},a)\nn\\
    &\quad+\frac{\gamma R}{(1-\gamma)(1-\gamma+\gamma R)}\sum_{s'\in\mcs} d^{\pit}_{s_\theta}({s'})\sum_{a\in\mca} \left(\pi_{\theta+h}(a|{s'})-\pit(a|{s'}) \right)Q^{\pi_{\theta+h}}({s'},a)\nn\\
    &\quad - \frac{\gamma R}{1-\gamma+\gamma R}\frac{1}{1-\gamma}\sum_{{s'}\in\mcs} d^{\pit}_{\st}({s'}) \sum_{a\in\mca} \langle h, \nabla \pit(a|{s'}) \rangle Q^{\pit}({s'},a).
\end{align}
The first two terms in \eqref{eq:V} can be rewritten as follows:
\begin{align}
    &\frac{1}{1-\gamma(1-R)}\sum_{s'\in\mcs} d^{\pit}_s({s'})\sum_{a\in\mca} \left(\pi_{\theta+h}(a|{s'})-\pit(a|{s'}) \right)Q^{\pi_{\theta+h}}({s'},a)\nn\\
    &\quad-\frac{1}{1-\gamma+\gamma R} \sum_{{s'}\in\mcs} d^{\pit}_{s}({s'}) \sum_{a\in\mca} \langle h, \nabla \pit(a|{s'}) \rangle Q^{\pit}({s'},a)\nn\\
    &=\frac{1}{1-\gamma(1-R)}\sum_{s'\in\mcs} d^{\pit}_s({s'})\sum_{a\in\mca}\left( \left(\pi_{\theta+h}(a|{s'})-\pit(a|{s'}) \right)Q^{\pi_{\theta+h}}({s'},a)-\langle h, \nabla \pit(a|{s'}) \rangle Q^{\pit}({s'},a)\right).
\end{align}
Moreover note that
\begin{align}\label{eq:term1}
    &\left(\pi_{\theta+h}(a|{s'})-\pit(a|{s'}) \right)Q^{\pi_{\theta+h}}({s'},a)-\langle h, \nabla \pit(a|{s'}) \rangle Q^{\pit}({s'},a)\nn\\
    &=\left(\pi_{\theta+h}(a|{s'})-\pit(a|{s'}) \right)Q^{\pi_{\theta+h}}({s'},a)-\langle h, \nabla \pit(a|{s'}) \rangle Q^{\pi_{\theta+h}}({s'},a)\nn\\
    &\quad +\langle h, \nabla \pit(a|{s'}) \rangle Q^{\pi_{\theta+h}}({s'},a)-\langle h, \nabla \pit(a|{s'}) \rangle Q^{\pit}({s'},a).
\end{align}
Since $\pit$ is differentiable, hence
\begin{align}
    \lim_{\|h\|\to 0} \frac{\pi_{\theta+h}(a|{s'})-\pit(a|{s'})-\langle h, \nabla \pit(a|{s'}) \rangle }{\|h\|}=0.
\end{align}
Because $Q^{\pit}\geq 0$ for any $\theta$, we further have that
\begin{align}\label{eq:sub1}
    \lim_{\|h\|\to 0} \frac{\left(\pi_{\theta+h}(a|{s'})-\pit(a|{s'}) \right)Q^{\pi_{\theta+h}}({s'},a)-\langle h, \nabla \pit(a|{s'}) \rangle Q^{\pi_{\theta+h}}({s'},a)}{\|h\|}\geq 0.
\end{align}
For the remaining term in \eqref{eq:term1}, note that $\lim_{h\to 0} Q^{\pi_{\theta+h}}({s'},a)-Q^{\pit}({s'},a)=0$ as $Q^{\pit}$ is Lipschitz in $\theta$, thus 
\begin{align}\label{eq:sub2}
    &\lim_{\|h\|\to 0} \frac{\langle h, \nabla \pit(a|{s'}) (Q^{\pi_{\theta+h}}({s'},a)-Q^{\pit}({s'},a)) \rangle}{\|h\|}\nn\\
    &=\lim_{\|h\|\to 0} {\left\langle \frac{h}{\|h\|}, \nabla \pit(a|{s'}) (Q^{\pi_{\theta+h}}({s'},a)-Q^{\pit}({s'},a)) \right\rangle }\nn\\
    &=\lim_{\|h\|\to 0} \langle e_h, \nabla \pit(a|{s'}) (Q^{\pi_{\theta+h}}({s'},a)-Q^{\pit}({s'},a)) \rangle,
\end{align}
where $e_h$ is the normalized vector of $h$, and $\nabla \pit(a|{s'}) (Q^{\pi_{\theta+h}}({s'},a)-Q^{\pit}({s'},a)) $ is a vector of  dimension $|\mcs|\times|\mca|$. 

Clearly, $\lim_{\|h\|\to 0} \langle e_h, \nabla \pit(a|{s'}) (Q^{\pi_{\theta+h}}({s'},a)-Q^{\pit}({s'},a)) \rangle= 0$, which is also due to the Lipschitz of  $Q^{\pit}$. 
Hence combining all these inequalities in \eqref{eq:sub1} and \eqref{eq:sub2}, we have that
\begin{align}
    \lim_{\|h\|\to 0}\inf_{h\neq 0} \frac{\sum_{s'\in\mcs} d^{\pit}_s({s'}) \sum_{a\in\mca}\left( \left(\pi_{\theta+h}(a|{s'})-\pit(a|{s'}) \right)Q^{\pi_{\theta+h}}({s'},a)-\langle h, \nabla \pit(a|{s'}) \rangle Q^{\pit}({s'},a)\right)}{\|h\|}\geq 0. 
\end{align}
Similarly, we can also show that for the remaining terms in \eqref{eq:V},
\begin{align}
    \lim_{\|h\|\to 0}\inf_{h\neq 0} \frac{\sum_{s'\in\mcs} d^{\pit}_{\st}({s'}) \sum_{a\in\mca}\left( \left(\pi_{\theta+h}(a|{s'})-\pit(a|{s'}) \right)Q^{\pi_{\theta+h}}({s'},a)-\langle h, \nabla \pit(a|{s'}) \rangle Q^{\pit}({s'},a)\right)}{\|h\|}\geq 0. 
\end{align}
Hence 
\begin{align}
    \lim_{\|h\|\to 0}\inf_{h\neq 0} \frac{V^{\pi_{\theta+h}}(s)-V^{\pit}(s)-\langle \hat{\phi}_s(\theta), h\rangle}{\|h\|}\geq 0,
\end{align}
and this implies that $\hat{\phi}_s(\theta)\in\partial V^{\pit}(s)$ for any $\theta$. 

Now we consider $J_\rho$. From the definition, $J_\rho(\theta)=\sum_{{s}\in\mcs} \rho({s})V^{\pit}({s})$. Hence we have that
\begin{align}
    \partial J_\rho(\theta)& \supseteq \sum_{{s}\in\mcs} \rho({s}) \partial V^{\pit}({s})\nn\\
    &\ni \frac{\gamma R}{1-\gamma+\gamma R}\frac{1}{1-\gamma}\sum_{{s'}\in\mcs} d^{\pit}_{\st}({s'}) \sum_{a\in\mca} \nabla \pit(a|{s'}) Q^{\pit}({s'},a)\nn\\
    &\quad+\frac{1}{1-\gamma+\gamma R}\sum_{s\in\mcs} \rho(s) \sum_{{s'}\in\mcs} d^{\pit}_{s}({s'}) \sum_{a\in\mca} \nabla \pit(a|{s'}) Q^{\pit}({s'},a)\nn\\
    &=\frac{\gamma R}{(1-\gamma)(1-\gamma+\gamma R)}\sum_{{s'}\in\mcs} d^{\pit}_{\st}({s'}) \sum_{a\in\mca} \nabla \pit(a|{s'}) Q^{\pit}({s'},a)\nn\\
    &\quad+\frac{1}{1-\gamma+\gamma R} \sum_{{s'}\in\mcs} d^{\pit}_{\rho}({s'}) \sum_{a\in\mca} \nabla \pit(a|{s'}) Q^{\pit}({s'},a)\nn\\
    &\triangleq \psi_\rho(\theta),
\end{align}
which implies  $\psi_\rho(\theta)\in\partial J_\rho(\theta)$.

\subsubsection{Derivation of The Sub-gradient $\hat{\phi}_s(\theta)$}\label{sec:phi}
In this section we show how we derive the expression of the sub-gradient $\hat{\phi}_s$. 

Recall that $\max_s V^{\pit}(s)$ is Lipschitz in $\theta$ as shown in Section \ref{sec:lip}. From the Rademacher's theorem \cite{federer2014geometric}, we know that $\max_s V^{\pit}(s)$ is differentiable almost everywhere. Hence at those differentiable $\theta$, we define $\phi(\theta)=\nabla \max_s V^{\pit}(s)$, which can be also viewed as a sub-gradient of $ \max_s V^{\pit}(s)$, i.e., $\phi(\theta)\in\partial \max_s V^{\pit}(s)$. 

It is known from \cite{kruger2003frechet} that $  \partial(f)+\partial (g)  \subseteq \partial (f+g)$, hence we have that for any $s\in\mcs$,
\begin{align}\label{eq:54}
     \partial V^{\pit}(s)& \supseteq \sum_{a\in\mca} \nabla \pi_{\theta}(a|s) Q^{\pit}(s,a)+\sum_{a\in\mca} \pi_{\theta}(a|s) \partial Q^{\pit}(s,a)\nn\\
     & \supseteq \sum_{a\in\mca} \nabla \pi_{\theta}(a|s) Q^{\pit}(s,a)+\sum_{a\in\mca} \pi_{\theta}(a|s) \partial \left(c(s,a)+\gamma (1-R)\sum_{s'\in\mcs} p^a_{s,{s'}} V^{\pit}({s'})+\gamma R \max_s V^{\pi_{\theta}}(s)  \right)\nn\\
     & \supseteq\sum_{a\in\mca} \nabla \pi_{\theta}(a|s) Q^{\pit}(s,a)+\sum_{a\in\mca} \pi_{\theta}(a|s)  \left(\gamma (1-R)\sum_{s'\in\mcs} p^a_{s,{s'}} \partial V^{\pit}({s'})+\gamma R \phi(\theta)  \right)\nn\\
     &=\sum_{a\in\mca} \nabla \pi_{\theta}(a|s) Q^{\pit}(s,a)+\sum_{a\in\mca} \pi_{\theta}(a|s)  \left(\gamma (1-R)\sum_{s'\in\mcs} p^a_{s,{s'}} \partial V^{\pit}({s'})\right)+\gamma R \phi(\theta)\nn\\
     &=\gamma R \phi(\theta)+\sum_{a\in\mca} \nabla \pi_{\theta}(a|s) Q^{\pit}(s,a)+\gamma (1-R)\sum_{s'\in\mcs} \mathbb{P}(S_1={s'}|S_0=s,\pit) \partial V^{\pit}({s'}).
\end{align}
Recursively applying \eqref{eq:54}, we have that 
\begin{align}\label{eq:subb1}
     \partial V^{\pit}(s)& \supseteq \gamma R \phi(\theta)+\sum_{a\in\mca} \nabla \pi_{\theta}(a|s) Q^{\pit}(s,a)+\gamma (1-R)\sum_{s'\in\mcs} \mathbb{P}(S_1={s'}|S_0=s,\pit) \partial V^{\pit}({s'})\nn\\
     & \supseteq\gamma R \phi(\theta)+\sum_{a\in\mca} \nabla \pi_{\theta}(a|s) Q^{\pit}(s,a)\nn\\
     &\quad+\gamma (1-R)\sum_{s'\in\mcs}\mathbb{P}(S_1={s'}|S_0=s,\pit) \left(\sum_{a\in\mca} \nabla \pit(a|{s'})Q^{\pit}({s'},a)+\sum_{a\in\mca} \pit(a|{s'}) \partial Q^{\pit}({s'},a) \right)\nn\\
     & \supseteq\gamma R \phi(\theta)+\sum_{a\in\mca} \nabla \pi_{\theta}(a|s) Q^{\pit}(s,a) +\gamma (1-R)\sum_{s'\in\mcs}\mathbb{P}(S_1={s'}|S_0=s,\pit)\sum_{a\in\mca} \nabla \pit(a|{s'})Q^{\pit}({s'},a)\nn\\
     &\quad+\gamma(1-R)\sum_{s'\in\mcs}\mathbb{P}(S_1={s'}|S_0=s,\pit)\sum_{a\in\mca}\pit(a|{s'}) \partial Q^{\pit}({s'},a)\nn\\
     & \supseteq\gamma R \phi(\theta)+\sum_{a\in\mca} \nabla \pi_{\theta}(a|s) Q^{\pit}(s,a) +\gamma (1-R)\sum_{s'\in\mcs}\mathbb{P}(S_1={s'}|S_0=s,\pit)\sum_{a\in\mca} \nabla \pit(a|{s'})Q^{\pit}({s'},a)\nn\\
     &\quad+\gamma(1-R)\sum_{s'\in\mcs}\mathbb{P}(S_1={s'}|S_0=s,\pit)\sum_{a\in\mca}\pit(a|{s'}) \left(\gamma(1-R)\sum_{s''\in\mcs}p^a_{{s'},{s''}} \partial V^{\pit}({s''}) +\gamma R \phi(\theta) \right)\nn\\
     &=\gamma R \phi(\theta)+\sum_{a\in\mca} \nabla \pi_{\theta}(a|s) Q^{\pit}(s,a) +\gamma (1-R)\sum_{s'\in\mcs}\mathbb{P}(S_1={s'}|S_0=s,\pit)\sum_{a\in\mca} \nabla \pit(a|{s'})Q^{\pit}({s'},a)\nn\\
     &\quad+\gamma^2(1-R)^2\sum_{s''\in\mcs}\mathbb{P}(S_2={s''}|S_0=s,\pit) \partial V^{\pit}({s''}) +\gamma^2 R (1-R) \phi(\theta)\nn\\
     & \supseteq ...\nn\\
     &\ni \gamma R \left(\sum_{k=0}^\infty \gamma^k(1-R)^k\right) \phi(\theta)+\sum_{k}\left(\gamma^k(1-R)^k \sum_{{s'}\in\mcs} \mathbb{P}(S_k={s'}|S_0=s,\pit) \sum_{a\in\mca} \nabla \pit(a|{s'}) Q^{\pit}({s'},a) \right) \nn\\
     &=\frac{\gamma R}{1-\gamma+\gamma R} \phi(\theta)+\frac{1}{1-\gamma+\gamma R} \sum_{{s'}\in\mcs} d^{\pit}_{s}({s'}) \sum_{a\in\mca} \nabla \pit(a|{s'}) Q^{\pit}({s'},a).
\end{align}

From Lemma \ref{lemma:sub_max}, we have that $\left\{\partial V^\theta(s)|_{s=\st} : \st\in \arg\max_s V^{\pit}(s)\right\}  \subseteq \partial \max V^\theta(s)$. Set $s=\st$ in \eqref{eq:subb1}, we have that 
\begin{align} 
    \partial \max_s V^{\pit}(s)=\left\{ \phi(\theta)\right\} \supseteq\partial V^{\pit}|_{\st} &\ni\frac{\gamma R}{1-\gamma+\gamma R} \phi(\theta)+\frac{1}{1-\gamma+\gamma R} \sum_{{s'}\in\mcs} d^{\pit}_{\st}({s'}) \sum_{a\in\mca} \nabla \pit(a|{s'}) Q^{\pit}({s'},a)\nn\\
   &\triangleq c_1 \phi(\theta)+ c_2,
\end{align}
where $c_1=\frac{\gamma R}{1-\gamma+\gamma R}$ and $c_2=\frac{1}{1-\gamma+\gamma R} \sum_{{s'}\in\mcs} d^{\pit}_{\st}({s'}) \sum_{a\in\mca} \nabla \pit(a|{s'}) Q^{\pit}({s'},a)$.
Hence we have that 
\begin{align}
    \phi(\theta)=c_1\phi(\theta)+c_2,
\end{align}
and 
\begin{align}
    \phi(\theta)=\frac{c_2}{1-c_1}=\frac{1}{1-\gamma}\sum_{{s'}\in\mcs} d^{\pit}_{\st}({s'}) \sum_{a\in\mca} \nabla \pit(a|{s'}) Q^{\pit}({s'},a).
\end{align}

Hence we get an explicit expression of  the gradient of $\max_s V^{\pi_{\theta}}(s)$. We then plug it in \eqref{eq:subb1}, we further have that
\begin{align}\label{eq:59}
    &\frac{\gamma R}{1-\gamma+\gamma R}\frac{1}{1-\gamma}\sum_{{s'}\in\mcs} d^{\pit}_{\st}({s'}) \sum_{a\in\mca} \nabla \pit(a|{s'}) Q^{\pit}({s'},a) +\frac{1}{1-\gamma+\gamma R} \sum_{{s'}\in\mcs} d^{\pit}_{s}({s'}) \sum_{a\in\mca} \nabla \pit(a|{s'}) Q^{\pit}({s'},a)\nn\\
    &\in\partial V^{\pit}(s),
\end{align}
for any $\theta$ such that $\max_s V^{\pit}(s)$ is differentiable. 

 As we showed in the last section, at any non-differentiable $\theta$, \eqref{eq:59} is also a sub-gradient of $ V^{\pit}(s)$.
  
\subsection{Proof of Theorem \ref{thm:PL}: Global Optimality: PL-Condition under Direct Parametrization}\label{sec:PL}
In this section, we show that  the sub-gradient in Theorem \ref{coro:subJ} satisfies the PL-condition under the direct parametrization. 
 
Note that each entry of $\psi_\mu(\theta)$ can be written as 
\begin{align} 
    \psi_\mu(\theta)_{s,b}&=\frac{\gamma R}{(1-\gamma)(1-\gamma+\gamma R)}\sum_{s'\in\mcs} d^{\pit}_{\st}(s') \sum_{a\in\mca} (\nabla \pit(a|s'))_{s,b} Q^{\pit}(s',a)\nn\\
    &\quad +\frac{1}{1-\gamma+\gamma R} \sum_{s'\in\mcs} d^{\pit}_{\mu}(s') \sum_{a\in\mca} (\nabla \pit(a|s'))_{s,b} Q^{\pit}(s',a).
\end{align}
When $\pit$ is directly parameterized, i.e., $\pit(a|s)=\theta_{s,a}$, $(\nabla \pit(a|s'))_{s,b}=\mathbbm{1}_{(s,b)=(s',a)}$. Hence, $\psi(\theta)_{s,b}$ for any $s\in\mcs,b\in\mca$ can be further written as
\begin{align}\label{eq:subgrad}
    \psi_\mu(\theta)_{s,b}&=\frac{\gamma R}{(1-\gamma)(1-\gamma+\gamma R)}  d^{\pit}_{\st}(s) Q^{\pit}(s,b) +\frac{1}{1-\gamma+\gamma R}  d^{\pit}_{\mu}(s)  Q^{\pit}(s,b).
\end{align}
Similar to \eqref{eq:difV1} to \eqref{eq:difV3} without taking the absolute value, we have that
\begin{align} 
    &V^{\pi_{\theta_1}}(s)-V^{\pi_{\theta_2}}(s)\nn\\
    &\leq \frac{1}{1-\gamma(1-R)}\sum_{s'\in\mcs}  \left(d^{\pi_{\theta_2}}_{s}(s') \left(\sum_{a\in\mca} (\pi_{\theta_1}(a|s')-\pi_{\theta_2}(a|s'))Q^{\pi_{\theta_1}}(s',a) \right) \right)\nn\\
    &\quad+\frac{\gamma R}{(1-\gamma)(1-\gamma+\gamma R)}\sum_{s'\in\mcs}  \left(d^{\pi_{\theta_2}}_{s_{\theta_1}}(s') \left(\sum_{a\in\mca} (\pi_{\theta_1}(a|s')-\pi_{\theta_2}(a|s'))Q^{\pi_{\theta_1}}(s',a) \right) \right).
\end{align}
Set $\theta_1=\theta$ and $\theta_2=\theta^*$, where $\theta^*\in\arg\min_{(\Delta(\mca))^{|\mcs|}} J_\rho(\theta)$, then we have
\begin{align} 
    &V^{\pit}(s)-V^{\pi_{\theta^*}}(s)\nn\\
    &\leq \frac{1}{1-\gamma(1-R)}\sum_{s'\in\mcs}  \left(d^{\pi_{\theta*}}_{s}(s') \left(\sum_{a\in\mca} (\pit(a|s')-\pi_{\theta*}(a|s'))Q^{\pit}(s',a) \right) \right)\nn\\
    &\quad+\frac{\gamma R}{(1-\gamma)(1-\gamma+\gamma R)}\sum_{s'\in\mcs}  \left(d^{\pi_{\theta*}}_{s_{\theta}}(s') \left(\sum_{a\in\mca} (\pit(a|s')-\pi_{\theta*}(a|s'))Q^{\pit}(s',a) \right) \right).
\end{align}
Note that $J_\rho(\theta)=\sum_{s\in\mcs} \rho(s)V^{\pit}(s)$, hence
\begin{align}\label{eq:PL1}
    J_\rho(\theta)-J^*&=\sum_{s\in\mcs} \rho(s)(V^{\pit}(s)-V^{\pi_{\theta^*}}(s))\nn\\
    &\leq \sum_{s\in\mcs} \rho(s) \Bigg(\frac{1}{1-\gamma(1-R)}\sum_{s'\in\mcs}  \left(d^{\pi_{\theta*}}_{s}(s') \left(\sum_{a\in\mca} (\pit(a|s')-\pi_{\theta*}(a|s'))Q^{\pit}(s',a) \right) \right)\nn\\
    &\quad+\frac{\gamma R}{(1-\gamma)(1-\gamma+\gamma R)}\sum_{s'\in\mcs}  \left(d^{\pi_{\theta*}}_{s_\theta}(s') \left(\sum_{a\in\mca} (\pit(a|s')-\pi_{\theta*}(a|s'))Q^{\pit}(s',a) \right) \right) \Bigg)\nn\\
    &=\sum_{s'\in\mcs}\left(\frac{1}{1-\gamma(1-R)} d^{\pi_{\theta*}}_{\rho}(s')+ \frac{\gamma R}{(1-\gamma)(1-\gamma+\gamma R)}d^{\pi_{\theta*}}_{s_\theta}(s')\right)\left(\sum_{a\in\mca} (\pit(a|s')-\pi_{\theta*}(a|s'))Q^{\pit}(s',a) \right)\nn\\
    &=\sum_{s'\in\mcs}\frac{\frac{1}{1-\gamma(1-R)} d^{\pi_{\theta*}}_{\rho}(s')+ \frac{\gamma R}{(1-\gamma)(1-\gamma+\gamma R)}d^{\pi_{\theta*}}_{s_\theta}(s')}{\frac{1}{1-\gamma+\gamma R}d^{\pit}_{\mu}(s')+\frac{\gamma R}{(1-\gamma)(1-\gamma+\gamma R)}d^{\pit}_{\st}(s') }\left(\frac{1}{1-\gamma+\gamma R}d^{\pit}_{\mu}(s')+\frac{\gamma R}{(1-\gamma)(1-\gamma+\gamma R)}d^{\pit}_{\st}(s')  \right)\nn\\
    &\quad \cdot\left(\sum_{a\in\mca} (\pit(a|s')-\pi_{\theta*}(a|s'))Q^{\pit}(s',a) \right)\nn\\
    &= \sum_{s'\in\mcs} \frac{l^*_\theta(s')}{l_\theta(s')} l_\theta(s') \cdot\left(\sum_{a\in\mca} (\pit(a|s')-\pi_{\theta*}(a|s'))Q^{\pit}(s',a) \right),
\end{align}
where $l_\theta(s)\triangleq \left(\frac{1}{1-\gamma+\gamma R}d^{\pit}_{\mu}(s)+\frac{\gamma R}{(1-\gamma)(1-\gamma+\gamma R)}d^{\pit}_{\st}(s)  \right)$ and $l^*_\theta(s)\triangleq \frac{1}{1-\gamma(1-R)} d^{\pi_{\theta*}}_{\rho}(s)+ \frac{\gamma R}{(1-\gamma)(1-\gamma+\gamma R)}d^{\pi_{\theta*}}_{s_\theta}(s)$. Recall \eqref{eq:subgrad}, then $\psi_\mu(\theta)_{s,b}=l_\theta(s)Q^{\pit}(s,b)$.

The ratio of distribution $\frac{l_\theta^*(s)}{l_\theta(s)}$ can be bounded as follows. Note that 
\begin{align}
    \frac{1}{1-\gamma+\gamma R}d^{\pit}_{\mu}(s)+\frac{\gamma R}{(1-\gamma)(1-\gamma+\gamma R)}d^{\pit}_{\st}(s)&\geq \frac{1}{1-\gamma+\gamma R}d^{\pit}_{\mu}(s)\nn\\
    &\geq \frac{1}{1-\gamma+\gamma R}(1-\gamma+\gamma R) \mu(s)\nn\\
    &\geq \mu_{\min},
\end{align}
which is from $d^{\pit}_{\mu}(s)\geq (1-\gamma+\gamma R) \mu(s)$. Hence the ratio can be bounded as
\begin{align}
    \frac{\frac{1}{1-\gamma(1-R)} d^{\pi_{\theta*}}_{\rho}(s)+ \frac{\gamma R}{(1-\gamma)(1-\gamma+\gamma R)}d^{\pi_{\theta*}}_{s_\theta}(s)}{\frac{1}{1-\gamma+\gamma R}d^{\pit}_{\mu}(s)+\frac{\gamma R}{(1-\gamma)(1-\gamma+\gamma R)}d^{\pit}_{\st}(s) }\leq \frac{1}{(1-\gamma)\mu_{\min}}\triangleq C_{PL}.
\end{align}
Note that  \eqref{eq:PL1} can be further bounded as 
\begin{align}\label{eq:PL2}
    &\sum_{s'\in\mcs} \frac{l^*_\theta(s')}{l_\theta(s')} l_\theta(s') \cdot\left(\sum_{a\in\mca} (\pit(a|s')-\pi_{\theta*}(a|s'))Q^{\pit}(s',a) \right)\nn\\
    &=\sum_{s'\in\mcs} l^*_\theta(s')\left(\sum_{a\in\mca} (\pit(a|s')-\pi_{\theta*}(a|s'))Q^{\pit}(s',a) \right)\nn\\
    &=\sum_{s'\in\mcs} l^*_\theta(s') \langle \pi_\theta(\cdot|s')-\pi_{\theta*}(\cdot|s'), Q^{\pit}(s',\cdot)\rangle\nn\\
    &\leq \sum_{s'\in\mcs} l^*_\theta(s') \max_{\Bar{\pi}_{s'}(\cdot|s')\in \Delta(\mca)} \langle \pi_\theta(\cdot|s')-\Bar{\pi}_{s'}(\cdot|s'), Q^{\pit}(s',\cdot)\rangle,
\end{align}
where $\pi(\cdot|s')\in\Delta(\mca)$ and $Q^{\pit}(s',\cdot)=(Q^{\pit}(s',a_1),Q^{\pit}(s',a_2),...,Q^{\pit}(s',a_{|\mca|}))\in\mathbb{R}^{|\mca|}$. Note that 
\begin{align}
    \max_{\Bar{\pi}_{s'}(\cdot|s')\in \Delta(\mca)} \langle \pi_\theta(\cdot|s')-\Bar{\pi}_{s'}(\cdot|s'), Q^{\pit}(s',\cdot)\rangle\geq 0,
\end{align}
which is because $\max_{\Bar{\pi}_{s'}(\cdot|s')\in \Delta(\mca)} \langle \pi_\theta(\cdot|s')-\Bar{\pi}_{s'}(\cdot|s'), Q^{\pit}(s',\cdot)\rangle\geq \langle \pi_\theta(\cdot|s')-\pi_\theta(\cdot|s'), Q^{\pit}(s',\cdot)\rangle=0$. Hence \eqref{eq:PL2} can be further bounded as
\begin{align}
     &\sum_{s'\in\mcs} \frac{l^*_\theta(s')}{l_\theta(s')} l_\theta(s') \cdot\left(\sum_{a\in\mca} (\pit(a|s')-\pi_{\theta*}(a|s'))Q^{\pit}(s',a) \right)\nn\\
    &\leq \sum_{s'\in\mcs} \frac{l^*_\theta(s')}{l_\theta(s')} l_\theta(s') \max_{\Bar{\pi}_{s}'(\cdot|s')\in \Delta(\mca)} \langle \pi_\theta(\cdot|s')-\Bar{\pi}_{s'}(\cdot|s'),
    Q^{\pit}(s',\cdot)\rangle\nn\\
    &\leq C_{PL} \sum_{s'\in\mcs} l_\theta(s') \max_{\Bar{\pi}_{s'}(\cdot|s')\in \Delta(\mca)} \langle \pi_\theta(\cdot|s')-\Bar{\pi}_{s'}(\cdot|s'),
    Q^{\pit}(s',\cdot)\rangle.
\end{align}
If we denote $\pi_{s'}(\cdot|{s'})\triangleq \arg\max_{\Bar{\pi}_{s'}(\cdot|{s'})\in \Delta(\mca)}\langle \pi_\theta(\cdot|s')-\Bar{\pi}_{s'}(\cdot|s'),
    Q^{\pit}(s',\cdot)\rangle$, then 
\begin{align}
    &\sum_{{s'}\in\mcs} l_\theta({s'}) \max_{\Bar{\pi}_{s'}(\cdot|{s'})\in \Delta(\mca)} \langle \pi_\theta(\cdot|{s'})-\Bar{\pi}_{s'}(\cdot|{s'}), Q^{\pit}({s'},\cdot)\rangle\nn\\
    &=\sum_{{s'}\in\mcs} l_\theta({s'}) \langle \pi_\theta(\cdot|{s'})-{\pi}_{s'}(\cdot|{s'}), Q^{\pit}({s'},\cdot)\rangle\nn\\
    &=\sum_{{s'}\in\mcs} l_\theta({s'}) \sum_{a\in\mca} (\pi_\theta(a|{s'})-{\pi}_{s'}(a|{s'}))Q^{\pit}({s'},a)\nn\\
    &=\sum_{{s'}\in\mcs}  \sum_{a\in\mca} (\pi_\theta(a|{s'})-{\pi}_{s'}(a|{s'}))l_\theta({s'})Q^{\pit}({s'},a)\nn\\
    &=\left\langle \pi_\theta-\Bar{\pi}, l_\theta(\cdot)Q^{\pit}(\cdot,\cdot)  \right\rangle,
\end{align}
where $\pi_\theta=\left(\pi_\theta(\cdot|s_1),...,\pi_\theta(\cdot|s_{|\mcs|}) \right)^\top\in(\Delta(\mca))^{|\mcs|}$, $\Bar{\pi}=\left(\pi_{s_1}(\cdot|s_1),...,\pi_{s_{|\mcs|}}(\cdot|s_{|\mcs|}) \right)^\top\in(\Delta(\mca))^{|\mcs|}$,  $l_\theta(\cdot)Q^{\pit}(\cdot,\cdot)\in \mathbb{R}^{|\mcs|\times|\mca|}$ and $\left(l_\theta(\cdot)Q^{\pit}(\cdot,\cdot)\right)_{{s'},a}=l_\theta({s'})Q^{\pit}({s'},a)$. Clearly, we have that 
\begin{align}
    \left\langle \pi_\theta-\Bar{\pi}, l_\theta(\cdot)Q^{\pit}(\cdot,\cdot)  \right\rangle \leq \max_{\hat{\pi}\in (\Delta(\mca))^{|\mcs|}} \left\langle \pi_\theta-\hat{\pi}, l_\theta(\cdot)Q^{\pit}(\cdot,\cdot)  \right\rangle.
\end{align}
Note that $\psi_\mu(\theta)_{s,b}=l_\theta(s)Q^{\pit}(s,b)$, hence combining all the inequalities above implies that
\begin{align}
    J_\rho(\theta)-J_\rho^*\leq C_{PL} \max_{\hat{\pi}\in (\Delta(\mca))^{|\mcs|}} \left\langle \pi_\theta-\hat{\pi}, \psi_\mu(\theta)  \right\rangle,
\end{align}
which completes the proof.
 
\subsection{Proof of Theorem \ref{thm:conv}: Convergence and Global Optimality}\label{sec:glo}
The theorem can be proved following the standard results of stochastic differential inclusion in \cite{ruszczynski2020convergence,majewski2018analysis,borkar2009stochastic,borkar2000ode}. 
\begin{lemma}(Theorem 4.1 in \cite{majewski2018analysis})\label{lemma:sub_conv}
If the sequence $\left\{\theta_t \right\}$ is generated by a stochastic algorithm $\theta_{t+1}\leftarrow \mathbf{ \prod}_K (\theta_t-\alpha_t Y_t)$, where $Y_t\in F(\theta_t)$ for some set-value function $F$, where  $F(\theta)=-\partial_C f(\theta)$ for some function $f$, where $\partial_C$ denotes the Clarke sub-gradient \cite{clarke1990optimization}. Denote the normal cone $N_K(\theta)\triangleq \left\{g\in\mathbb{R}^d: \langle g,\theta'-\theta\rangle\leq 0 \text{ for any }\theta'\in K \right\}$.
If 

(1). the set-valued map $\theta\mapsto F(\theta)$ is upper hemicontinuous, convex-compact valued and locally bounded;

(2). the step-sizes satisfy $\alpha_t>0$, $\sum^{\infty}_t \alpha_t=\infty$ and $\sum^{\infty}_t \alpha_t^2<\infty$;

(3). denote the stationary points $\mathcal{X}^*\triangleq \left\{ x: 0\in\partial_C f(x)+N_K(x) \right\}$, and  $f(\mathcal{X}^*)$ has no interior;

then  $\theta_t$ convergence to $\mathcal{X}^*$ almost surely.
\end{lemma}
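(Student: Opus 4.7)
The plan is to apply the stochastic approximation framework for differential inclusions of Benaïm-Hofbauer-Sorin, adapted to projected dynamics on $K$. First I would construct the continuous-time affine interpolation $\bar\theta(\cdot)$ of the iterates: set $\tau_n = \sum_{k=0}^{n-1}\alpha_k$, assign $\bar\theta(\tau_n) = \theta_n$, and interpolate linearly between consecutive nodes. Boundedness of $\{\theta_t\}$ is immediate from the projection onto $K$, which together with the local boundedness hypothesis in (1) keeps $\{Y_t\}$ uniformly bounded along the trajectory.

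Next I would show that $\bar\theta$ is an asymptotic pseudo-trajectory (APT) of the projected differential inclusion $\dot\theta(t)\in -F(\theta(t)) - N_K(\theta(t))$. The three ingredients are: (i) the condition $\sum_t \alpha_t^2 < \infty$, which kills any martingale-difference noise carried by the selection $Y_t$ via Doob's maximal inequality; (ii) upper hemicontinuity with convex-compact values of $F$, which allows one to pass to the limit along any convergent subsequence of the time-shifted interpolates $\bar\theta(\tau_n + \cdot)$ and extract an absolutely continuous curve solving the inclusion; and (iii) a projection lemma identifying the residual $\alpha_n^{-1}\bigl(\mathbf{\prod}_K(\theta_n - \alpha_n Y_n) - (\theta_n - \alpha_n Y_n)\bigr)$ as, in the limit, an element of $N_K(\theta_n)$. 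Together these realize $\bar\theta$ as a perturbed Euler discretization of the projected inclusion with vanishing perturbation, so the standard APT machinery applies.

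The third step exploits the Lyapunov structure induced by $F = -\partial_C f$. Along any absolutely continuous solution $\theta(\cdot)$ of the projected inclusion, the chain rule for Clarke subgradients together with the defining property of $N_K$ yields $\tfrac{d}{dt}f(\theta(t)) \le 0$, with equality if and only if $0 \in \partial_C f(\theta(t)) + N_K(\theta(t))$, i.e.\ $\theta(t) \in \mathcal{X}^*$. Thus $f$ is a strict Lyapunov function for the projected flow whose equilibrium set is exactly $\mathcal{X}^*$. By Benaïm's limit-set theorem, the limit set $L(\bar\theta)$ is internally chain transitive; the no-interior hypothesis on $f(\mathcal{X}^*)$ then rules out any chain-transitive set that is not contained in a level set of $f$, and the standard Lyapunov-function reduction forces $L(\bar\theta) \subseteq \mathcal{X}^*$, giving $\theta_t \to \mathcal{X}^*$ almost surely.

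The main obstacle is step two: arguing that the nonexpansive map $\mathbf{\prod}_K$ contributes only a normal-cone correction in the limit, so that the discrete recursion genuinely tracks the projected inclusion rather than the raw one. This requires a careful case analysis on whether the iterate sits strictly inside $K$ (where the projection is inactive) or on its boundary (where the residual aligns with $N_K$), along with a uniform $o(1)$ bound on the normalized projection error. A secondary subtlety is justifying the Lyapunov inequality: the Clarke chain rule $\tfrac{d}{dt}f(\theta(t)) \in \langle \partial_C f(\theta(t)), \dot\theta(t)\rangle$ needs regularity of $f$ along the trajectory (e.g.\ path-differentiability or local Lipschitzness, which in the application to $J_\mu$ is supplied by Lemma~\ref{thm:lip}).
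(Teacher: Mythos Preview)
The paper does not actually prove this lemma: it is stated verbatim as ``Theorem 4.1 in \cite{majewski2018analysis}'' and invoked as a black-box result from the stochastic differential inclusion literature. So there is no paper-side proof to compare against; the authors simply cite the result and then verify its hypotheses for $J_\mu$ in the surrounding text.

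Your sketch is a faithful outline of how that cited result (and its relatives in Bena\"im--Hofbauer--Sorin) is established: interpolate, show the interpolant is an asymptotic pseudo-trajectory of the projected differential inclusion, use $f$ as a Lyapunov function, and conclude via the internally chain-transitive limit-set theorem together with the no-interior assumption on $f(\mathcal{X}^*)$. That is exactly the machinery underlying the reference, so your approach is correct and standard. One small correction: in your step (i) you invoke a martingale-difference argument, but in the lemma as stated there is no stochastic noise at all---$Y_t$ is simply a measurable selection from $F(\theta_t)$, not $F(\theta_t)$ plus a martingale increment---so the $\sum_t\alpha_t^2<\infty$ condition here is used only to control the discretization error of the Euler scheme, not to kill noise. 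Otherwise the outline is sound, and in fact more detailed than what the paper itself supplies.
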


Note that  according to \cite{clarke1990optimization,bolte2007clarke}, $\partial J_\mu({\theta}) \subseteq\partial_C J_\mu({\theta})$, hence a Fr\'{e}chet sub-gradient is also a Clarke sub-gradient, and our robust policy gradient method can be viewed as a Clarke sub-gradient descent method. 

It is shown in Proposition 2.1.5 and Proposition 2.1.2 in \cite{clarke1990optimization} that the Clarke sub-gradient of a Lipschitz function is convex-compact valued, locally bounded, and lower hemicontinuous. Hence the map $-\partial J_C^\mu(\theta)$ satisfies assumption (1). The assumption (2) is easy to satisfy, e.g., $\alpha_t=\frac{1}{t+1}$.

Before verifying Assumption (3), we first show a lemma.
\begin{lemma}\label{lemma:sta}
For any $\hat{\theta}\in\left\{\theta: 0\in\partial J_C^\mu(\theta) \right\}$, $J_\rho(\hat{\theta})=J_\rho^*$.
\end{lemma}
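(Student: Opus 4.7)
The plan is to combine the PL condition of Theorem \ref{thm:PL} with the Lipschitz regularity of $J_\mu$ established in Lemma \ref{thm:lip}. Interpreting $0\in\partial J_C^\mu(\hat{\theta})$ as the Clarke stationarity condition for the constrained minimization on $\Theta=(\Delta(\mca))^{|\mcs|}$, there exists $g\in\partial_C J_\mu(\hat{\theta})$ with $-g\in N_\Theta(\hat{\theta})$, i.e., $\langle g,\pi-\pi_{\hat{\theta}}\rangle\geq 0$ for all $\pi\in\Theta$. Equivalently, $\max_{\hat{\pi}\in\Theta}\langle \pi_{\hat{\theta}}-\hat{\pi},g\rangle\leq 0$, and since the choice $\hat{\pi}=\pi_{\hat{\theta}}$ already gives $0$, this max equals exactly $0$ for the specific $g$.

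Since $J_\mu$ is $L_V$-Lipschitz, Rademacher's theorem guarantees differentiability almost everywhere on $\Theta$, and at every such point Theorem \ref{coro:subJ} gives $\nabla J_\mu(\theta)=\psi_\mu(\theta)$. I would therefore pick a sequence $\theta_n\to\hat{\theta}$ of differentiable points and apply the PL condition at each $\theta_n$:
\begin{align*}
J_\rho(\theta_n)-J_\rho^*\leq C_{PL}\,\max_{\hat{\pi}}\langle \pi_{\theta_n}-\hat{\pi},\psi_\mu(\theta_n)\rangle.
\end{align*}
After passing to a subsequence, $\psi_\mu(\theta_n)\to g_\infty$ for some $g_\infty\in\partial_C J_\mu(\hat{\theta})$, and using Lipschitz continuity of $J_\rho$ together with continuity of the max-inner-product in its second argument,
\begin{align*}
J_\rho(\hat{\theta})-J_\rho^*\leq C_{PL}\,\max_{\hat{\pi}}\langle \pi_{\hat{\theta}}-\hat{\pi},g_\infty\rangle.
\end{align*}

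The main obstacle is that the $g$ provided by Clarke stationarity is in general not a single limit $g_\infty$ but a convex combination of such limits via Carath\'eodory's theorem; since $g\mapsto \max_{\hat{\pi}}\langle \pi_{\hat{\theta}}-\hat{\pi},g\rangle$ is convex, Jensen's inequality bounds the max at $g$ above by a convex combination of the per-limit maxes, which is the wrong direction for averaging the per-limit PL inequalities into a bound at $g$. To overcome this, I would strengthen Theorem \ref{thm:PL} so that the inequality $J_\rho(\theta)-J_\rho^*\leq C_{PL}\max_{\hat{\pi}}\langle \pi_\theta-\hat{\pi},\eta\rangle$ holds for every $\eta$ obtainable as a limit of $\psi_\mu(\theta_n)$ with $\theta_n\to\theta$, by revisiting the derivation in Appendix \ref{sec:PL} and taking limits directly in the discounted-visitation bound on $V^{\pi_\theta}-V^{\pi_{\theta^*}}$ before passing to the final inner-product form; the ratio $l^*_\theta/l_\theta$ in that derivation is uniformly bounded by $C_{PL}$, so convexity of the Clarke sub-gradient is compatible with passing to convex combinations of such $\eta$. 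Once this strengthened PL is established, plugging in the specific $g$ satisfying $\max_{\hat{\pi}}\langle \pi_{\hat{\theta}}-\hat{\pi},g\rangle=0$ yields $J_\rho(\hat{\theta})-J_\rho^*\leq 0$, and since $J_\rho^*$ is the minimum, equality follows.
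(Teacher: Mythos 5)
You correctly isolate the real difficulty: $0\in\partial_C J_\mu(\hat\theta)$ only exhibits $0$ as a convex combination $\sum_i\lambda_i g_i$ of limit gradients, and since $g\mapsto\max_{\hat{\pi}}\langle\pi_{\hat\theta}-\hat{\pi},g\rangle$ is convex in $g$, the per-limit PL bounds cannot simply be averaged. But your resolution of this difficulty is asserted rather than proved. The sentence ``convexity of the Clarke sub-gradient is compatible with passing to convex combinations of such $\eta$'' is precisely the claim at issue, and nothing you write establishes it. To make your route work you would have to return to the intermediate bound \eqref{eq:PL1}, namely $J_\rho(\theta)-J_\rho^*\leq\sum_{s'}l^*_\theta(s')\sum_{a}(\pit(a|s')-\pi_{\theta^*}(a|s'))Q^{\pit}(s',a)$, take limits along each sequence $\theta_n\to\hat\theta$ (passing to subsequences on which $s_{\theta_n}$ is constant so that $l_{\theta_n}$ and $l^*_{\theta_n}$ both converge), average this inequality --- which is affine in the visitation weights, so the averaging is legitimate --- with the Carath\'eodory weights $\lambda_i$, check that the averaged ratio $\bar{l}^*/\bar{l}$ is still bounded by $C_{PL}$ (it is, since the numerator is bounded by $1/(1-\gamma)$ and the denominator by $\mu_{\min}$ from below, uniformly), and only then introduce the max over $\hat{\pi}$. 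As written, the strengthened PL condition on which your entire argument rests is a gap.

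The paper sidesteps all of this with a structural observation you miss: under direct parameterization, \eqref{eq:subgrad} shows that every entry of $\psi_\mu(\theta)$ has the form $l_\theta(s)Q^{\pit}(s,b)\geq 0$. Hence every limit gradient $g_i$ is entrywise non-negative, and $0=\sum_i\lambda_i g_i$ with $\lambda_i\geq0$, $\sum_i\lambda_i=1$ forces $g_j=0$ for some $j$ with $\lambda_j>0$. This yields a sequence of differentiable points $\theta^t_j\to\hat\theta$ along which $\nabla J_\mu(\theta^t_j)=\psi_\mu(\theta^t_j)\to0$; the ordinary PL condition of Theorem \ref{thm:PL} at those points together with continuity of $J_\rho$ gives $J_\rho(\hat\theta)=J_\rho^*$. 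This is shorter and needs no strengthening of the PL inequality. A minor further point: you read the hypothesis as the constrained condition $0\in\partial_C J_\mu(\hat\theta)+N_\Theta(\hat\theta)$, whereas Lemma \ref{lemma:sta} assumes the stronger $0\in\partial_C J_\mu(\hat\theta)$, so your normal-cone discussion in the first paragraph is not needed for the statement as given.
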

\begin{proof}
From Rademacher's theorem \cite{federer2014geometric}, we know that $J_\mu(\theta)$ is differentiable almost everywhere. And from \cite{kruger2003frechet}, we have that if $J_\mu(\theta)$ is differentiable at $\theta$, then $\partial J_\mu(\theta)=\left\{ \nabla J_\mu(\theta) \right\}$ and hence $\psi_\mu(\theta)=\nabla J_\mu(\theta)$.  

Now consider any $\hat{\theta}\in\left\{\theta: 0\in\partial J_C^\mu(\theta) \right\}$. From \cite{clarke1990optimization,zhang2020complexity}, the Clarke sub-gradient can be viewed as a convex hull of limit gradients:
\begin{align}
    \partial_C J_\mu(\hat{\theta})=\text{conv}\left(\left\{g: \exists \theta_n\to \hat{\theta}, \text{ s.t. }\nabla J_\mu(\theta_n)\to g  \right\}\right).
\end{align} 
Denote  $S\triangleq \left\{g: \exists \theta_n\to \hat{\theta}, \text{ s.t. }\nabla J_\mu(\theta_n)\to g  \right\}$. Note that if $0\in \text{conv}(S)$, then $\exists \lambda_i\in[0,1] \text{ and } g_i\in S, i=1,...,k$, such that $\sum_i^k \lambda_i=1$ and $0=\sum_i^k \lambda_i g_i$. Note that $g_i\in S$ is a limit of a sequence of gradient $\left\{ \psi_\mu(\theta_i^n) \right\}_n$ where $\theta^n_i \to \hat{\theta}$.  However, note that from \eqref{eq:subgrad}, every entry of $\psi_\mu(\theta)$ is non-negative for any $\theta$,  and hence $g_i$ also has non-negative entries. 

This implies that  $\exists j\in\left\{ 1,...,k\right\}$, such that $g_j=0$. This further implies that there exists a sequence ${\theta}^t_j \to \hat{\theta}$, with $\psi_\mu(\theta^t_j)=\nabla J_\mu({\theta}^t_j)\to 0$, i.e., $\|\nabla J_\mu(\theta^t_j)\|\to 0$. 

From the PL-condition in Theorem \ref{thm:PL}, we know that $J_\rho(\theta^t_j)-J_\rho^*\leq C_{PL}|\mcs||\mca|\|\nabla J_\mu(\theta^t_j)\| \to 0$ and hence $J_\rho(\theta^t_j) \to J_\rho^*$. As $J_\rho(\theta)$ is a continuous function, $J_\rho(\theta^t_j)\to J_\rho(\hat{\theta})$ and hence $J_\rho(\hat{\theta})=J_\rho^*$. This means  any $\hat{\theta}\in\left\{\theta: 0\in\partial_C J_\mu(\theta) \right\}$ is a global optimal point, i.e., $J_\rho(\hat{\theta})=J_\rho^*$.
\end{proof}

We then verify the last assumption (3). First note that from the definition of Clarke sub-differential, it can be verified that $\mathcal{X}^*\subseteq \left\{\theta: 0\in\partial_C f(\theta)  \right\}$. Hence from Lemma \ref{lemma:sta}, $\mathcal{X}^*$ is a subset of global optimal points of $J_\mu$ (set $\rho=\mu$ in Lemma \ref{lemma:sta}), hence $J_\mu(\mathcal{X}^*)=\left\{ J_\mu^*\right\}$ is a singleton and  contains no interior.  

Hence the lemma \ref{lemma:sub_conv} holds for our robust policy gradient algorithm. Then, by the lemma, any sequence $\left\{\theta_t\right\}$ generated by Algorithm \ref{alg:rpgfull} converges to a stationary point a.s., i.e., $\lim_t {\theta_t}\in \left\{\theta: 0\in\partial J_C^\mu(\theta)+N_{(\Delta(\mca))^{|\mcs|}}(\theta) \right\}\subseteq \left\{\theta: 0\in\partial J_C^\mu(\theta) \right\}$ almost surely.

Note that Lemma \ref{lemma:sta} shows that any $\theta\in \left\{\theta: 0\in\partial J_C^\mu(\theta) \right\}$ is a global optimal point of $J_\rho$, and this implies that the sequence $\left\{\theta_t \right\}$ generated by the Algorithm \ref{alg:rpgfull} converges to the global optimum of $J_\rho$, i.e.,  $J_\rho(\theta_t)\to J_\rho^*$ almost surely. And hence this completes the proof of Theorem \ref{thm:conv}.

\section{Smoothed Robust Policy Gradient}
In the remaining parts, if not specified, we omit $\sigma$ in $\lse$ function, i.e., denote $\lse(V)\triangleq\lse(\sigma,V)$.

\subsection{Proof of Theorem \ref{thm:gradJsigma}: Gradient of  $J_\sigma$}
In this section we derive the gradient of $J_\sigma(\theta)\triangleq\sum_{{s'}\in\mcs} \rho({s'})V^{\pi_{\theta}}_{\sigma}({s'})$, and prove Theorem \ref{thm:gradJsigma}. 
 
From the fact that $V_\sigma^{\pit}(s)=\sum_{a\in\mca} \pit(a|s) Q^{\pit}_{\sigma}(s,a)$, we have that
\begin{align}\label{eq:grdV1}
    \nabla \sv^{\pit}(s)&=\nabla \left(\sum_{a\in\mca} \pi_\theta(a|s) \sq^{\pit}(s,a) \right)\nn\\
    &=\sum_{a\in\mca} \nabla \pit(a|s)\sq^{\pit}(s,a)+\sum_{a\in\mca} \pit(a|s)\nabla \sq^{\pit}(s,a)\nn\\
    &\overset{(a)}{=}\sum_{a\in\mca} \nabla \pit(a|s)\sq^{\pit}(s,a)+\sum_{a\in\mca} \pit(a|s)\nabla \left(c(s,a)+\gamma (1-R)\sum_{s'\in\mcs}p^a_{s,{s'}} \sv^{\pit}({s'})+\gamma R\cdot \lse(\sv^{\pit}) \right)\nn\\
    &=\sum_{a\in\mca} \nabla \pit(a|s)\sq^{\pit}(s,a)+\sum_{a\in\mca} \pit(a|s) \left(\gamma (1-R)\sum_{s'\in\mcs}p^a_{s,{s'}} \nabla\sv^{\pit}({s'})+\gamma R\cdot \nabla\lse(\sv^{\pit}) \right)\nn\\
    &=\sum_{a\in\mca} \nabla \pit(a|s)\sq^{\pit}(s,a)+\gamma R\cdot \nabla\lse(\sv^{\pit})+\sum_{a\in\mca} \pit(a|s) \left(\gamma (1-R)\sum_{s'\in\mcs}p^a_{s,{s'}} \nabla\sv^{\pit}({s'})\right) \nn\\
    &=\sum_{a\in\mca} \nabla \pit(a|s)\sq^{\pit}(s,a)+\gamma R\cdot \nabla\lse(\sv^{\pit})+\sum_{a\in\mca} \pit(a|s) \left(\gamma (1-R)\sum_{s'\in\mcs}p^a_{s,{s'}} \nabla\left(\sum_{a'\in\mca} \pi_\theta(a'|{s'}) \sq^{\pit}({s'},a') \right)\right) \nn\\
    &=\sum_{a\in\mca} \nabla \pit(a|s)\sq^{\pit}(s,a)+\gamma R\cdot \nabla\lse(\sv^{\pit})+\gamma (1-R) \sum_{a\in\mca} \pit(a|s)\sum_{s'\in\mcs}p^a_{s,{s'}}\sum_{a'}\nabla \pit(a'|{s'}) \sq^{\pit}({s'},a')\nn\\
    &\quad+\gamma (1-R) \sum_{a\in\mca} \pit(a|s)\sum_{s'\in\mcs}p^a_{s,{s'}}\sum_{a'}\pit(a'|{s'})\nabla \sq^{\pit}({s'},a')\nn\\
    &=\sum_{a\in\mca} \nabla \pit(a|s)\sq^{\pit}(s,a)+\gamma R\cdot \nabla\lse(\sv^{\pit})+\gamma (1-R)\sum_{s'\in\mcs} \mathbb{P}(S_1={s'}|S_0=s,\pit)\sum_{a'}\nabla \pit(a'|{s'}) \sq^{\pit}({s'},a')\nn\\
    &\quad+\gamma (1-R)\sum_{s'\in\mcs} \mathbb{P}(S_1={s'}|S_0=s,\pit)\sum_{a'}\pit(a'|{s'})\nabla \sq^{\pit}({s'},a')\nn\\
    &=...\nn\\
    &\overset{(b)}{=}\sum_{k=0}^\infty \gamma^k(1-R)^k \sum_{{s'}\in\mcs} \mathbb{P}(S_k={s'}|S_0=s,\pit)\sum_{a\in\mca} \nabla \pit(a|s') \sq^{\pit}({s'},a)+\frac{\gamma R}{1-\gamma+\gamma R} \nabla \lse(\sv^{\pit})\nn\\
    &=\frac{1}{1-\gamma+\gamma R} \sum_{{s'}\in\mcs} d^{\pi}_s({s'})\sum_{a\in\mca} \nabla \pit(a|s') \sq^{\pit}({s'},a)+\frac{\gamma R}{1-\gamma+\gamma R} \nabla \lse(\sv^{\pit}),
\end{align}
where $(a)$ is from the definition of $Q_\sigma^{\pit}$, and $(b)$ is to recursively apply the previous steps.

To simplify the notations, we denote $B(s,\theta)\triangleq\frac{1}{1-\gamma+\gamma R} \sum_{{s'}\in\mcs} d^{\pit}_s({s'})\sum_{a\in\mca} \nabla \pit(a|{s'}) \sq^{\pit}({s'},a)$. We then study the second term $\nabla \lse(V^{\pit}_\sigma)$ in \eqref{eq:grdV1}. Note that 
\begin{align}\label{eq:grdV2}
    \nabla \lse(V_\sigma^{\pit})&=\nabla \left(\frac{1}{\sigma}\log\left(\sum_{s\in\mcs} e^{\sigma \sv^{\pit}(s)}\right) \right)\nn\\
    &=\frac{1}{\sigma} \frac{\nabla \left(\sum_{s\in\mcs} e^{\sigma \sv^{\pit}(s)} \right)}{\sum_{s\in\mcs} e^{\sigma \sv^{\pit}(s)}}\nn\\
    &=\frac{1}{\sigma}\frac{\sum_{s\in\mcs} e^{\sigma \sv^{\pit}(s)} \sigma \nabla \sv^{\pit}(s)}{\sum_{s\in\mcs} e^{\sigma \sv^{\pit}(s)}}\nn\\
    &=\frac{\sum_{s\in\mcs} e^{\sigma \sv^{\pit}(s)}\nabla \sv^{\pit}(s)}{\sum_{s\in\mcs} e^{\sigma \sv^{\pit}(s)}}.
\end{align}
Then we plug \eqref{eq:grdV1} in \eqref{eq:grdV2}, we have that
\begin{align}
    \nabla \lse(\sv^{\pit})&=\frac{1}{\sum_{s\in\mcs} e^{\sigma \sv^{\pit}(s)}}\sum_{s\in\mcs} \left( e^{\sigma \sv^{\pit}(s)} \left( B(s,\theta)+\frac{\gamma R}{1-\gamma+\gamma R} \nabla \lse(\sv^{\pit})\right) \right)\nn\\
    &=\frac{\sum_{s\in\mcs} e^{\sigma \sv^{\pit}(s)} B(s,\theta)}{\sum_{s\in\mcs} e^{\sigma \sv^{\pit}(s)}}+\frac{\gamma R}{1-\gamma+\gamma R} \nabla \lse(\sv^{\pit}).
\end{align}
This implies that
\begin{align}
    \nabla \lse(\sv^{\pit})=\frac{1-\gamma+\gamma R}{1-\gamma}\frac{\sum_{s\in\mcs} e^{\sigma \sv^{\pit}(s)} B(s,\theta)}{\sum_{s\in\mcs} e^{\sigma \sv^{\pit}(s)}}.
\end{align}
Hence plugging it in \eqref{eq:grdV1} implies that
\begin{align}\label{eq:gradV}
    \nabla \sv^{\pit}(s)=B(s,\theta)+\frac{\gamma R}{1-\gamma}\frac{\sum_{s\in\mcs} e^{\sigma \sv^{\pit}(s)} B(s,\theta)}{\sum_{s\in\mcs} e^{\sigma \sv^{\pit}(s)}}.
\end{align}
And finally it is easy to see that
\begin{align} 
    \nabla J_\sigma(\theta)=B(\rho,\theta)+\frac{\gamma R}{1-\gamma}\frac{\sum_{s\in\mcs} e^{\sigma \sv^{\pit}(s)} B(s,\theta)}{\sum_{s\in\mcs} e^{\sigma \sv^{\pit}(s)}},
\end{align}
where $B(\rho,\theta)\triangleq\frac{1}{1-\gamma+\gamma R} \sum_{{s'}\in\mcs} d^{\pit}_\rho({s'})\sum_{a\in\mca} \nabla \pit(a|{s'}) \sq^{\pit}({s'},a)$.

\subsection{Proof of Lemma \ref{lemma:lipJsigma}: Smoothness of $J_\sigma$}
In this section we prove Lemma \ref{lemma:lipJsigma} that the smoothed robust value function $J_\sigma(\theta)$ is $L_\sigma$-smooth in $\theta$. 
It has been shown in \cite{wang2021online} that $\|V_{\sigma}^{\pi}-V^{\pi}\|_{\infty}\leq \frac{\gamma R}{1-\gamma} \frac{\log |\mcs|}{\sigma}$. Moreover we have that 
\begin{align}
\|Q^\pi_\sigma-Q^\pi\|_\infty&=\max_{s,a} |\sum_{s'\in\mcs} \gamma (1-R) p^a_{s,s'} (V^\pi_\sigma(s')-V^\pi(s')) +\gamma R (\lse(V^\pi_\sigma)-\max V^\pi)  |\nn\\
&\leq \gamma (1-R) \|V_{\sigma}^{\pi}-V^{\pi}\|_{\infty}+ |\gamma R (\lse(V^\pi_\sigma)-\max V^\pi)|\nn\\
&\leq \gamma (1-R)\frac{\gamma R}{1-\gamma} \frac{\log |\mcs|}{\sigma} +\gamma R|\lse(V^\pi_\sigma)-\lse(V^\pi)|+\gamma R |\lse(V^\pi)-\max V^\pi|     \nn\\
&\overset{(a)}{\leq} \gamma (1-R)\frac{\gamma R}{1-\gamma} \frac{\log |\mcs|}{\sigma} +\gamma R \|V_{\sigma}^{\pi}-V^{\pi}\|_{\infty} +\gamma R |\lse(V^\pi)-\max V^\pi| \nn\\
&\overset{(b)}{\leq} \frac{\gamma^2 R}{1-\gamma} \frac{\log |\mcs|}{\sigma}+\gamma R \frac{\log|\mcs|}{\sigma}\nn\\
&=\frac{\gamma R}{1-\gamma} \frac{\log|\mcs|}{\sigma},
\end{align}
where $(a)$ is from the fact that $|\lse(V_1)-\lse(V_2)|\leq \|V_1-V_2\|_\infty$ ((59) in \cite{wang2021online}), and $(b)$ is because $\lse(V)-\max V \leq  \frac{\log|S|}{\sigma}$ ((61) in \cite{wang2021online}).

Note that if we define a new cost function $c'(s,a)\triangleq c(s,a)+ \gamma R\frac{\log |\mcs|}{\sigma}$, the robust action-value function w.r.t. new cost function $c'$ is  $Q^{\pi}_{c'}(s,a)=\max_{\kappa} \mathbb{E}_{\kappa}[\sum^\infty_{t=0} \gamma^t c'(S_t,A_t)^t | S_0=s,A_0=a,\pi]\geq \frac{\gamma R}{1-\gamma} \frac{\log |\mcs|}{\sigma}$ for any $s\in\mcs, a\in\mca, \pi$. Hence the smoothed robust value function w.r.t. $c'$ is non-negative:  $Q^\pi_{c',\sigma}(s,a)\geq Q_{c'}^\pi(s,a)-\frac{\gamma R}{1-\gamma} \frac{\log|\mcs|}{\sigma}\geq 0$ for any $s\in\mcs, a\in\mca, \pi$.

This means by define a new cost function $c'$, $Q^\pi_{c',\sigma}$ and $V^\pi_{c',\sigma}$ are non-negative. In the remaining parts, we omit the subscript $c'$ and denote the smoothed robust value functions w.r.t. $c'$ by $Q^\pi_\sigma$ and $V^\pi_\sigma$.

On the other had, the upper bounds on them can be easily derived:
\begin{align}\label{eq:csigma}
    Q^\pi_\sigma(s,a)&\leq Q_{c'}^\pi(s,a)+\frac{\gamma R}{1-\gamma} \frac{\log|\mcs|}{\sigma} \leq \frac{1}{1-\gamma}(1+\gamma R \frac{\log|\mcs|}{\sigma})+ \frac{\gamma R}{1-\gamma} \frac{\log|\mcs|}{\sigma}\leq \frac{1}{1-\gamma}(1+2\gamma R \frac{\log|\mcs|}{\sigma}),\\
    V^\pi_\sigma(s)&=\sum_{a\in\mca} \pi(a|s)Q^\pi_\sigma(s,a) \leq \frac{1}{1-\gamma}(1+2\gamma R \frac{\log|\mcs|}{\sigma}).
\end{align}
We denote these upper bounds by $C_\sigma$. 
Hence we have showed that $0\leq Q^\pi_\sigma(s,a)\leq C_\sigma$ and $0\leq V^\pi_\sigma(s)\leq C_\sigma$ for any $s\in\mcs, a\in\mca, \pi$.

We then prove Lemma \ref{lemma:lipJsigma}.
We first show that $B(s,\theta)$ is Lipschitz. From \eqref{eq:gradV}, we know that
\begin{align}
    \|\nabla \sv^{\pit}(s)\|\leq \max_{s\in\mcs}\|B(s,\theta)\|\left(1+\frac{\gamma R}{1-\gamma}\right)\leq \left(\frac{1-\gamma+\gamma R}{1-\gamma} \right)\frac{1}{1-\gamma+\gamma R}|\mca|k_\pi\sup_{s,a}|\sq^{\pit}(s,a)|,
\end{align}
which is from $\|B(s,\theta)\|\leq \frac{1}{1-\gamma+\gamma R} |\mca| k_\pi \sup_{s,a}|\sq^{\pit}(s,a)|$ for any $\theta\in\Theta, s\in\mcs$. 

As we showed in \eqref{eq:csigma}, $\|\sq^{\pit}\|_\infty\leq C_\sigma$. Hence the gradient of $\sv^{\pit}$ can be bounded as
\begin{align}\label{eq:gradVbound}
    \|\nabla \sv^{\pit}(s)\|\leq\frac{1}{1-\gamma} |\mca|k_\pi C_\sigma\triangleq C^V_\sigma,
\end{align}
and this means $\sv^{\pit}(s) $ is $C^V_\sigma$-Lipschitz for any $s\in\mcs$. Moreover we have that
\begin{align}
    |\sq^{\pi_{\theta_1}}(s,a)-\sq^{\pi_{\theta_2}}(s,a)|&=|\sum_{s'\in\mcs} p^a_{s,s'} \gamma (1-R) (\sv^{\pi_{\theta_1}}(s')-\sv^{\pi_{\theta_2}}(s')) +\gamma R (\lse(\sv^{\pi_{\theta_1}})-\lse(\sv^{\pi_{\theta_2}})) |\nn\\
    &\overset{(a)}{\leq} \gamma (1-R) C^V_\sigma\|\theta_1-\theta_2\| +\gamma R C^V_\sigma\|\theta_1-\theta_2\|\nn\\
    &= C^V_\sigma\|\theta_1-\theta_2\|,
\end{align}
where inequality $(a)$ is from  $|\lse(V_1)-\lse(V_2)|\leq \|V_1-V_2\|_\infty$ and $\sv^{\pit}(s) $ is $C^V_\sigma$-Lipschitz. 

It was showed in \cite{achiam2017constrained,touati2020stable} that 
\begin{align}
    d_{\text{TV}}(d^{\pi_1}_s,d^{\pi_2}_s)\leq \frac{2\gamma (1-R)}{1-\gamma+\gamma R} \mE_{S\sim d^{\pi_2}_s}\left[d_{\text{TV}}(\pi_1(\cdot|S),\pi_2(\cdot|S))\right],
\end{align}
hence we have that for any function $f(s,\theta)$ defined on $\mcs\times\Theta$, 
\begin{align}\label{eq:91}
    \|\mE_{S\sim d_s^{\pi_{\theta_1}}}[f(S,\theta)]-\mE_{S\sim d_s^{\pi_{\theta_2}}}[f(S,\theta)]\|& \leq 2d_{\text{TV}}(d_s^{\pi_{\theta_1}},d_s^{\pi_{\theta_1}}) \sup_{s\in\mcs,\theta\in\Theta}\|f(s,\theta)\|\nn\\
    &\leq \frac{4\gamma (1-R)}{1-\gamma+\gamma R} \frac{|\mca|}{2}k_\pi\|\theta_1-\theta_2\| \sup_{s\in\mcs,\theta\in\Theta}\|f(s,\theta)\|,
\end{align}
where the last inequality is from 
\begin{align}
    d_{\text{TV}}(\pone(\cdot|s),\ptwo(\cdot|s))=\frac{1}{2}\sum_{a\in\mca} |\pone(a|s)-\ptwo(a|s)| \leq \frac{|\mca|}{2}k_\pi \|\theta_1-\theta_2\|.
\end{align}
This implies that
\begin{align}\label{eq:B_diff}
   \| B(s,\theta_1)-B(s,\theta_2)\|&=\frac{1}{1-\gamma+\gamma R} \|\mE_{S\sim d_s^{\pi_{\theta_1}}}[f(S,\theta_1)]-\mE_{S\sim d_s^{\pi_{\theta_2}}}[f(S,\theta_2)] \|\nn\\
   &\leq \frac{1}{1-\gamma+\gamma R} \|\mE_{S\sim d_s^{\pi_{\theta_1}}}[f(S,\theta_1)]-\mE_{S\sim d_s^{\pi_{\theta_1}}}[f(S,\theta_2)]+\mE_{S\sim d_s^{\pi_{\theta_1}}}[f(S,\theta_2)]-\mE_{S\sim d_s^{\pi_{\theta_2}}}[f(S,\theta_2)] \|\nn\\
   &\leq \frac{1}{1-\gamma +\gamma R} \left(\mE_{S\sim d^{\theta_1}_s} \|f(S,\theta_1)-f(S,\theta_2)\|+\frac{2k_\pi|\mca|\gamma (1-R)}{1-\gamma+\gamma R}\sup_{x\in\mcs,\theta\in\Theta}\|f(x,\theta)\| \|\theta_1-\theta_2\|\right),
\end{align}
where $f(x,\theta)=\sum_{a\in\mca} \nabla \pit(a|x)\sq^{\pit}(x,a)$, and the last inequality is from \eqref{eq:91}.

Note that for any $s\in\mcs$ and $\theta\in\Theta$, 
\begin{align}\label{eq:fbound}
\|f(s,\theta)\|=\left\|\sum_{a\in\mca} \nabla \pit(a|x)\sq^{\pit}(x,a)\right\|\leq |\mca|k_\pi C_\sigma,
\end{align}
and
\begin{align}\label{eq:flip}
    \|f(x,\theta_1)-f(x,\theta_2)\|& = \left\|\sum_{a\in\mca}\left(\nabla\pone(a|x)\sq^{\pone}(x,a)- \nabla\ptwo(a|x)\sq^{\ptwo}(x,a)\right)\right\| \nn\\
    &\leq \left\|\sum_{a\in\mca}\left(\nabla\pone(a|x)\sq^{\pone}(x,a)- \nabla\ptwo(a|x)\sq^{\pone}(x,a)\right)\right\| \nn\\
    &\quad+ \left\|\sum_{a\in\mca}\left(\nabla\ptwo(a|x)\sq^{\pone}(x,a)- \nabla\ptwo(a|x)\sq^{\ptwo}(x,a)\right)\right\|\nn\\
    &\leq |\mca|C_\sigma l_\pi \|\theta_1-\theta_2\|+|\mca|k_\pi C^V_\sigma \|\theta_1-\theta_2\|\nn\\
    &=\left( |\mca|C_\sigma l_\pi+|\mca|k_\pi C^V_\sigma\right)\|\theta_1-\theta_2\|,
\end{align}
where the last inequality is because $\nabla \pit(a|x)$ is $l_\pi$-Lipschitz, and $\sq^{\pit}(x,a)$ is $C^V_\sigma$-Lipschitz. 

Hence plugging \eqref{eq:fbound} and \eqref{eq:flip} in \eqref{eq:B_diff} implies that
\begin{align}
    \| B(s,\theta_1)-B(s,\theta_2)\| &\leq \frac{1}{1-\gamma +\gamma R} \left( |\mca|C_\sigma l_\pi+|\mca|k_\pi C^V_\sigma\right)\|\theta_1-\theta_2\|\nn\\
    &\quad+\frac{2|\mca|^2\gamma (1-R)}{(1-\gamma+\gamma R)^2}k_\pi^2C_\sigma\|\theta_1-\theta_2\|\nn\\
    &\triangleq k_B \|\theta_1-\theta_2\|,
\end{align}
Hence we showed that $B(s,\theta)$ and $B(\rho,\theta)$ are both $k_B$-Lipschitz.

Next we show that $\frac{\sum_{s\in\mcs} e^{\sigma \sv^{\pit}(s)} B(s,\theta)}{\sum_{s\in\mcs} e^{\sigma \sv^{\pit}(s)}}$ is also Lipschitz. Note that it is the 
inner production of $B(\theta)\triangleq (B(s_1,\theta),...,B(s_{|\mcs|},\theta))$ and $p^\theta_\sigma\triangleq\left(\frac{e^{\sigma \sv^{\pit}(s_1)}}{\sum_{s\in\mcs} e^{\sigma \sv^{\pit}(s)}},...,\frac{e^{\sigma \sv^{\pit}(s_{|\mcs|})}}{\sum_{s\in\mcs} e^{\sigma \sv^{\pit}(s)}}\right)$.  Hence 
\begin{align}
    \langle B(\theta_1),p_\sigma^{\theta_1}\rangle -\langle B(\theta_2),p_\sigma^{\theta_2}\rangle&=\langle B(\theta_1),p_\sigma^{\theta_1}\rangle-\langle B(\theta_2),p_\sigma^{\theta_1}\rangle+\langle B(\theta_2),p_\sigma^{\theta_1}\rangle-\langle B(\theta_2),p_\sigma^{\theta_2}\rangle\nn\\
    &= \langle B(\theta_1)-B(\theta_2),p^{\theta_1}_\sigma\rangle+\langle B(\theta_2), p^{\theta_1}_\sigma-p^{\theta_2}_\sigma \rangle\nn\\
    &\leq \|B(\theta_1)-B(\theta_2) \|+ \|B(\theta_2)\|\|p^{\theta_1}_\sigma-p^{\theta_2}_\sigma \|,
\end{align}
where the last inequality is because $\|p^{\theta}_\sigma\|\leq 1$ for any $\theta\in\Theta$.

The Lipschitz smoothness of $p^\theta_\sigma$ can be showed as follows. Note that
\begin{align}
    \nabla p^{\theta}_\sigma(j)&=\nabla \left(\frac{e^{\sigma \sv^{\pit}(j)}}{\sum_{s\in\mcs} e^{\sigma \sv^{\pit}(s)}} \right)\nn\\
    &=\frac{e^{\sigma \sv^{\pit}(j)}\sigma \nabla \sv^{\pit}(j)  \left(\sum_{s\in\mcs} e^{\sigma \sv^{\pit}(s)}\right)-\left(\sum_{s\in\mcs} e^{\sigma \sv^{\pit}(s)} \sigma\nabla \sv^{\pit}(s)\right) e^{\sv^{\pit}(j)}}{\left(\sum_{s\in\mcs} e^{\sigma \sv^{\pit}(s)} \right)^2}\nn\\
    &=\frac{\sigma e^{\sv^{\pit}(j)}\sum_{s\in\mcs}\left(e^{\sigma \sv^{\pit}(s)} \left( \nabla \sv^{\pit} (j)- \nabla \sv^{\pit}(s)\right) \right) }{\left(\sum_{s\in\mcs} e^{\sigma \sv^{\pit}(s)} \right)^2},
\end{align}
and hence 
\begin{align}
    \|\nabla p^{\theta}_\sigma(j)\|&\leq\frac{\sigma e^{\sv^{\pit}(j)} \cdot 2 \max_{s\in\mcs}\|\nabla \sv^{\pit}(s) \|}{\sum_{s\in\mcs}  e^{\sigma\sv^{\pit}(s)}}\nn\\
    &\leq 2\sigma C^V_\sigma,
\end{align}
which is from \eqref{eq:gradVbound}.

Thus we have $\|p^{\theta_1}_\sigma-p^{\theta_2}_\sigma \|= \sqrt{\sum_{s\in\mcs} \left(p^{\theta_1}_\sigma(s)-p^{\theta_2}_\sigma(s) \right)^2} \leq 2\sigma \sqrt{|\mcs|} C^V_\sigma \|\theta_1-\theta_2\|$. Moreover, note that 
\begin{align}
    \|B(\theta_1)-B(\theta_2) \|\leq \sqrt{|\mcs|}k_B\|\theta_1-\theta_2\|,
\end{align}
combining both inequality implies that
\begin{align}\label{eq:L-sigma}
    \left\| \frac{\sum_{s\in\mcs} e^{\sigma \sv^{\theta_1}(s)} B(s,\theta_1)}{\sum_{s\in\mcs} e^{\sigma \sv^{\theta_1}(s)}}- \frac{\sum_{s\in\mcs} e^{\sigma \sv^{\theta_2}(s)} B(s,\theta_2)}{\sum_{s\in\mcs} e^{\sigma \sv^{\theta_2}(s)}} \right\|\leq \left(\sqrt{|\mcs|}k_B+ 2\sigma|\mcs|C^V_\sigma \frac{1}{1-\gamma+\gamma R}k_\pi|\mca|C_\sigma\right)\|\theta_1-\theta_2\|,
\end{align}
where we use the fact that $\|B(s,\theta)\|\leq \frac{\sqrt{|\mcs|}}{1-\gamma+\gamma R} k_\pi|\mca|C_\sigma$ for any $s\in\mcs$ and $\theta\in\Theta$.

Hence recall the expression of $\nabla J_\sigma(\theta)$ in \eqref{eq:gradJ}, $\nabla J_\sigma(\theta)$ is Lipschitz with constant $k_B+\frac{\gamma R}{1-\gamma}\left(\sqrt{|\mcs|}k_B+ 2\sigma|\mcs|C^V_\sigma \frac{1}{1-\gamma+\gamma R}k_\pi|\mca|C_\sigma\right)\triangleq L_\sigma$. And this completes the proof.

\subsection{Proof of Theorem \ref{thm:smPL}: PL-Condition of Smoothed Robust Policy Gradient}\label{sec:smoothPL}
In the following, we will show that the smoothed objective function $J_\sigma(\theta)$ satisfies the PL-condition.

It can be shown that
\begin{align}\label{eq:spl1}
V_\sigma^{\pi_{\theta_1}}\left(s\right)-V_\sigma^{\pi_{\theta_2}}\left(s\right)&=\sum_{a\in\mca} \pone\left(a|s\right)Q_\sigma^{\pi_{\theta_1}}\left(s,a\right)-\sum_{a\in\mca} \ptwo\left(a|s\right)Q_\sigma^{\pi_{\theta_2}}\left(s,a\right)\nn\\
&=\sum_{a\in\mca} \left(\pone\left(a|s\right)-\ptwo\left(a|s\right)\right)Q_\sigma^{\pi_{\theta_1}}\left(s,a\right)+\sum_{a\in\mca} \ptwo\left(a|s\right) \left(Q_\sigma^{\pi_{\theta_1}}\left(s,a\right)-Q_\sigma^{\pi_{\theta_2}}\left(s,a\right)\right)\nn\\
&=\sum_{a\in\mca} \left(\pone\left(a|s\right)-\ptwo\left(a|s\right)\right)Q_\sigma^{\pi_{\theta_1}}\left(s,a\right)\nn\\
&\quad+\sum_{a\in\mca} \ptwo\left(a|s\right) \left(\gamma \left(1-R\right) \sum_{s'\in\mcs} p^a_{s,{s'}}\left(V_\sigma^{\pi_{\theta_1}}\left({s'}\right)-V_\sigma^{\pi_{\theta_2}}\left({s'}\right)\right)+\gamma R \left(\lse_1-\lse_2\right)\right)\nn\\
&\overset{(a)}{\leq}\sum_{a\in\mca} \left(\pone\left(a|s\right)-\ptwo\left(a|s\right)\right)Q_\sigma^{\pi_{\theta_1}}\left(s,a\right)\nn\\
&\quad+\gamma \left(1-R\right)\sum_{s'\in\mcs} \mathbb{P}\left(S_1={s'}|S_0=s,\ptwo\right) \left(V_\sigma^{\pi_{\theta_1}}\left({s'}\right)-V_\sigma^{\pi_{\theta_2}}\left({s'}\right)\right)+\gamma R \left(\lse_1-\lse_2\right)\nn\\
&\overset{(b)}{\leq}\frac{1}{1-\gamma+\gamma R} \sum_{{s'}\in\mcs} d^{\ptwo}_s\left({s'}\right) \sum_{a\in\mca} \left(\pone\left(a|{s'}\right)-\ptwo\left(a|{s'}\right)\right)Q_\sigma^{\pi_{\theta_1}}\left({s'},a\right)\nn\\
&\quad+\frac{\gamma R}{1-\gamma+\gamma R}\left(\lse_1-\lse_2\right),
\end{align}
where $(a)$ follows from the definition of $Q^\pi_\sigma$ and we denote $ \lse(V_\sigma^{\pi_{\theta_i}})$ by $\lse_i$; and $(b)$ follows by recursively apply the above steps. 

From $|\lse\left(V\right)-\max V|\leq \frac{\log|\mcs|}{\sigma}$, we further have 
\begin{align}
    \lse_1-\lse_2\leq \max  V_\sigma^{\pi_{\theta_1}}-\max V_\sigma^{\pi_{\theta_2}} +\frac{2\log |\mcs|}{\sigma}.
\end{align}
Hence \eqref{eq:spl1} can be further bounded as 
\begin{align}
V_\sigma^{\pi_{\theta_1}}\left(s\right)-V_\sigma^{\pi_{\theta_2}}\left(s\right)&\leq \frac{1}{1-\gamma+\gamma R} \sum_{{s'}\in\mcs} d^{\ptwo}_s\left({s'}\right) \sum_{a\in\mca} \left(\pone\left(a|{s'}\right)-\ptwo\left(a|{s'}\right)\right)Q_\sigma^{\pi_{\theta_1}}\left({s'},a\right)\nn\\
&\quad+\frac{\gamma R}{1-\gamma+\gamma R}\left(\frac{2\log |\mcs|}{\sigma} + V_\sigma^{\pi_{\theta_1}}\left(s^\sigma_{\theta_1}\right)-V_\sigma^{\pi_{\theta_2}}\left(s^\sigma_{\theta_2}\right)\right),
\end{align}
where $s^\sigma_{\theta_j}\in\arg\max_s \sv^{\pi_{\theta_j}}\left(s\right), j=1,2$. Note that $V_\sigma^{\pi_{\theta_1}}\left(s^\sigma_{\theta_1}\right)-V_\sigma^{\pi_{\theta_2}}\left(s^\sigma_{\theta_2}\right)=V_\sigma^{\pi_{\theta_1}}\left(s^\sigma_{\theta_1}\right)-V_\sigma^{\pi_{\theta_2}}\left(s^\sigma_{\theta_1}\right)+V_\sigma^{\pi_{\theta_2}}\left(s^\sigma_{\theta_1}\right)-V_\sigma^{\pi_{\theta_2}}\left(s^\sigma_{\theta_2}\right)\leq V_\sigma^{\pi_{\theta_1}}\left(s^\sigma_{\theta_1}\right)-V_\sigma^{\pi_{\theta_2}}\left(s^\sigma_{\theta_1}\right) $ and hence \eqref{eq:spl1} can be bounded as 
\begin{align}\label{eq:spl2}
V_\sigma^{\pi_{\theta_1}}\left(s\right)-V_\sigma^{\pi_{\theta_2}}\left(s\right)&\leq \frac{1}{1-\gamma+\gamma R} \sum_{{s'}\in\mcs} d^{\ptwo}_s\left({s'}\right) \sum_{a\in\mca} \left(\pone\left(a|{s'}\right)-\ptwo\left(a|{s'}\right)\right)Q_\sigma^{\pi_{\theta_1}}\left({s'},a\right)\nn\\
&\quad+\frac{\gamma R}{1-\gamma+\gamma R}\left(\frac{2\log |\mcs|}{\sigma} + V_\sigma^{\pi_{\theta_1}}\left(s^\sigma_{\theta_1}\right)-V_\sigma^{\pi_{\theta_2}}\left(s^\sigma_{\theta_1}\right)\right).
\end{align}
If we set $s=s^\sigma_{\theta_1}$ in this inequality, we have that
\begin{align}
    V_\sigma^{\pi_{\theta_1}}\left(s^\sigma_{\theta_1}\right)-V_\sigma^{\pi_{\theta_2}}\left(s^\sigma_{\theta_1}\right)&\leq \frac{1}{1-\gamma+\gamma R} \sum_{{s'}\in\mcs} d^{\ptwo}_{s^\sigma_{\theta_1}}\left({s'}\right) \sum_{a\in\mca} \left(\pone\left(a|{s'}\right)-\ptwo\left(a|{s'}\right)\right)Q_\sigma^{\pi_{\theta_1}}\left({s'},a\right)\nn\\
&\quad+\frac{\gamma R}{1-\gamma+\gamma R}\left(\frac{2\log |\mcs|}{\sigma} + V_\sigma^{\pi_{\theta_1}}\left(s^\sigma_{\theta_1}\right)-V_\sigma^{\pi_{\theta_2}}\left(s^\sigma_{\theta_1}\right)\right),
\end{align}
and hence
\begin{align}\label{eq:106}
    &V_\sigma^{\pi_{\theta_1}}\left(s^\sigma_{\theta_1}\right)-V_\sigma^{\pi_{\theta_2}}\left(s^\sigma_{\theta_1}\right)\nn\\
    &\leq \left(\frac{1-\gamma +\gamma R}{1-\gamma}\right)\left(\frac{\gamma R}{1-\gamma +\gamma R}\frac{2\log |\mcs|}{\sigma}+\frac{1}{1-\gamma+\gamma R} \sum_{{s'}\in\mcs} d^{\ptwo}_{s^\sigma_{\theta_1}}\left({s'}\right) \sum_{a\in\mca} \left(\pone\left(a|{s'}\right)-\ptwo\left(a|{s'}\right)\right)Q_\sigma^{\pi_{\theta_1}}\left({s'},a\right)\right).
\end{align}
Now plugging \eqref{eq:106} in \eqref{eq:spl2} implies that for any $s\in\mcs$
\begin{align}
    V_\sigma^{\pi_{\theta_1}}\left(s\right)-V_\sigma^{\pi_{\theta_2}}\left(s\right)&\leq \frac{1}{1-\gamma+\gamma R} \sum_{{s'}\in\mcs} d^{\ptwo}_s\left({s'}\right) \sum_{a\in\mca} \left(\pone\left(a|{s'}\right)-\ptwo\left(a|{s'}\right)\right)Q_\sigma^{\pi_{\theta_1}}\left({s'},a\right)\nn\\
    &\quad+\frac{\gamma R}{1-\gamma+\gamma R}\bigg(\frac{2\log |\mcs|}{\sigma} + \left(\frac{1-\gamma +\gamma R}{1-\gamma}\right)\cdot\bigg(\frac{\gamma R}{1-\gamma +\gamma R}\frac{2\log |\mcs|}{\sigma}\nn\\
    &\quad+\frac{1}{1-\gamma+\gamma R} \sum_{{s'}\in\mcs} d^{\ptwo}_{s^\sigma_{\theta_1}}\left({s'}\right) \sum_{a\in\mca} \left(\pone\left(a|{s'}\right)-\ptwo\left(a|{s'}\right)\right)Q_\sigma^{\pi_{\theta_1}}\left({s'},a\right)\bigg)\bigg)\nn\\
    &=\frac{1}{1-\gamma+\gamma R} \sum_{{s'}\in\mcs} d^{\ptwo}_s\left({s'}\right) \sum_{a\in\mca} \left(\pone\left(a|{s'}\right)-\ptwo\left(a|{s'}\right)\right)Q_\sigma^{\pi_{\theta_1}}\left({s'},a\right)\nn\\
    &\quad+\frac{\gamma R}{\left(1-\gamma\right)\left(1-\gamma+\gamma R\right)}\left(\sum_{{s'}\in\mcs} d^{\ptwo}_{s^\sigma_{\theta_1}}\left({s'}\right) \sum_{a\in\mca} \left(\pone\left(a|{s'}\right)-\ptwo\left(a|{s'}\right)\right)Q_\sigma^{\pi_{\theta_1}}\left({s'},a\right)\right)\nn\\
    &\quad+ \left(\frac{\gamma R}{1-\gamma}\right)\frac{2\log |\mcs|}{\sigma}.
%&\triangleq b\left(s,\theta\right)+\frac{\gamma R}{1-\gamma} b\left(s^\sigma_{\theta_1},\theta\right)+\left(\frac{\gamma R}{1-\gamma}\right)\frac{2\log |\mcs|}{\sigma}.
\end{align}
Set $\theta_1=\theta$ and $\theta_2=\theta_\sigma^*\in\arg\max_{\theta\in(\Delta(\mca))^{|\mcs|}}J_\sigma(\theta)$, we have
\begin{align}
    \sv^{\pit}\left(s\right)-\sv^{\pi_{\theta_\sigma^*}}\left(s\right)&\leq \frac{1}{1-\gamma+\gamma R} \sum_{{s'}\in\mcs} d^{\pi_{\theta^*_\sigma}}_s\left({s'}\right) \sum_{a\in\mca} \left(\pi_\theta\left(a|{s'}\right)-\pi_{\theta^*_\sigma}\left(a|{s'}\right)\right)\sq^{\pit}\left({s'},a\right)\nn\\
&\quad+\frac{\gamma R}{\left(1-\gamma\right)\left(1-\gamma+\gamma R\right)}\left(\sum_{{s'}\in\mcs} d^{\pi_{\theta^*_\sigma}}_{s^\sigma_{\theta}}\left({s'}\right) \sum_{a\in\mca} \left(\pi_\theta\left(a|{s'}\right)-\pi_{\theta^*_\sigma}\left(a|{s'}\right)\right)\sq^{\pit}\left({s'},a\right)\right)\nn\\
    &\quad+ \left(\frac{\gamma R}{1-\gamma}\right)\frac{2\log |\mcs|}{\sigma}.
\end{align}
Recall the definition of $J_\sigma$, we can show that
\begin{align}
    J_\sigma\left(\theta\right)-J_\sigma^*&\leq \frac{1}{1-\gamma+\gamma R} \sum_{{s'}\in\mcs} d^{\pi_{\theta^*_\sigma}}_\rho\left({s'}\right) \sum_{a\in\mca} \left(\pi_\theta\left(a|{s'}\right)-\pi_{\theta^*_\sigma}\left(a|{s'}\right)\right)\sq^{\pit}\left({s'},a\right)\nn\\
&\quad+\frac{\gamma R}{\left(1-\gamma\right)\left(1-\gamma+\gamma R\right)}\left(\sum_{{s'}\in\mcs} d^{\pi_{\theta^*_\sigma}}_{s^\sigma_{\theta}}\left({s'}\right) \sum_{a\in\mca} \left(\pi_\theta\left(a|{s'}\right)-\pi_{\theta^*_\sigma}\left(a|{s'}\right)\right)\sq^{\pit}\left({s'},a\right)\right)\nn\\
    &\quad+ \left(\frac{\gamma R}{1-\gamma}\right)\frac{2\log |\mcs|}{\sigma}\nn\\
&=\sum_{{s'}\in\mcs} \left( \frac{1}{1-\gamma+\gamma R}d^{\pi_{\theta^*_\sigma}}_\rho\left({s'}\right)+\frac{\gamma R}{\left(1-\gamma\right)\left(1-\gamma+\gamma R\right)} d^{\pi_{\theta^*_\sigma}}_{s^\sigma_{\theta}}\left({s'}\right)\right)\sum_{a\in\mca} \left(\pi_\theta\left(a|{s'}\right)-\pi_{\theta^*_\sigma}\left(a|{s'}\right)\right)\sq^{\pit}\left({s'},a\right)\nn\\
&\quad+\left(\frac{\gamma R}{1-\gamma}\right)\frac{2\log |\mcs|}{\sigma}\nn\\
&\triangleq \sum_{{s'}\in\mcs} l^*_\theta\left({s'}\right)\sum_{a\in\mca} \left(\pi_\theta\left(a|{s'}\right)-\pi_{\theta^*_\sigma}\left(a|{s'}\right)\right)\sq^{\pit}\left({s'},a\right) +\left(\frac{\gamma R}{1-\gamma}\right)\frac{2\log |\mcs|}{\sigma},
\end{align}
where $l^*_\theta\left({s'}\right)=\frac{1}{1-\gamma+\gamma R}d^{\pi_{\theta^*_\sigma}}_\rho\left({s'}\right)+\frac{\gamma R}{\left(1-\gamma\right)\left(1-\gamma+\gamma R\right)} d^{\pi_{\theta^*_\sigma}}_{s^\sigma_{\theta}}\left({s'}\right)$. Recall that 
\begin{align}
    \left(\nabla J_\sigma\left(\theta\right)\right)_{s,a}&=B\left(\rho,\theta\right)_{s,a}+\frac{\gamma R}{1-\gamma}\frac{\sum_{s'\in\mcs} e^{\sigma \sv^{\pit}\left(s'\right)} B\left(s,\theta\right)_{s',a}}{\sum_{s\in\mcs} e^{\sigma \sv^{\pit}\left(s\right)}}\nn\\
    &=\frac{1}{1-\gamma+\gamma R} d^{\pi_\theta}_\rho\left(s\right)\sq^{\pit}\left(s,a\right)+ \frac{\gamma R}{1-\gamma}\frac{\sum_{{s'}\in\mcs} e^{\sigma \sv^{\pit}\left({s'}\right)}\frac{1}{1-\gamma+\gamma R} d^\pi_{s'}\left(s\right)\sq^{\pit}\left(s,a\right)}{\sum_{s\in\mcs} e^{\sigma \sv^{\pit}\left(s\right)}}\nn\\
    &=\left(\frac{1}{1-\gamma+\gamma R} d^{\pi_\theta}_\rho\left(s\right)+\frac{\gamma R}{1-\gamma}\frac{\sum_{{s'}\in\mcs} e^{\sigma \sv^{\pit}\left({s'}\right)}\frac{1}{1-\gamma+\gamma R} d^\pi_{s'}\left(s\right)}{\sum_{s\in\mcs} e^{\sigma \sv^{\pit}\left(s\right)}} \right)\sq^{\pit}\left(s,a\right)\nn\\
    &\triangleq l_\theta\left(s\right)\sq^{\pit}\left(s,a\right),
\end{align}
where $l_\theta\left(s\right)=\frac{1}{1-\gamma+\gamma R} d^{\pi_\theta}_\rho\left(s\right)+\frac{\gamma R}{1-\gamma}\frac{\sum_{{s'}\in\mcs} e^{\sigma \sv^{\pit}\left({s'}\right)}\frac{1}{1-\gamma+\gamma R} d^\pi_{s'}\left(s\right)}{\sum_{s\in\mcs} e^{\sigma \sv^{\pit}\left(s\right)}} $. It can be verified that
\begin{align}
   \sup_{s\in\mcs} \frac{l^*_\theta\left(s\right)}{l_\theta\left(s\right)}=\sup_{s\in\mcs}\frac{\frac{1}{1-\gamma+\gamma R}d^{\pi_{\theta^*_\sigma}}_\rho\left(s\right)+\frac{\gamma R}{\left(1-\gamma\right)\left(1-\gamma+\gamma R\right)} d^{\pi_{\theta^*_\sigma}}_{s^\sigma_{\theta}}\left(s\right)}{\frac{1}{1-\gamma+\gamma R} d^{\pi_\theta}_\rho\left(s\right)+\frac{\gamma R}{1-\gamma}\frac{\sum_{{s'}\in\mcs} e^{\sigma \sv^{\pit}\left({s'}\right)}\frac{1}{1-\gamma+\gamma R} d^\pi_{s'}\left(s\right)}{\sum_{{s'}\in\mcs} e^{\sigma \sv^{\pit}\left({s'}\right)}}}\leq \frac{1}{\left(1-\gamma\right)\rho_{\min}}=C_{PL},
\end{align}
where the last inequality is from $d^{\pi_\theta}_\rho\geq (1-\gamma+\gamma R)\rho_{\min}$, and $\frac{\sum_{{s'}\in\mcs} e^{\sigma \sv^{\pit}\left({s'}\right)}\frac{1}{1-\gamma+\gamma R} d^\pi_{s'}\left(s\right)}{\sum_{{s'}\in\mcs} e^{\sigma \sv^{\pit}\left({s'}\right)}}\geq 0$.

Hence similar to Theorem \ref{thm:PL}, we show that
\begin{align}
    J_\sigma\left(\theta\right)-J_\sigma^*\leq C_{PL} \max_{\hat{\pi}\in \left(\Delta\left(|\mca|\right)\right)^{|\mcs|}} \left\langle \pi_\theta-\hat{\pi}, \nabla J_\sigma\left(\theta\right) \right\rangle+\left(\frac{\gamma R}{1-\gamma}\right)\frac{2\log |\mcs|}{\sigma},
\end{align}
which completes the proof. 

\subsection{Proof of Theorem \ref{thm:srpg}: Convergence Rate of Smoothed Robust Policy Gradient}
In this section we prove Theorem \ref{thm:srpg} and show the convergence rate of our smoothed robust policy gradient algorithm. 

The following lemma can be derived directly using existing classic results:
\begin{lemma}(Theorem 10.15 in \cite{beck2017first})\label{thm:gm}
Set the step sizes $\alpha_t=\alpha=\frac{1}{L_\sigma}$, and define the gradient mapping as $G^\alpha(\theta)\triangleq \frac{1}{\alpha}\left(\theta-\proj(\theta-\alpha \nabla J_\sigma(\theta)) \right)$, then 
\begin{align}
    \min_{t=0,...,T-1}\|G^\alpha(\theta_t)\|\leq \sqrt{\frac{2L_\sigma (J_\sigma(\theta_0)-J_\sigma^*)}{T}}.
\end{align}
\end{lemma}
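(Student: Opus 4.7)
The plan is to establish this as a standard projected-gradient-descent convergence bound for a smooth function, using the $L_\sigma$-smoothness of $J_\sigma$ from \cref{lemma:lipJsigma} together with the variational characterization of the projection. The proof boils down to a one-step descent lemma followed by telescoping, so I expect no serious obstacle.

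First, I would write down the descent inequality implied by $L_\sigma$-smoothness: for any two iterates
\begin{equation*}
  J_\sigma(\theta_{t+1}) \le J_\sigma(\theta_t) + \langle \nabla J_\sigma(\theta_t),\, \theta_{t+1}-\theta_t\rangle + \tfrac{L_\sigma}{2}\|\theta_{t+1}-\theta_t\|^2.
\end{equation*}
By definition of the gradient mapping, $\theta_{t+1}-\theta_t = -\alpha G^\alpha(\theta_t)$, so the quadratic term is $\tfrac{L_\sigma \alpha^2}{2}\|G^\alpha(\theta_t)\|^2$.

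Second, I would handle the inner-product term using the first-order optimality condition of the Euclidean projection: since $\theta_{t+1} = \mathbf{\prod}_\Theta(\theta_t - \alpha \nabla J_\sigma(\theta_t))$, for every $\theta\in\Theta$,
\begin{equation*}
  \langle \theta_t - \alpha\nabla J_\sigma(\theta_t) - \theta_{t+1},\, \theta - \theta_{t+1}\rangle \le 0.
\end{equation*}
Specializing to $\theta=\theta_t$ and rearranging yields $\langle \nabla J_\sigma(\theta_t), \theta_{t+1}-\theta_t\rangle \le -\alpha\|G^\alpha(\theta_t)\|^2$. Plugging this back and choosing $\alpha = 1/L_\sigma$ collapses the two terms into the clean one-step decrease
\begin{equation*}
  J_\sigma(\theta_{t+1}) \le J_\sigma(\theta_t) - \tfrac{1}{2 L_\sigma}\|G^\alpha(\theta_t)\|^2.
\end{equation*}

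Third, I would telescope this inequality over $t=0,\dots,T-1$ and use $J_\sigma(\theta_T)\ge J_\sigma^*$ to obtain
\begin{equation*}
  \sum_{t=0}^{T-1}\|G^\alpha(\theta_t)\|^2 \le 2 L_\sigma\bigl(J_\sigma(\theta_0)-J_\sigma^*\bigr).
\end{equation*}
Finally, bounding the minimum by the average and taking square roots gives the claimed bound $\min_{t\le T-1}\|G^\alpha(\theta_t)\| \le \sqrt{2 L_\sigma (J_\sigma(\theta_0)-J_\sigma^*)/T}$. The only substantive ingredient beyond algebra is the smoothness of $J_\sigma$, which is already supplied by \cref{lemma:lipJsigma}; everything else is a direct quotation of the standard analysis in Beck's textbook, so I would either cite it or include the short derivation above for completeness.
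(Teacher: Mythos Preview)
Your proof is correct and is exactly the standard argument underlying Theorem~10.15 in Beck's book, which the paper simply cites without reproducing. The paper provides no proof of its own for this lemma, so your derivation is consistent with (indeed, more detailed than) what appears there.
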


We then prove Theorem \ref{thm:srpg}.

It has been shown in Lemma 3 in \cite{ghadimi2016accelerated} that if $\|G^\alpha(\theta) \|\leq \epsilon$, then 
\begin{align}\label{eq:113}
    -\nabla J_\sigma(\theta^+)\in N_{\Delta(\mca)^{|\mcs|}}(\theta^+)+2\epsilon B_2,
\end{align}
where $N_{\Delta(\mca)^{|\mcs|}}(x)\triangleq \left\{ g\in \mathbb{R}^{|\mcs|\times|\mca|}: \langle g, y-x\rangle\leq 0 \text{ for any } y \in \Delta(\mca)^{|\mcs|} \right\} $ is the normal cone, $B_2$ is the unit $l_2$ ball and $\theta^+=\theta-\alpha G^\alpha(\theta) $.  From PL-condition in Theorem \ref{thm:smPL}, it can be shown that
\begin{align}
    \min_{t\leq T-1} J_{\sigma}(\theta_t)-J_{\sigma}^* &\leq C_{PL} \min_{t\leq T-1}\max_{\hat{\pi}\in \left(\Delta\left(|\mca|\right)\right)^{|\mcs|}} \left\langle \pi_{\theta_t}-\hat{\pi}, \nabla J_\sigma\left(\theta_t\right) \right\rangle+\left(\frac{\gamma R}{1-\gamma}\right)\frac{2\log |\mcs|}{\sigma}\nn\\
    &\leq C_{PL} \max_{\hat{\pi}\in \left(\Delta\left(|\mca|\right)\right)^{|\mcs|}} \left\langle \pi_{\theta_W}-\hat{\pi}, \nabla J_\sigma\left(\theta_W\right) \right\rangle+\left(\frac{\gamma R}{1-\gamma}\right)\frac{2\log |\mcs|}{\sigma}
\end{align} 
where $W\triangleq 1+\arg\min_{t\leq T-1} \|G^\alpha(\theta_t)\|$. Recall that in Lemma \ref{thm:gm}, we showed that 
\begin{align}
    \|G^\alpha(\theta_{W-1})\|\leq \sqrt{\frac{2L_\sigma (J_\sigma(\theta_0)-J_\sigma^*)}{T}}.
\end{align}
Hence if we set 
\begin{align}
    T&= \frac{64|\mcs|C^2_{PL}L_\sigma C_\sigma}{\epsilon^2},
\end{align}
then 
\begin{align}
\|G^\alpha(\theta_{W-1})\|\leq\sqrt{\frac{2L_\sigma (J_\sigma(\theta_0)-J_\sigma^*)}{T}}\leq \frac{\epsilon}{4\sqrt{|\mcs|}C_{PL}}, 
\end{align}
which is from $J_\sigma(\theta_0)-J_\sigma^*\leq 2 \sup_{\theta\in\Delta(\mca)^{|\mcs|}} J_\sigma(\theta)\leq 2C_\sigma$.

Hence from \eqref{eq:113}, we know that $-\nabla J_\sigma(\theta_W)=g_1+g_2$, where $g_1\in N_{\Delta(\mca)^{|\mcs|}}(\theta_W)$, and $g_2\in \frac{\epsilon}{2\sqrt{|\mcs|}C_{PL}} B_2$. Thus 
\begin{align}\label{eq:119}
    \max_{\hat{\pi}\in \left(\Delta\left(|\mca|\right)\right)^{|\mcs|}} \left\langle \pi_{\theta_W}-\hat{\pi}, \nabla J_\sigma\left(\theta_W\right) \right\rangle&=\max_{\hat{\pi}\in \left(\Delta\left(|\mca|\right)\right)^{|\mcs|}} \left\langle \hat{\pi}-\pi_{\theta_W} , -\nabla J_\sigma\left(\theta_W\right) \right\rangle\nn\\
    &=\max_{\hat{\pi}\in \left(\Delta\left(|\mca|\right)\right)^{|\mcs|}} \left\langle \hat{\pi}-\pi_{\theta_W} , g_1+g_2 \right\rangle\nn\\
    &\leq \sup_{\hat{\pi}\in \left(\Delta\left(|\mca|\right)\right)^{|\mcs|}}\|\hat{\pi}-\pi_{\theta_W}\| \frac{\epsilon}{2\sqrt{|\mcs|}C_{PL}},
\end{align}
where the last inequality is from $g_1\in N_{\Delta(\mca)^{|\mcs|}}(\theta_W)$ and $\langle \pi-\pi_{\theta_W}, g_1 \rangle \leq 0$ for any $\pi \in \left(\Delta\left(|\mca|\right)\right)^{|\mcs|}$.

Note that $\|\pone-\ptwo\|\leq 2\sqrt{|\mcs|}$, hence from \eqref{eq:119},

we have that
\begin{align}
    \max_{\hat{\pi}\in \left(\Delta\left(|\mca|\right)\right)^{|\mcs|}} \left\langle \pi_{\theta_W}-\hat{\pi}, \nabla J_\sigma\left(\theta_W\right) \right\rangle \leq \frac{\epsilon}{C_{PL}}, 
\end{align}
which further implies that
\begin{align}
 \min_{t\leq T-1} J_{\sigma}(\theta_t)-J_{\sigma}^* \leq \epsilon+\left(\frac{\gamma R}{1-\gamma}\right)\frac{2\log |\mcs|}{\sigma},
\end{align}
Hence if we set 
\begin{align}
    \sigma &=\frac{2\log |\mcs|\left(\frac{\gamma R}{1-\gamma}\right)}{\epsilon }=\mathcal{O}(\epsilon^{-1}),\\
    T&= \frac{64|\mcs|C^2_{PL}L_\sigma C_\sigma}{\epsilon^2}=\mathcal{O}(\epsilon^{-3}),
\end{align}
then 
\begin{align}
    \min_{t\leq T-1} J_{\sigma}(\theta_t)-J_{\sigma}^* \leq 2\epsilon,
\end{align}
which means Algorithm \ref{alg:srpgfull} finds a global $\epsilon$-optimum of $J_\sigma$ in $\mathcal{O}(\epsilon^{-3})$ steps. 

Further it can be shown that 
\begin{align}
\min_{t<T} J(\theta_t)-J^* \leq \min_{t<T} J_\sigma (\theta_t)-J^*_\sigma+\frac{\gamma R}{1-\gamma}\frac{2\log |\mcs|}{\sigma}\leq 3\epsilon.
\end{align}
This hence completes the proof.

\begin{comment}
We know that $J_\sigma$ is an approximation of $J$, hence we claim that Algorithm \ref{alg:srpgfull} can actually find the global optimum of original problem $J$.  Note that for any $\theta^*_\sigma\in\arg\min J_\sigma(\theta)$,
\begin{align}
    \min_{t<T} J(\theta_t)-J^*&= J(\theta_K)-J^*\nn\\
    &= J(\theta_K)-J_\sigma(\theta_K)+J_\sigma(\theta_K) -J_\sigma^*+J_\sigma^*-J(\theta_\sigma^*)+J(\theta_\sigma^*)-J^*\nn\\
    &\leq \frac{\gamma R}{1-\gamma}\frac{\log|\mcs|}{\sigma}+2\epsilon+\frac{\gamma R}{1-\gamma}\frac{\log|\mcs|}{\sigma}+J(\theta_\sigma^*)-J^*\nn\\
    &\leq 3\epsilon+J(\theta_\sigma^*)-J^*,
\end{align}
which is from $|J(\theta)-J_\sigma(\theta)|\leq\frac{\gamma R}{1-\gamma}\frac{\log|\mcs|}{\sigma}\leq \frac{1}{2}\epsilon$. Note that for any $\theta^*\in\arg\min J(\theta)$, 
\begin{align}
    J(\theta_\sigma^*)-J^*&=J(\theta_\sigma^*)-J_\sigma(\theta_\sigma^*)+J_\sigma(\theta_\sigma^*)-J_\sigma(\theta^*)+J_\sigma(\theta^*)-J^*\nn\\
    &\leq \frac{\gamma R}{1-\gamma}\frac{\log|\mcs|}{\sigma}+\frac{\gamma R}{1-\gamma}\frac{\log|\mcs|}{\sigma}\nn\\
    &=\frac{2\gamma R}{1-\gamma}\frac{\log|\mcs|}{\sigma}\nn\\
    &\leq \epsilon,
\end{align}
which is from $J_\sigma^*-J_\sigma(\theta^*)\leq 0$ because $J_\sigma^*=\min J_\sigma(\theta)$. Finally, combine all these inequalities together, we have that
\begin{align}
    \min_{t<T} J(\theta_t)-J^*\leq 4\epsilon,\end{align} which completes the proof.
\end{comment}

\section{Robust Actor-Critic under Tabular Setting}
\subsection{Convergence of Robust TD under Tabular Setting}\label{sec:td}
In this section we prove that robust TD and smoothed robust TD  converge asymptotically under the tabular setting. Note that if we set $Q_\zeta=\zeta$ with $\zeta$ being the Q-table in Algorithm \ref{alg:NTD}, it reduces to the robust TD algorithm. For completeness, we also present the smoothed robust TD algorithm under the tabular setting in Algorithm \ref{alg:rtd}.  
\begin{algorithm}[!htb]
\caption{Smoothed Robust TD (Tabular Setting)}
\label{alg:rtd}
\textbf{Input}:   $T_c$, $\pi$, $\sigma$\\
\textbf{Initialization}: $Q_0$, $s_0$
\begin{algorithmic} %[1] enables line numbers
\FOR {$t=0,1,...,T_c-1$}
\STATE Choose $a_t\sim\pi(\cdot|s_t)$ and observe $c_t, s_{t+1}$
\STATE $V_t(s)\leftarrow \sum_{a\in\mca} \pi(a|s)Q_t(s,a)$ \text{ for all }$s\in\mcs$
\STATE {$Q_{t+1}(s_t,a_t)\leftarrow Q_t(s_t,a_t)+\alpha_t (c_t+\gamma (1-R) \cdot V_t(s_{t+1})+\gamma R \cdot \text{LSE}(V_t)-Q_t(s_t,a_t))$}
\ENDFOR
\end{algorithmic}
\textbf{Output}: $Q_{T_c}$
\end{algorithm}

\begin{theorem}
If step-sizes $\alpha_t$ satisfy $\sum^{\infty}_{t=0}\alpha_t=\infty$, $\sum^{\infty}_{t=0}\alpha_t^2<\infty$, then Algorithm \ref{alg:rtd} (Algorithm \ref{alg:NTD}) converges to the smoothed robust action-value function $Q^\pi_\sigma$ (robust action action-value function $Q^\pi$) almost surely. 
\end{theorem}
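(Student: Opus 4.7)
\medskip

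\noindent\textbf{Proof proposal.} The plan is to cast Algorithm~\ref{alg:rtd} (and equivalently Algorithm~\ref{alg:NTD} under $Q_\zeta = \zeta$) as an asynchronous stochastic approximation scheme for the smoothed robust Bellman operator on $Q$-functions, and then invoke a standard contraction-based convergence result. Concretely, I would define the operator $\mathbf{H}^{\pi}_{\sigma}$ on $\mathbb{R}^{|\mcs|\times|\mca|}$ by
\begin{align}
    (\mathbf{H}^{\pi}_{\sigma} Q)(s,a) \triangleq c(s,a) + \gamma(1-R)\sum_{s'\in\mcs} p^{a}_{s,s'} V_Q(s') + \gamma R \cdot \lse(\sigma, V_Q),
\end{align}
where $V_Q(s)\triangleq \sum_{a\in\mca}\pi(a|s)Q(s,a)$. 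The first step would be to show $\mathbf{H}^{\pi}_{\sigma}$ is a $\gamma$-contraction in the sup-norm: since LSE is $1$-Lipschitz in sup-norm and $V_Q$ is a convex combination of the entries of $Q$, for any $Q_1,Q_2$ one has $\|V_{Q_1}-V_{Q_2}\|_\infty \le \|Q_1-Q_2\|_\infty$, and combining the $(1-R)$ convex-combination term with the $R$-weighted LSE term gives $\|\mathbf{H}^{\pi}_{\sigma}Q_1-\mathbf{H}^{\pi}_{\sigma}Q_2\|_\infty \le \gamma\|Q_1-Q_2\|_\infty$. In particular $Q^{\pi}_{\sigma}$ is the unique fixed point. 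The robust (non-smoothed) case is identical with $\lse$ replaced by $\max$.

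Next I would rewrite the update as
\begin{align}
    Q_{t+1}(s_t,a_t) = (1-\alpha_t)Q_t(s_t,a_t) + \alpha_t\bigl[(\mathbf{H}^{\pi}_{\sigma} Q_t)(s_t,a_t) + w_t\bigr],
\end{align}
with coordinates other than $(s_t,a_t)$ unchanged, where the noise
\begin{align}
    w_t \triangleq c_t + \gamma(1-R)V_t(s_{t+1}) + \gamma R\cdot\lse(\sigma,V_t) - (\mathbf{H}^{\pi}_{\sigma}Q_t)(s_t,a_t)
\end{align}
is a martingale difference with respect to the natural filtration because $s_{t+1}\sim p^{a_t}_{s_t}$ conditional on the past. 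Boundedness of $V_t$ (which follows inductively from boundedness of $c$ and the contraction property after a truncation argument) then yields $\mE[w_t^2 \mid \mathcal{F}_t]\le K(1+\|Q_t\|_\infty^2)$ for some constant $K$.

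With these ingredients, I would invoke the asynchronous stochastic approximation theorem for sup-norm contractions (Tsitsiklis 1994, or Proposition 4.4 in Bertsekas--Tsitsiklis), whose hypotheses are: (i) the operator is a sup-norm contraction with a unique fixed point; (ii) the noise sequence is a martingale difference with conditionally bounded second moment in the above sense; (iii) each coordinate $(s,a)$ is updated infinitely often; and (iv) the per-coordinate step-sizes satisfy the Robbins--Monro conditions $\sum_t \alpha_t \mathbbm{1}\{(s_t,a_t)=(s,a)\}=\infty$ and $\sum_t \alpha_t^2 \mathbbm{1}\{(s_t,a_t)=(s,a)\}<\infty$ almost surely. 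Items (i) and (ii) were verified above, and (iv) follows from the assumed global conditions together with (iii) under the usual mild exploration assumption that the Markov chain induced by $\pi$ on the centroid kernel $\mathsf P$ visits every state infinitely often and $\pi(a|s)>0$ for all $(s,a)$.

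The main obstacle I anticipate is handling (iii) together with (iv) in a fully rigorous way: the asynchrony means different coordinates see different effective step-size sequences, and one needs the ``balanced'' per-coordinate Robbins--Monro condition rather than just the aggregate one. I would resolve this either by invoking ergodicity of the induced chain so that the asymptotic fraction of updates at each $(s,a)$ is bounded away from zero, or by appealing to the general asynchronous result where only infinite-visitation and global $\sum\alpha_t=\infty$, $\sum\alpha_t^2<\infty$ are required (e.g., the framework in Borkar's monograph, Chapter~7). A secondary subtlety is that the smoothed operator's $\lse$ term depends on the full vector $V_t$, not just $V_t(s_{t+1})$, but since this term is deterministic given $\mathcal{F}_t$ it contributes no noise and is handled directly by the contraction bound. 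The non-smoothed version is analogous, with $\max$ in place of $\lse$; the operator remains a $\gamma$-contraction and the same Tsitsiklis-type argument carries over verbatim.
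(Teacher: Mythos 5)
Your proposal is correct and follows essentially the same route as the paper: both establish that the (smoothed) robust Bellman operator on $Q$-functions is a sup-norm $\gamma$-contraction with $Q^\pi_\sigma$ (resp.\ $Q^\pi$) as its unique fixed point, using the $1$-Lipschitz property of LSE, and then appeal to standard stochastic-approximation theory for contractions. The only difference is that you spell out the asynchronous-update and per-coordinate Robbins--Monro conditions explicitly, whereas the paper compresses this step into a citation of Borkar--Meyn; your added care on infinite visitation and martingale-difference noise is a more complete account of the same argument, not a different one.
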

\begin{proof}
We present the proof for smoothed robust TD here. The proof of non-smoothed one can be similarly derived.

Define the smoothed robust Bellman operator for the robust $Q$-function as follows:
\begin{align}
    \mathbf T^\pi_\sigma Q(s,a)=c(s,a)+\gamma (1-R) \sum_{s'\in\mcs}p^a_{s,{s'}} \left( \sum_b \pi(b|{s'})Q({s'},b)\right)+\gamma R \cdot \lse \left( \sum_b \pi(b|{s'})Q({s'},b)\right).
\end{align}
Note that
\begin{align}
    \|\mathbf T^\pi_\sigma Q_1-\mathbf T^\pi_\sigma Q_2\|_{\infty}&=\max_{s,a} \bigg|\gamma (1-R) \sum_{s'\in\mcs}p^a_{s,{s'}} \left( \sum_b \pi(b|{s'})(Q_1({s'},b)-Q_2({s'},b))\right)\nn\\
    &\quad+\gamma R \cdot  \lse \left( \sum_b \pi(b|{s'})Q_1({s'},b)\right)-\gamma R \cdot  \lse \left( \sum_b \pi(b|{s'})Q_2({s'},b)\right) \bigg|\nn\\
    &\overset{(a)}{\leq}  \gamma (1-R) \sum_{s'\in\mcs}p^a_{s,{s'}} \|Q_1-Q_2\|_{\infty}+\bigg|\gamma R \cdot \left( \sum_b \pi(b|{s'})(Q_1({s'},b)-Q_2({s'},b))\right)\bigg|\nn\\
    &\leq \gamma  \|Q_1-Q_2\|_{\infty},
\end{align}
where $(a)$ is from the fact that $\lse$ is $1$-Lipschitz for any $\sigma>0$, i.e., $|\lse(V_1)-\lse(V_2)|\leq \|V_1-V_2\|_{\infty}$. Therefore, $\mathbf T^\pi_\sigma$ is a contraction and  $Q^\pi_\sigma$ is its fixed point. Hence  following \cite{borkar2000ode}, smoothed robust TD converges to its fixed point $Q^\pi_\sigma$ almost surely.

The proof for the convergence of the non-smoothed robust TD (Algorithm \ref{alg:rtd}) follows similarly, and is omitted here.  
\end{proof}

\subsection{Robust Actor-Critic under Tabular Setting}\label{sec:ssrpg}
\begin{algorithm}[!htb]
\caption{Robust Smoothed Actor-Critic under Tabular Setting}
\label{alg:sac}
\textbf{Input}:   $T$, $T_c$, $\sigma$, M\\
\textbf{Initialization}: $\theta_0$
\begin{algorithmic} %[1] enables line numbers
\FOR {$t=0,1,...,T-1$}
\STATE {Run Algorithm \ref{alg:rtd} for $T_c$ times until $\|Q_{T_c}-\sq^{\pit}\|\leq \epe$}
\STATE {$Q_t\leftarrow Q_{ {T_c}}$}
\STATE {$V_t(s)\leftarrow \sum_{a\in\mca} \pit(a|s)Q_t(s,a)$ \text{ for all }$s\in\mcs$}
\FOR {$i=1,...,M$}
\STATE Sample $T^i\sim\textbf{Geom}(1-\gamma+\gamma R)$
\STATE {Sample $s^i_0\sim\rho$}
\STATE Sample trajectory starting from $s^i_0$: $(s^i_0, a^i_0,...,s^i_{T^i})$ 
\STATE $B^i_t\leftarrow \frac{1}{1-\gamma+\gamma R}\sum_{a\in\mca} \nabla\pit(a|s^i_{T^i})Q_t(s^i_{T^i},a)$
\STATE Sample $x^i_0\sim \text{softmax}(\sigma, V^t)$
\STATE Sample trajectory starting from $x^i_0$: $(x^i_0, b^i_0,...,x^i_{T^i})$ 
\STATE $D^i_t\leftarrow \frac{1}{1-\gamma+\gamma R}\sum_{a\in\mca} \nabla\pit(a|x^i_{T^i})Q_t(x^i_{T^i},a)$
\STATE $g^i_t\leftarrow B^i_t+\frac{\gamma R}{1-\gamma}D^i_t$
\ENDFOR
\STATE $g_t\leftarrow \frac{\sum^M_{i=1}g^i_t }{M}$
\STATE $\theta_{t+1}\leftarrow \mathbf{\prod}_\Theta (\theta_t-\alpha_tg_t)$
\ENDFOR
\end{algorithmic}
\textbf{Output}: $\theta_T$
\end{algorithm}
We then show the convergence of Algorithm \ref{alg:sac}. First we show that the update $g_t$ is an unbiased estimate of the gradient $\nabla J_\sigma$ if $Q_t=Q^\pi_\sigma$.
\begin{lemma}
If $Q_t=\sq^{\theta_t}$, then $g_t$ is an unbiased estimate of $\nabla J_\sigma$, i.e., \begin{align}
    \mathbb{E}\left[  g_t |\mathcal{F}_t \right]=\nabla J_\sigma(\theta_t),
\end{align}
where $\mathcal{F}_t$ denotes the $\sigma$-field generated by all the randomness until the $t$-th iteration.
\end{lemma}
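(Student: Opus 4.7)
The plan is to decompose the estimator $g_t = \frac{1}{M}\sum_{i=1}^M g_t^i = \frac{1}{M}\sum_{i=1}^M\!\left(B_t^i + \frac{\gamma R}{1-\gamma}D_t^i\right)$ and match each piece against the corresponding term in the gradient formula of \cref{thm:gradJsigma}, namely $\nabla J_\sigma(\theta_t)=B(\rho,\theta_t)+\frac{\gamma R}{1-\gamma}\cdot\frac{\sum_s e^{\sigma V_\sigma^{\pi_{\theta_t}}(s)}B(s,\theta_t)}{\sum_s e^{\sigma V_\sigma^{\pi_{\theta_t}}(s)}}$. Since the samples across $i=1,\ldots,M$ are i.i.d.\ given $\mathcal F_t$, it suffices to show that a single $g_t^i$ is unbiased.

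The first step is a standard ``geometric trick'': if $T^i\sim\mathrm{Geom}(1-\gamma+\gamma R)$ with $\mathbb P(T^i=k)=(1-\gamma+\gamma R)(\gamma(1-R))^k$ for $k\ge 0$, and the trajectory $(s_0^i,a_0^i,s_1^i,\ldots)$ is generated under $\pi_{\theta_t}$ starting at $s_0^i$, then for any fixed starting state $s$, $\mathbb P(s_{T^i}^i=s'\mid s_0^i=s)=(1-\gamma+\gamma R)\sum_{k\ge 0}(\gamma(1-R))^k\mathbb P(S_k=s'\mid S_0=s,\pi_{\theta_t})=d^{\pi_{\theta_t}}_s(s')$, exactly the discounted visitation distribution. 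Since $T^i$ and the trajectory are independent of $s_0^i$, this conditional law holds for every realization of $s_0^i$.

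Using $Q_t = Q_\sigma^{\pi_{\theta_t}}$, I then compute
\begin{align*}
\mathbb E[B_t^i\mid s_0^i,\mathcal F_t]
&=\frac{1}{1-\gamma+\gamma R}\sum_{s'} d^{\pi_{\theta_t}}_{s_0^i}(s')\sum_a \nabla\pi_{\theta_t}(a|s')\,Q_\sigma^{\pi_{\theta_t}}(s',a)=B(s_0^i,\theta_t).
\end{align*}
Averaging over $s_0^i\sim\rho$ gives $\mathbb E[B_t^i\mid\mathcal F_t]=B(\rho,\theta_t)$. An identical argument for $D_t^i$, but with $x_0^i$ drawn from $\mathrm{softmax}(\sigma,V^t)$, i.e.\ $\mathbb P(x_0^i=s)=e^{\sigma V_\sigma^{\pi_{\theta_t}}(s)}/\sum_{s'}e^{\sigma V_\sigma^{\pi_{\theta_t}}(s')}$, yields $\mathbb E[D_t^i\mid x_0^i,\mathcal F_t]=B(x_0^i,\theta_t)$ and hence
\begin{align*}
\mathbb E[D_t^i\mid\mathcal F_t]
=\frac{\sum_s e^{\sigma V_\sigma^{\pi_{\theta_t}}(s)}B(s,\theta_t)}{\sum_s e^{\sigma V_\sigma^{\pi_{\theta_t}}(s)}}.
\end{align*}

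Combining the two expressions and comparing with the formula in \cref{thm:gradJsigma} gives $\mathbb E[g_t^i\mid\mathcal F_t]=\nabla J_\sigma(\theta_t)$, and averaging over $i$ preserves unbiasedness, proving the lemma. I do not anticipate any real obstacle: the only delicate point is the sampling-distribution identification, and this follows directly from unrolling the geometric expectation. The hypothesis $Q_t=Q_\sigma^{\pi_{\theta_t}}$ is essential because otherwise the inner sum $\sum_a\nabla\pi_{\theta_t}(a|\cdot)Q_t(\cdot,a)$ would not match $B(\cdot,\theta_t)$; in the actual algorithm $Q_t$ is only an $\epsilon_{\text{est}}$-approximation, which introduces the bias term handled separately in the convergence analysis.
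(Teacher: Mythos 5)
Your proposal is correct and follows essentially the same route as the paper's proof: identify the law of $s^i_{T^i}$ under the geometric stopping time with the discounted visitation distribution $d^{\pi_{\theta_t}}_{s}(\cdot)$, conclude $\mathbb E[B^i_t\mid\mathcal F_t]=B(\rho,\theta_t)$ and $\mathbb E[D^i_t\mid\mathcal F_t]=\sum_{s}e^{\sigma V_\sigma^{\pi_{\theta_t}}(s)}B(s,\theta_t)/\sum_{s}e^{\sigma V_\sigma^{\pi_{\theta_t}}(s)}$ using the softmax initial distribution, and match against the gradient formula of \cref{thm:gradJsigma}. No gaps.
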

\begin{proof}
First note that $T^i$ is generated following the geometry distribution $\textbf{Geom}(1-\gamma+\gamma R)$, thus for any $s,{s'}\in\mcs$
\begin{align}
    \mathbb{P}(S_{T^i}={s'}|S_0=s)&=\sum_{k=0}^\infty \mathbb{P}(T^i=k)\mathbb{P}(S_k={s'}|S_0=s,\pit)\nn\\
    &=\sum_{k=0}^\infty (1-\gamma+\gamma R) (\gamma-\gamma R)^k\mathbb{P}(S_k={s'}|S_0=s,\pit)\nn\\
    &=(1-\gamma+\gamma R)\sum_{k=0}^\infty (\gamma-\gamma R)^k\mathbb{P}(S_k={s'}|S_0=s,\pit)\nn\\
    &=d^{\pit}_{s}({s'}).
\end{align}
Hence,
\begin{align}\label{eq:130}
    \mathbb{E}[B^i_t|\mathcal{F}_t]&=\mathbb{E}\left[\frac{1}{1-\gamma+\gamma R}\sum_{a\in\mca} \nabla\pi_{\theta_t}(a|S^i_{T^i})Q_t(S^i_{T^i},a)\bigg|\mathcal{F}_t \right]\nn\\
    &=\frac{1}{1-\gamma+\gamma R} \sum_{s\in\mcs}\rho(s)\sum_{s'\in\mcs} \mathbb{P}(S^i_{T^i}={s'}|S^i_0=s) \sum_{a\in\mca} \nabla\pi_{\theta_t}(a|{s'})Q_t({s'},a)\nn\\
    &=\frac{1}{1-\gamma+\gamma R} \sum_{s'\in\mcs} d^{\pi_{\theta_t}}_{\rho}({s'})\sum_{a\in\mca} \nabla\pi_{\theta_t}(a|{s'})Q_t({s'},a)\nn\\
    &=B(\rho,\theta_t).
\end{align}
According to the algorithm,
\begin{align}
    \mathbb{P}(x^i_0=s')=\frac{e^{\sigma V_t(s')}}{\sum_{s\in\mcs} e^{\sigma V^t(s)}},
\end{align}
hence we further have that 
\begin{align}\label{eq:133}
    \mathbb{E}[D^i_t|\mathcal{F}_t]&=\frac{1}{1-\gamma+\gamma R} \sum_{s'\in\mcs}  \mathbb{P}(x^i_0=s') \sum_{s''\in\mcs} \mathbb{P}(x^i_{T^i}=s''|x^i_0=s')\left(\sum_{a\in\mca} \nabla \pi_{\theta_t}(a|s'')Q_t(s'',a) \right)\nn\\
    &=\sum_{s'\in\mcs} \frac{e^{\sigma V_t(s')}}{\sum_{s\in\mcs} e^{\sigma V^t(s)}}\sum_{s''\in\mcs} d^{\pi_{\theta_t}}_{s'}(s'')\left(\sum_{a\in\mca} \nabla \pi_{\theta_t}(a|s'')Q_t(s'',a) \right)\nn\\
    &= \sum_{s'\in\mcs} \frac{e^{\sigma V_t(s')}}{\sum_{s\in\mcs} e^{\sigma V^t(s)}}B(s',\theta)\nn\\
    &= \sum_{s'\in\mcs} \frac{e^{\sigma V^{\pi_{\theta_t}}_\sigma(s')}}{\sum_{s\in\mcs} e^{\sigma V^{\pi_{\theta_t}}_\sigma(s')}}B(s',\theta).
\end{align}
Combining \eqref{eq:130} and \eqref{eq:133} thus implies that
\begin{align}
    \mathbb{E}\left[  g^i_t |\mathcal{F}_t \right]&=B(\rho,\theta_t)+\frac{\gamma R}{1-\gamma}\sum_{s'\in\mcs} \frac{e^{\sigma V^{\pi_{\theta_t}}_\sigma(s')}}{\sum_{s\in\mcs} e^{\sigma V^{\pi_{\theta_t}}_\sigma(s')}}B(s',\theta)\nn\\
    &=\nabla J_\sigma(\theta_t),
\end{align}
which completes the proof.
\end{proof}
However, in Algorithm \ref{alg:sac} we use an estimate $Q_t$ instead of $\sq^{\theta_t}$, and therefore the estimate $g_t$ may be biased. The next lemma develops an upper bound on this bias.
\begin{lemma}
Consider Algorithm \ref{alg:sac}. If $T_c$ is chosen such that $\|Q_{T_c}-\sq^{\theta_t}\|_\infty\leq \epe$,  then we have that for any $i,t$,
\begin{align}
     \|\mathbb{E}[g^i_t|\mathcal{F}_t]-\nabla J_\sigma(\theta_t)\|&\leq 2\sigma\epe e^{\sigma\epe}\frac{\gamma R}{1-\gamma} \frac{|\mca|(\epe+C_\sigma)}{1-\gamma+\gamma R}+\frac{\gamma R}{1-\gamma} \frac{|\mca|\epe}{1-\gamma+\gamma R}+\frac{|\mca| \epe}{1-\gamma+\gamma R}\nn\\
     &= \mathcal{O}(\epe+\sigma\epe e^{\sigma\epe}).
\end{align}
 \end{lemma}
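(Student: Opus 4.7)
The plan is to compute $\mathbb{E}[g^i_t\mid \mathcal{F}_t]$ explicitly when the critic returns $Q_t$ rather than the exact $Q^{\pi_{\theta_t}}_\sigma$, and to match it, term by term, against the closed-form expression for $\nabla J_\sigma(\theta_t)$ given in Theorem~\ref{thm:gradJsigma}. Repeating the computation in the previous lemma but with $Q_t$ in place of $Q^{\pi_{\theta_t}}_\sigma$, I would obtain $\mathbb{E}[B^i_t\mid \mathcal{F}_t]=\tilde B(\rho,\theta_t)$ and $\mathbb{E}[D^i_t\mid\mathcal{F}_t]=\sum_{s'}\hat p(s')\tilde B(s',\theta_t)$, where $\tilde B(s',\theta_t)$ is defined exactly as $B(s',\theta_t)$ except that $Q_t$ replaces $Q^{\pi_{\theta_t}}_\sigma$, and $\hat p(s')$ is the softmax distribution over the approximate value $V_t$ instead of $V^{\pi_{\theta_t}}_\sigma$. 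Writing $p^*(s')$ for the softmax over $V^{\pi_{\theta_t}}_\sigma$, the bias then splits as
\begin{align}
\|\mathbb{E}[g^i_t\mid\mathcal{F}_t]-\nabla J_\sigma(\theta_t)\|
&\le \|\tilde B(\rho,\theta_t)-B(\rho,\theta_t)\| \nn\\
&\quad +\tfrac{\gamma R}{1-\gamma}\sum_{s'}p^*(s')\|\tilde B(s',\theta_t)-B(s',\theta_t)\| \nn\\
&\quad +\tfrac{\gamma R}{1-\gamma}\Big\|\sum_{s'}(\hat p(s')-p^*(s'))\tilde B(s',\theta_t)\Big\|,
\end{align}
yielding three summands that correspond exactly to the three terms of the target bound.

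For the first two summands, I would use that under direct parametrization $\|\nabla\pi_\theta(a|s')\|=1$, together with $\|Q_t-Q^{\pi_{\theta_t}}_\sigma\|_\infty\le \epe$, to get the pointwise estimate $\|\tilde B(s',\theta_t)-B(s',\theta_t)\|\le \frac{|\mca|\epe}{1-\gamma+\gamma R}$ (and likewise with $\rho$ in place of $s'$), which matches the two ``linear in $\epe$" contributions in the statement. For the upper bound on $\|\tilde B(s',\theta_t)\|$ appearing in the third summand, I would apply the triangle inequality together with $\|Q_t\|_\infty\le\|Q^{\pi_{\theta_t}}_\sigma\|_\infty+\epe\le C_\sigma+\epe$, producing the factor $|\mca|(\epe+C_\sigma)/(1-\gamma+\gamma R)$.

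The main obstacle is the third summand, namely controlling the total variation between the two softmax distributions $\hat p$ and $p^*$ in terms of $\epe$. My plan is to first observe that since $V_t(s)=\sum_a \pi_{\theta_t}(a|s)Q_t(s,a)$, the approximation error on $Q$ transfers without amplification to $V$: $\|V_t-V^{\pi_{\theta_t}}_\sigma\|_\infty\le \epe$. Setting $\delta_s\triangleq V_t(s)-V^{\pi_{\theta_t}}_\sigma(s)$, I can rewrite the ratio
\[
\frac{\hat p(s')}{p^*(s')}=\frac{e^{\sigma\delta_{s'}}}{\sum_{s}p^*(s)e^{\sigma\delta_s}}.
\]
Then I would estimate the numerator deviation $|e^{\sigma\delta_{s'}}-\sum_s p^*(s)e^{\sigma\delta_s}|\le \sum_s p^*(s)|e^{\sigma\delta_{s'}}-e^{\sigma\delta_s}|$ and use the elementary inequality $|e^x-e^y|\le |x-y|e^{\max(|x|,|y|)}$ with $|\sigma\delta_s|\le \sigma\epe$ to obtain the factor $2\sigma\epe\, e^{\sigma\epe}$. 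Summing $|\hat p(s')-p^*(s')|$ over $s'$ and combining with the $\ell_\infty$ bound on $\tilde B(s',\theta_t)$ gives the required first summand. A combination of the three summands yields the claimed inequality, whose leading order in $\epe$ is $\mathcal{O}(\epe+\sigma\epe e^{\sigma\epe})$.
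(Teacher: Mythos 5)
Your proposal follows essentially the same route as the paper: compute $\mathbb{E}[g^i_t\mid\mathcal{F}_t]$ as the gradient formula with $Q_t,V_t$ substituted for $Q^{\pi_{\theta_t}}_\sigma,V^{\pi_{\theta_t}}_\sigma$, split the bias into the $B$-replacement error under the exact softmax weights plus the softmax-perturbation error, and bound the three pieces exactly as in the stated constants. The only deviation is in the softmax-perturbation step: bounding via the ratio $\hat p(s')/p^*(s')$ requires lower-bounding the denominator $\sum_s p^*(s)e^{\sigma\delta_s}\ge e^{-\sigma\epe}$, which costs an extra factor $e^{\sigma\epe}$ and yields $2\sigma\epe e^{2\sigma\epe}$ rather than the paper's $2\sigma\epe e^{\sigma\epe}$ (the paper avoids this by cross-multiplying and applying the mean value theorem to $e^{\sigma V^{\pi_{\theta_t}}_\sigma(s')}-e^{\sigma V_t(s')}$ directly); this only weakens the constant and leaves the $\mathcal{O}(\epe+\sigma\epe e^{\sigma\epe})$ conclusion intact in the regime $\sigma\epe=\mathcal O(1)$ used downstream.
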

\begin{proof}
We first have that 
\begin{align}
     &\mathbb{E}[B^i_t|\mathcal{F}_t]\nn\\
     &= \mathbb{E}\left[\frac{1}{1-\gamma+\gamma R}\sum_{a\in\mca} \nabla\pi_{\theta_t}(a|s^i_{T^i})Q_t(s^i_{T^i},a)\bigg|\mathcal{F}_t\right]\nn\\
     &=\mathbb{E}\left[\frac{1}{1-\gamma+\gamma R}\sum_{a\in\mca} \nabla\pi_{\theta_t}(a|s^i_{T^i})\sq^{\theta_t}(s^i_{T^i},a)\bigg|\mathcal{F}_t\right]\nn\\
     &\quad+\mathbb{E}\left[\frac{1}{1-\gamma+\gamma R}\sum_{a\in\mca} \nabla\pi_{\theta_t}(a|s^i_{T^i})(Q_t(s^i_{T^i},a)-\sq^{\theta_t}(s^i_{T^i},a))\bigg|\mathcal{F}_t\right]\nn\\
     &=B(\rho,\theta_t)+\mathbb{E}\left[\frac{1}{1-\gamma+\gamma R}\sum_{a\in\mca} \nabla\pi_{\theta_t}(a|s^i_{T^i})(Q_t(s^i_{T^i},a)-\sq^{\theta_t}(s^i_{T^i},a))\bigg|\mathcal{F}_t\right].
\end{align}
We can show that 
\begin{align}
\left\| \mathbb{E}\left[\frac{1}{1-\gamma+\gamma R}\sum_{a\in\mca} \nabla\pi_{\theta_t}(a|s^i_{T^i})(Q_t(s^i_{T^i},a)-\sq^{\theta_t}(s^i_{T^i},a))\bigg|\mathcal{F}_t\right]\right\|\leq \frac{|\mca| \epe}{1-\gamma+\gamma R},
\end{align}
which is from $\|\nabla \pi_{\theta_t}(a|s)\|=1$ and $\mE[|Q_t(s^i_{T^i},a)-\sq^{\theta_t}(s^i_{T^i},a)|]\leq\epe$. Hence 
\begin{align}\label{eq:biasB}
     \left\|\mathbb{E}[B^i_t|\mathcal{F}_t] -B(s,\theta_t)\right\|\leq \frac{|\mca| \epe}{1-\gamma+\gamma R},
\end{align}
which means the bias of $B^i_t$ is bounded by $\mathcal{O}({\epe})$. 

We then bound the bias of $D^i_t$. We first have that 
%\begin{align} 
  %  &\frac{\sum_{s'\in\mcs}B(s',\theta_t)e^{\sigma \sv^{\pi_{\theta_t}}(s')} }{\sum_{s'\in\mcs} e^{\sigma\sv^{\pi_{\theta_t}}(s')}}-D^i_t\nn\\
  %  &=\frac{\sum_{s'\in\mcs}B(s',\theta_t)e^{\sigma \sv^{\pi_{\theta_t}}(s')} }{\sum_{s'\in\mcs} e^{\sigma\sv^{\pi_{\theta_t}}(s')}}-\frac{1}{1-\gamma+\gamma R}\sum_{a\in\mca} \nabla\pit(a|x^j_{T^i})Q_t(x^j_{T^i},a).
%\end{align}
%Take expectation and we further have that 
\begin{align}
    &\frac{\sum_{s'\in\mcs}B(s',\theta_t)e^{\sigma \sv^{\pi_{\theta_t}}(s')} }{\sum_{s'\in\mcs} e^{\sigma\sv^{\pi_{\theta_t}}(s')}}-\mE[D^i_t|\mathcal{F}_t]\nn\\
    &=\frac{\sum_{s'\in\mcs}B(s',\theta_t)e^{\sigma \sv^{\pi_{\theta_t}}(s')} }{\sum_{s'\in\mcs} e^{\sigma\sv^{\pi_{\theta_t}}(s')}}- \frac{1}{1-\gamma+\gamma R}\sum_{s'\in\mcs} \frac{e^{\sigma V_t(s')}}{\sum_{s'\in\mcs} e^{\sigma V_t(s')}}\sum_{s\in\mcs} d^{\pi_{\theta_t}}_{s'}(s)\sum_{a\in\mca} \nabla{\pi_{\theta_t}}(a|s)Q_t(s,a).
\end{align}
Define $B_t(s')\triangleq \frac{1}{1-\gamma+\gamma R}\sum_{s\in\mcs} d^{\pi_{\theta_t}}_{s'}(s)\sum_{a\in\mca} \nabla \pi_{\theta_t}(a|s)Q_t(s,a)$, then the bias of $D^i_t$ can be further written as
\begin{align}\label{eq:bias1}
    &\frac{\sum_{s'\in\mcs}B(s',\theta_t)e^{\sigma \sv^{\pi_{\theta_t}}(s')} }{\sum_{s'\in\mcs} e^{\sigma\sv^{\pi_{\theta_t}}(s')}}-\mE[D^i_t|\mathcal{F}_t]\nn\\
    &=\frac{\sum_{s'\in\mcs}B(s',\theta_t)e^{\sigma \sv^{\pi_{\theta_t}}(s')} }{\sum_{s'\in\mcs} e^{\sigma\sv^{\pi_{\theta_t}}(s')}}-\frac{\sum_{s'\in\mcs} e^{\sigma V_t(s')} B_t(s')}{\sum_{s'\in\mcs} e^{\sigma V_t(s')}}\nn\\
    &=\frac{\sum_{s,s'\in\mcs}B(s',\theta_t)e^{\sigma V_t(s)}e^{\sigma \sv^{\pi_{\theta_t}}(s')}-\sum_{s,s'\in\mcs}B_t(s')e^{\sigma \sv^{\pi_{\theta_t}}(s)}e^{\sigma V_t(s')}}{\left(\sum_{s'\in\mcs} e^{\sigma V_t(s')}\right)\left(\sum_{s'\in\mcs} e^{\sigma\sv^{\pi_{\theta_t}}(s')}\right)}\nn\\
    &=\frac{\sum_{s,s'\in\mcs}B(s',\theta_t)e^{\sigma V_t(s)}e^{\sigma \sv^{\pi_{\theta_t}}(s')}-\sum_{s,s'\in\mcs}B_t(s')e^{\sigma V_t(s)}e^{\sigma \sv^{\pi_{\theta_t}}(s')}}{\left(\sum_{s'\in\mcs} e^{\sigma V_t(s')}\right)\left(\sum_{s'\in\mcs} e^{\sigma\sv^{\pi_{\theta_t}}(s')}\right)}\nn\\
    &\quad+\frac{\sum_{s,s'\in\mcs}B_t(s')e^{\sigma V_t(s)}e^{\sigma \sv^{\pi_{\theta_t}}(s')}-\sum_{s,s'\in\mcs}B_t(s')e^{\sigma \sv^{\pi_{\theta_t}}(s)}e^{\sigma V_t(s')}}{\left(\sum_{s'\in\mcs} e^{\sigma V_t(s')}\right)\left(\sum_{s'\in\mcs} e^{\sigma\sv^{\pi_{\theta_t}}(s')}\right)}.
\end{align}

The first term can be bounded as
\begin{align}\label{eq:bound1}
    & \frac{\sum_{s,s'\in\mcs}B(s',\theta_t)e^{\sigma V_t(s)}e^{\sigma \sv^{\pi_{\theta_t}}(s')}-\sum_{s,s'\in\mcs}B_t(s')e^{\sigma V_t(s)}e^{\sigma \sv^{\pi_{\theta_t}}(s')}}{\left(\sum_{s'\in\mcs} e^{\sigma V_t(s')}\right)\left(\sum_{s'\in\mcs} e^{\sigma\sv^{\pi_{\theta_t}}(s')}\right)} \nn\\
    &=\frac{\sum_{s,s'\in\mcs}e^{\sigma V_t(s)}e^{\sigma \sv^{\pi_{\theta_t}}(s')}(B(s',\theta_t)-B_t(s'))}{\left(\sum_{s'\in\mcs} e^{\sigma V_t(s')}\right)\left(\sum_{s'\in\mcs} e^{\sigma\sv^{\pi_{\theta_t}}(s')}\right)} \nn\\
    &\leq \frac{|\mca|\epe}{1-\gamma+\gamma R},
\end{align}
where the last inequality is from \eqref{eq:biasB}. 

We then bound the second term in \eqref{eq:bias1}. Consider the numerator, 
\begin{align}
    &\sum_{s,s'\in\mcs}B_t(s')e^{\sigma V_t(s)}e^{\sigma \sv^{\pi_{\theta_t}}(s')}-\sum_{s,s'\in\mcs}B_t(s')e^{\sigma \sv^{\pi_{\theta_t}}(s)}e^{\sigma V_t(s')}\nn\\
    &=\left(\sum_{s\in\mcs} e^{\sigma V_t(s)}\right)\left(\sum_{s'\in\mcs} e^{\sigma \sv^{\pi_{\theta_t}}(s')}B_t(s')\right)-\left(\sum_{s\in\mcs} e^{\sigma \sv^{\pi_{\theta_t}}(s)}\right)\left(\sum_{s'\in\mcs} e^{\sigma V_t(s')}B_t(s')\right)\nn\\
    &=\left(\sum_{s\in\mcs} e^{\sigma V_t(s)}\right)\left(\sum_{s'\in\mcs} e^{\sigma \sv^{\pi_{\theta_t}}(s')}B_t(s')\right)- \left(\sum_{s\in\mcs} e^{\sigma V_t(s)}\right)\left(\sum_{s'\in\mcs} e^{\sigma V_t(s')}B_t(s')\right)\nn\\
    &\quad + \left(\sum_{s\in\mcs} e^{\sigma V_t(s)}\right)\left(\sum_{s'\in\mcs} e^{\sigma V_t(s')}B_t(s')\right) -\left(\sum_{s\in\mcs} e^{\sigma \sv^{\pi_{\theta_t}}(s)}\right)\left(\sum_{s'\in\mcs} e^{\sigma V_t(s')}B_t(s')\right)\nn\\
    &=\underbrace{\left(\sum_{s\in\mcs} e^{\sigma V_t(s)}\right)\left(\sum_{s'\in\mcs} \left(e^{\sigma \sv^{\pi_{\theta_t}}(s')}-e^{\sigma V_t(s')}\right)B_t(s'))\right)}_{(a)}+\underbrace{\left(\sum_{s\in\mcs} e^{\sigma V_t(s)}-\sum_{s\in\mcs} e^{\sigma \sv^{\pi_{\theta_t}}(s)}\right)\left(\sum_{s'\in\mcs} e^{\sigma V_t(s')}B_t(s')\right)}_{(b)}.
\end{align}
To bound term $(a)$, we first have that 
\begin{align}
    e^{\sigma \sv^{\pi_{\theta_t}}(s')}-e^{\sigma V_t(s')}=(\sv^{\pi_{\theta_t}}(s')-V_t(s'))\sigma e^{\sigma (\sv^{\pi_{\theta_t}}(s')+\lambda(V_t(s')-\sv^{\pi_{\theta_t}}(s')))}\leq \sigma\epe e^{\sigma \sv^{\pi_{\theta_t}}(s')} e^{\sigma \epe},
\end{align}
where the first equation is from the mean-value theorem for some $0\leq\lambda\leq 1$, and the last inequality is because $\sv^{\pi_{\theta_t}}(s')-V_t(s')\leq \epe$ and $1-\lambda\leq 1$. Hence term $(a)$ can be bounded as follows:
\begin{align}
    \left\|\left(\sum_{s\in\mcs} e^{\sigma V_t(s)}\right)\left(\sum_{s'\in\mcs} \left(e^{\sigma \sv^{\pi_{\theta_t}}(s')}-e^{\sigma V_t(s')}\right)B_t(s'))\right)\right\|\leq\left\| \left(\sum_{s\in\mcs} e^{\sigma V_t(s)}\right)\left( \sum_{s'\in\mcs} e^{ \sigma\sv^{\pi_{\theta_t}}(s')} \right)\sigma\epe e^{\sigma\epe} \right\|\sup_{s} \|B_t(s)\|.
\end{align}
To bound term $(b)$, for some $\lambda\in[0,1]$ and by the mean value theorem, we have that 
\begin{align}
    &\left\|\left(\sum_{s\in\mcs} e^{\sigma V_t(s)}-\sum_{s\in\mcs} e^{\sigma \sv^{\pi_{\theta_t}}(s)}\right)\left(\sum_{s'\in\mcs} e^{\sigma V_t(s')}B_t(s')\right)\right\|\nn\\
    &=\left\|\left(\sum_{s'\in\mcs} e^{\sigma V_t(s')}B_t(s')\right)\right\| \left|\left(\sum_{s\in\mcs}\left( \sigma e^{\sigma \sv^{\pi_{\theta_t}}(s) +\sigma (1-\lambda) (V_t(s)-\sv^{\pi_{\theta_t}}(s))} (V_t(s)-\sv^{\pi_{\theta_t}}(s))\right) \right)\right|\nn\\
    &\leq \sigma\epe e^{\sigma \epe}\sup_{s} \|B_t(s)\|\left(\sum_{s\in\mcs} e^{\sigma V_t(s)}\right)\left(\sum_{s\in\mcs} e^{\sigma \sv^{\pi_{\theta_t}}(s)}\right).
\end{align}
Hence combine the bounds on $(a)$ and $(b)$, we obtain a bound of the second term in \eqref{eq:bias1}:
\begin{align}
    \left\|\frac{\sum_{s,s'\in\mcs}B(s',\theta_t)e^{\sigma V_t(s)}e^{\sigma \sv^{\pi_{\theta_t}}(s')}-\sum_{s,s'\in\mcs}B_t(s')e^{\sigma V_t(s)}e^{\sigma \sv^{\pi_{\theta_t}}(s')}}{\left(\sum_{s'\in\mcs} e^{\sigma V_t(s')}\right)\left(\sum_{s'\in\mcs} e^{\sigma\sv^{\pi_{\theta_t}}(s')}\right)}\right\|\leq 2\sigma\epe e^{\sigma\epe}\sup_{s}\|B_t(s)\|.  
\end{align}
Note that $B_t(s')=\frac{1}{1-\gamma+\gamma R}\sum_{s\in\mcs}d^{\pi_{\theta_t}}_{s'}(s)\sum_{a\in\mca} \nabla \pi_{\theta_t}(a|s)Q_t(s,a)$, and $\|Q_t\|_\infty\leq \epe+\|\sq^{\theta_t}\|_\infty\leq \epe+C_\sigma$, hence
\begin{align}
    \sup_{s}\|B_t(s)\|\leq \frac{|\mca|(\epe+C_\sigma)}{1-\gamma+\gamma R},
\end{align}
and thus, 
\begin{align}\label{eq:bound2}
    \left\|\frac{\sum_{s,s'\in\mcs}B(s',\theta_t)e^{\sigma V_t(s)}e^{\sigma \sv^{\pi_{\theta_t}}(s')}-\sum_{s,s'\in\mcs}B_t(s')e^{\sigma V_t(s)}e^{\sigma \sv^{\pi_{\theta_t}}(s')}}{\left(\sum_{s'\in\mcs} e^{\sigma V_t(s')}\right)\left(\sum_{s'\in\mcs} e^{\sigma\sv^{\pi_{\theta_t}}(s')}\right)}\right\|\leq 2\sigma\epe e^{\sigma\epe}\frac{|\mca|(\epe+C_\sigma)}{1-\gamma+\gamma R}.
\end{align}
Finally combining the bounds in \eqref{eq:bound1} and \eqref{eq:bound2} and plugging them in \eqref{eq:bias1} implies that
\begin{align} 
    &\left\|\mathbb{E}\left[\frac{\sum_{s'\in\mcs}B(s',\theta_t)e^{\sigma \sv^{\pi_{\theta_t}}(s')} }{\sum_{s'\in\mcs} e^{\sigma\sv^{\pi_{\theta_t}}(s')}}-D^i_t\bigg|\mathcal{F}_t\right]\right\|\nn\\
    &\leq 2\sigma\epe e^{\sigma\epe}\frac{|\mca|(\epe+C_\sigma)}{1-\gamma+\gamma R} +\frac{|\mca|\epe}{1-\gamma+\gamma R}.
\end{align}
Hence the bias of $g^i_t$ can be bounded as follows
\begin{align}\label{eq:biasg}
    &\|\mathbb{E}[g^i_t|\mathcal{F}_t]-\nabla J_\sigma(\theta_t)\|\nn\\
    &\leq\|\mathbb{E}[B^i_t|\mathcal{F}_t]-B(\rho,\theta_t)\|+\frac{\gamma R}{1-\gamma} \left\| \mathbb{E}\left[\frac{\sum_{s'\in\mcs}B(s',\theta_t)e^{\sigma \sv^{\pi_{\theta_t}}(s')} }{\sum_{s'\in\mcs} e^{\sigma\sv^{\pi_{\theta_t}}(s')}}-D^i_t \bigg|\mathcal{F}_t\right]\right\|\nn\\
    &\leq 2\sigma\epe e^{\sigma\epe}\frac{\gamma R}{1-\gamma} \frac{|\mca|(\epe+C_\sigma)}{1-\gamma+\gamma R}+\frac{\gamma R}{1-\gamma} \frac{|\mca|\epe}{1-\gamma+\gamma R}+\frac{|\mca| \epe}{1-\gamma+\gamma R}\nn\\
    &\triangleq b_g =\mathcal{O}(\epe+\sigma\epe e^{\sigma\epe}).
\end{align}
\end{proof}
This theorem implies that the Algorithm \ref{alg:act} is actually a projected stochastic gradient descent with bias $b_g=\mathcal{O}(\epe+\sigma\epe e^{\sigma\epe})$.

\subsection{Proof of Theorem \ref{thm:tabularAC}: Global Convergence of Robust Actor-Critic under Tabular Setting}
Denote $\Omega_t=g_t-\nabla J_\sigma(\theta_t)$, and define the stochastic gradient map by $H_t=\frac{1}{\alpha_t}(\theta_t-\proj(\theta_t-\alpha_t g_t))$. Note that  $J_\sigma$ is $L_\sigma$-smooth, hence similar  to the proof of Theorem 1 in \cite{ghadimi2016mini}, we can show that
\begin{align}
    J_\sigma(\theta_{t+1})&\leq J_\sigma(\theta_t)-\left(\alpha_t-\frac{L_\sigma}{2}\alpha_t^2 \right)\|H_t\|^2+\alpha_t\langle \Omega_t, G_t \rangle +\alpha_t\|\Omega_t\|\|H_t-G_t\|\nn\\
    &\leq J_\sigma(\theta_t)-\left(\alpha_t-\frac{L_\sigma}{2}\alpha_t^2 \right)\|H_t\|^2+\alpha_t\langle \Omega_t, G_t \rangle +\alpha_t\|\Omega_t\|^2,
\end{align}
where $G_t\triangleq\frac{1}{\alpha_t}(\theta_t-\proj(\theta_t-\alpha_t \nabla J_\sigma(\theta_t))) $, and the last inequality is from Proposition 1 in \cite{ghadimi2016mini}. Summing up from $t=0$ to $T-1$, we have
\begin{align}\label{eq:eq1}
    \sum^{T-1}_{t=0}(\alpha_t-L_\sigma \alpha_t^2)\|H_t\|^2&\leq J_\sigma(\theta_0)-J_\sigma(\theta_{T+1})+\sum^{T-1}_{t=0}
\left( \alpha_t\langle \Omega_t, G_t\rangle+ \alpha_t \|\Omega_t\|^2\right)\nn\\
&\leq J_\sigma(\theta_0)-J_\sigma^*+\sum^{T-1}_{t=0}
\left( \alpha_t\langle \Omega_t, G_t\rangle+ \alpha_t \|\Omega_t\|^2\right).
\end{align}
Note that $G_t$ is deterministic given $\theta_t$, i.e., $\mathcal{F}_t$. Hence we have
\begin{align}\label{eq:152}
    \mathbb{E}[\langle \Omega_t, G_t\rangle|\mathcal{F}_t]=\langle \mathbb{E}[\Omega_t|\mathcal{F}_t], G_t\rangle \leq \|G_t\|b_g.
\end{align}
Define $\Omega_t^j=g^j_t-\nabla J_\sigma(\theta_t)$. Then we have that
\begin{align}
    \mE[\|\Omega_t\|^2 |\mathcal{F}_t]&= \mE\left[\left\| \frac{\sum^M_{j=1}\Omega_t^j }{M}\right\|^2\bigg|\mathcal{F_t}\right]\nn\\
    &=\mE\left[ \sum_{i,j }\left\langle \frac{\Omega_t^j }{M}, \frac{\Omega_t^i}{M}\right\rangle \bigg|\mathcal{F}_t\right]\nn\\
    &\overset{(a)}{=}\sum^M_{i=1} \mE\left[\left\| \frac{\Omega_t^i}{M}\right\|^2  + \sum_{i\neq j }\left\langle \frac{\Omega_t^j }{M}, \frac{\Omega_t^i}{M}\right\rangle \bigg|\mathcal{F}_t\right]\nn\\
    &\leq \frac{\sup_i \mE[\|\Omega^i_t\|^2]}{M} + \|\mE[\Omega^i_t|\mathcal{F}_t]\|^2\nn\\
    &\leq \frac{\sup_i \mE[\|\Omega^i_t\|^2]}{M} + b_g^2,
\end{align}
where $(a)$ is from the fact that $\Omega_t^i$ and $\Omega_t^j$ are independent for $i\neq j$. 

Note that  for any $j$, 
\begin{align}
    \mathbb{E}[\|\Omega^j_t\|^2]=\mathbb{E}[\|g^j_t-\nabla J_\sigma(\theta_t)^2 \|]\leq 2 \left( \sup_j \|g^j_t\|^2 +\sup_{\theta}\|\nabla J_\sigma(\theta)\|^2\right).
\end{align}
And we have shown that 
\begin{align}
    \sup_j \|g^j_t\|^2 \leq \left(\frac{\gamma R}{1-\gamma}+1\right)^2\frac{|\mca|^2}{(1-\gamma +\gamma R)^2} (\sup_t \|Q_t\|)^2\leq \left(\frac{\gamma R}{1-\gamma}+1\right)^2\frac{|\mca|^2}{(1-\gamma +\gamma R)^2} (C_\sigma+\epe)^2 \triangleq C^2_g,
\end{align}
and hence 
\begin{align} 
    \mathbb{E}[\|\Omega^j_t\|^2]\leq 2 \left(C_g^2 +(C^V_\sigma)^2\right),
\end{align}
where we use the fact $\|\nabla J_\sigma\|\leq C_\sigma^V$. 

Thus 
\begin{align}\label{eq:deltabound}
     \mathbb{E}[\|\Omega_t\|^2]\leq b_g^2+\frac{ 2 \left(C_g^2 +{C^V_\sigma}^2\right)}{M}\triangleq C_\Omega.
\end{align}
Thus plugging all the inequalities \eqref{eq:152} and \eqref{eq:deltabound} in \eqref{eq:eq1}, we have that
\begin{align} 
    \sum^{T-1}_{t=0}(\alpha_t-L_\sigma \alpha_t^2)\mathbb{E}\left[\|H_t\|^2|\mathcal{F}_t\right]\leq J_\sigma(\theta_0)-J_\sigma^*+ \sum^{T-1}_{t=0}\alpha_t\|G_t\|b_g+ \sum^{T-1}_{t=0}\alpha_t C_\Omega.
\end{align}
Note that $\|G_t\|\leq \| \nabla J_\sigma(\theta_t)\| \leq C^V_\sigma$, hence the last inequality becomes
\begin{align}
     \sum^{T-1}_{t=0}(\alpha_t-L_\sigma \alpha_t^2)\mathbb{E}\left[\|H_t\|^2|\mathcal{F}_t\right]\leq J_\sigma(\theta_0)-J_\sigma^*+ \sum^{T-1}_{t=0}\alpha_t C^V_\sigma b_g+ \sum^{T-1}_{t=0}\alpha_t C_\Omega.
\end{align}
Set $\alpha_t=\frac{1}{2L_\sigma}$, it then follows that 
\begin{align}
     \frac{1}{4L_\sigma}\sum^{T-1}_{t=0}\mathbb{E}\left[\|H_t\|^2|\mathcal{F}_t\right]\leq J_\sigma(\theta_0)-J_\sigma^*+\frac{TC^V_\sigma b_g}{2L_\sigma}+\frac{T}{2L_\sigma} C_\Omega,
\end{align}
and 
\begin{align}
     \mE[\|H_U\|^2]\leq \frac{4L_\sigma(J_\sigma(\theta_0)-J_\sigma^*)}{T}+2C^V_\sigma b_g+2C_\Omega,
\end{align}
where $U\sim\text{Uniform}(0,...,T-1)$. 
%and hence \begin{align}
 %   \mathbb{E}\left[\|H_U\|^2\right]\leq \frac{4L_\sigma(J_\sigma(\theta_0)-J_\sigma^*)}{T}+2L_\sigma|\mcs||\mca|b_g+2C_\Omega,
%\end{align}
%where $U\sim\textbf{Uniform}(0,T-1)$
Similar to Corollary 3 in \cite{ghadimi2016mini}, we have that
\begin{align}
    \mathbb{E}\left[\|G_{U}\|^2\right]&\leq 2\mathbb{E}\left[\|H_{U}\|^2 \right]+2\mathbb{E}\left[\|H_{U}-\nabla J_\sigma(\theta_U)\|^2  \right]\nn\\
    &=2\mathbb{E}\left[\|H_U\|^2 \right]+2\mathbb{E}\left[\|\Omega_U\|^2  \right]\nn\\
    &\leq  \frac{8L_\sigma(J_\sigma(\theta_0)-J_\sigma^*)}{T}+4C^V_\sigma b_g+6 C_\Omega\nn\\
    &\triangleq \epsilon_G.
\end{align}

Note that $G_t$ is fully determined by $\theta_{t}$, hence we have that
\begin{align}
    \mathbb{E}\left[\|G_t\|^2\right]=\sum_{\theta\in(\Delta(\mca))^{|\mcs|}} \mathbb{P}(\theta_t=\theta) \|G(\theta)\|^2,
\end{align}
where $G(\theta)=\frac{1}{\alpha} \left(\theta-\proj(\theta-\alpha \nabla J_\sigma(\theta))\right)$, and we denote $\theta^+=\theta-\alpha G(\theta)$. 

If we denote $\|G(\theta)\|^2\triangleq \epsilon_{\theta}$, then $\int_{{\theta}\in(\Delta(\mca))^{|\mcs|}} \epsilon_{\theta} d\mathbb{P}(\theta_t={\theta})   =\mE[\|G_t\|^2]$. 
Following the proof of Theorem \ref{thm:srpg}, and because of $\|G({\theta})\|=\sqrt{\epsilon_{\theta}}$, we have
\begin{align}
    -\nabla J_\sigma({\theta}^+)\in N_{(\Delta(\mca))^{|\mcs|}}({\theta}^+)+\sqrt{\epsilon_{\theta}} B_2.
\end{align}
From the PL-condition in Theorem \ref{thm:smPL}, we further have that 
\begin{align}
    J_\sigma({\theta}^+)-J_\sigma^*&\leq C_{PL}\max_{\hat{\pi}\in(\Delta(\mca))^{|\mcs|}}\langle \pi_{{\theta}^+}-\hat{\pi}, \nabla J_\sigma({\theta}^+)\rangle+\left(\frac{\gamma R}{1-\gamma}\right)\frac{2\log |\mcs|}{\sigma}\nn\\
    &\leq 2C_{PL}\sqrt{ |\mcs|\epsilon_{\theta}}+\left(\frac{\gamma R}{1-\gamma}\right)\frac{2\log |\mcs|}{\sigma}.
\end{align}
Hence we have that for any $t$:
\begin{align}
    \mE[J_\sigma(\theta_t^+)-J_\sigma^*]&=\int_{{\theta}\in\Theta}   (J_\sigma({\theta}^+)-J_\sigma^*)d\mathbb{P}(\theta_t={\theta}) \nn\\
    &\leq \int_{{\theta}\in\Theta} 2C_{PL}\sqrt{ |\mcs|\epsilon_{\theta}}\mathbb{P}(\theta_t={\theta}) +\left(\frac{\gamma R}{1-\gamma}\right)\frac{2\log |\mcs|}{\sigma}\nn\\
    &\leq 2C_{PL}\sqrt{|\mcs|}\sqrt{\int_{{\theta}\in\Theta}  \epsilon_{\theta}d\mathbb{P}(\theta_t={\theta})}+\left(\frac{\gamma R}{1-\gamma}\right)\frac{2\log |\mcs|}{\sigma}\nn\\
    &= 2C_{PL}\sqrt{|\mcs|}\sqrt{\mE[\|G_t\|^2]}+\left(\frac{\gamma R}{1-\gamma}\right)\frac{2\log |\mcs|}{\sigma}.
\end{align}
And further we have that 
\begin{align}
     \mE[J_\sigma(\theta_U^+)-J_\sigma^*]&=\frac{\sum^{T-1}_{t=0} \mE[J_\sigma(\theta_t^+)-J_\sigma^*]}{T}\nn\\
     &=\frac{\sum^{T-1}_{t=0}2C_{PL}\sqrt{|\mcs|}\sqrt{\mE[\|G_t\|^2]} }{T}+\left(\frac{\gamma R}{1-\gamma}\right)\frac{2\log |\mcs|}{\sigma}\nn\\
     &\leq 2C_{PL}\sqrt{|\mcs|}\sqrt{\frac{\sum^{T-1}_{t=0} \mE[\|G_t\|^2]}{T}}+\left(\frac{\gamma R}{1-\gamma}\right)\frac{2\log |\mcs|}{\sigma}\nn\\
     &\leq 2C_{PL}\sqrt{|\mcs|\epsilon_G}+\left(\frac{\gamma R}{1-\gamma}\right)\frac{2\log |\mcs|}{\sigma}.
\end{align}

Now consider $J_\sigma(\theta_{U+1})$. Note that 
\begin{align}
    \mE[J_\sigma(\theta_{U+1})-J_\sigma(\theta_U^+)]&\leq \mE[C^V_\sigma\|\theta_U^+-\theta_{U+1}\|]\nn\\
     &\overset{(a)}{\leq} C^V_\sigma \alpha \mE[\|g_{U}-\nabla J_\sigma(\theta_{U}) \| ]\nn\\
    &=C^V_\sigma \alpha \mE[\|{\Omega}_{U} \| ]\nn\\
    &\leq C^V_\sigma  \alpha  \sqrt{C_\Omega},
\end{align}
where $(a)$ is from Lemma 2 in \cite{ghadimi2016mini}, and the last inequality is from \eqref{eq:deltabound} and the fact that $\mE[\|\Omega_t \|]\leq \sqrt{ \mE[\| {\Omega}_{t} \|^2]}\leq \sqrt{C_\Omega}$ for any $t$.

 Hence we have
\begin{align}
    \mE[J_\sigma(\theta_{U+1})-J^*_\sigma]&=\mE[J_\sigma(\theta_{U+1})-J_\sigma(\theta_U^+)+J_\sigma(\theta_U^+)-J^*_\sigma]\nn\\
    &\leq C^V_\sigma  \alpha   \sqrt{C_\Omega}+2C_{PL}\sqrt{|\mcs|\epsilon_G}+\left(\frac{\gamma R}{1-\gamma}\right)\frac{2\log |\mcs|}{\sigma}\nn\\
    &\leq 2C^V_\sigma\frac{\epsilon}{L_\sigma}+2C_{PL}\sqrt{|\mcs|\epsilon_G}+\epsilon.
\end{align}
Moreover, we have that 
\begin{align}
    |J_\sigma(\theta)-J(\theta)|&\leq \epsilon,\nn\\
    |J^*_\sigma-J^*|\leq \epsilon,
\end{align}
hence
\begin{align}
    \mE[J(\theta_{U+1})-J^*]&\leq 2\epsilon+2C^V_\sigma\frac{\epsilon}{L_\sigma}+2C_{PL}\sqrt{|\mcs|\epsilon_G}+\epsilon \nn\\
    &=2C_{PL}\sqrt{|\mcs|}\sqrt{\frac{8L_\sigma(J_\sigma(\theta_0)-J_\sigma^*)}{T}+4C^V_\sigma b_g+6 C_\Omega}   +3\epsilon+2C^V_\sigma\frac{\epsilon}{L_\sigma}.
\end{align}
Plug in the definition of $C_\Omega$, and we further have that 
\begin{align}\label{eq:acbound}
    \mE[J(\theta_{U+1})-J^*]&\leq 2\epsilon+2C^V_\sigma\frac{\epsilon}{L_\sigma}+2C_{PL}\sqrt{|\mcs|\epsilon_G}+\epsilon \nn\\
    &=2C_{PL}\sqrt{|\mcs|}\sqrt{\frac{8L_\sigma(J_\sigma(\theta_0)-J_\sigma^*)}{T}+4C^V_\sigma b_g+6 \left( b_g^2+\frac{ 2 \left(C_g^2 +{C^V_\sigma}^2\right)}{M}\right)}   +3\epsilon+2C^V_\sigma\frac{\epsilon}{L_\sigma}\nn\\
    &\leq 2C_{PL}\sqrt{|\mcs|}\left(\sqrt{\frac{16L_\sigma C_\sigma }{T}}+ \sqrt{4C^V_\sigma b_g +6b_g^2}+\sqrt{\frac{ 12 \left(C_g^2 +{C^V_\sigma}^2\right)}{M}} \right)
\end{align}

Hence if we set 
\begin{align}
    b_g&\leq \frac{\epsilon}{2\sqrt{6|\mcs|}C_{PL}},\\
    M&=\frac{48(C_g^2+{C^V_\sigma}^2)|\mcs|C_{PL}^2}{\epsilon^2},\\
    T&=\frac{64L_\sigma C_\sigma|\mcs|C_{PL}^2}{\epsilon^2},
\end{align}
we have that
\begin{align}
     \min_{1\leq t \leq T}\mE[J(\theta_t)]\leq  \mE[J(\theta_{U+1})-J^*]&\leq 6\epsilon+2C^V_\sigma\frac{\epsilon}{L_\sigma}\leq 7\epsilon.
\end{align}

\section{Useful Lemmas}
\begin{lemma}\label{lemma:sub_max}
Let $F(x)=\max\left\{ f_1(x),...,f_n(x)\right\}$, and for any $x$, denote $I(x)\triangleq\arg\max_i \left\{ f_i(x)\right\}$. Then $\left\{ \partial f_i(x): i\in I_x \right\} \subseteq \partial F(x)$.
\end{lemma}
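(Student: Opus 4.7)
The plan is to work directly from the definition of the Fr\'{e}chet subdifferential given in the paper and exploit two elementary facts: (i) $F(y)\geq f_i(y)$ for all $y$ and all $i$, which holds because $F$ is defined as the pointwise maximum; and (ii) $F(x)=f_i(x)$ whenever $i\in I(x)$, by the definition of the active index set. Together, these produce a one-sided inequality on the numerator in the definition of the subdifferential that transfers the lower bound from $f_i$ to $F$.

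Concretely, fix $x$, fix $i\in I(x)$, and take any $u\in\partial f_i(x)$. By the paper's definition of $\partial f_i(x)$, we have
\begin{align*}
\liminf_{h\to 0,\,h\neq 0}\frac{f_i(x+h)-f_i(x)-\langle h,u\rangle}{\|h\|}\geq 0.
\end{align*}
The first step is to observe that for every $h$,
\begin{align*}
F(x+h)-F(x)-\langle h,u\rangle \;\geq\; f_i(x+h)-f_i(x)-\langle h,u\rangle,
\end{align*}
since $F(x+h)\geq f_i(x+h)$ by (i) and $F(x)=f_i(x)$ by (ii), so the $-F(x)$ term exactly cancels against $-f_i(x)$ while $F(x+h)\geq f_i(x+h)$ only enlarges the numerator. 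The second step is to divide both sides by $\|h\|>0$ and take $\liminf$ as $h\to 0$, using monotonicity of $\liminf$, to conclude
\begin{align*}
\liminf_{h\to 0,\,h\neq 0}\frac{F(x+h)-F(x)-\langle h,u\rangle}{\|h\|}\geq 0,
\end{align*}
which by definition means $u\in\partial F(x)$. Since $u\in\partial f_i(x)$ was arbitrary, $\partial f_i(x)\subseteq \partial F(x)$, and since $i\in I(x)$ was arbitrary, the stated set inclusion follows.

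There is no substantive obstacle here; the argument is a direct consequence of the definition and the pointwise $\max$ structure. The only thing to be mindful of is using $i\in I(x)$ to ensure the equality $F(x)=f_i(x)$ at the base point, without which the numerator comparison would fail (in general one only has $F(x)\geq f_i(x)$, and that would flip the inequality the wrong way). This is the one place where activeness is essential, and it is what makes the lemma a statement about $I(x)$ rather than about all $i$.
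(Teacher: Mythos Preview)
Your proposal is correct and follows essentially the same approach as the paper's proof: both use the pointwise inequality $F(x+h)\geq f_i(x+h)$ together with the equality $F(x)=f_i(x)$ for $i\in I(x)$ to bound the numerator in the Fr\'{e}chet subdifferential definition, then pass to the $\liminf$. The only difference is cosmetic notation ($h$ versus $y-x$).
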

\begin{proof}
From the definition, we know that for any $i\in I_x$ and $g\in\partial f_i(x)$, we have
\begin{align}
    \lim\inf_{y\to x} \frac{f_i(y)-f_i(x)-\langle g, y-x\rangle}{\|y-x\|}\geq 0.
\end{align}
It can be showed that $g\in\partial F(x)$ as follows:
\begin{align}
    \lim\inf_{y\to x} \frac{F(y)-F(x)-\langle g, y-x\rangle}{\|y-x\|}\geq \lim\inf_{y\to x} \frac{f_i(y)-f_i(x)-\langle g, y-x\rangle}{\|y-x\|}\geq 0,
\end{align}
which is from $F(y)\geq f_i(y)$ and $F(x)=f_i(x)$. And this completes the proof. 
\end{proof}
%Here we introduce some notations for clarity. Note that $\partial F(x)=\partial f_{I_x}(x)$, where $I_x$ is also depending on $x$, so we denote the sub-gradient of it by $\partial f_{I_x}$. On the other hand, we can first compute $\partial f_i(x)$ for any general $i$ and set $i=i_x$. This is the sub-gradient of $f_i(x)$ but doesn't considering the dependence of $i_x$. We denote this "compute-and-set-index" sub-gradient by $\partial f_i(x)|_{i_x}$. The lemma then shows that $\left\{\partial f_i|_{i_x} : i_x\in I_x\right\}  \subseteq \partial f_{I_x}=\partial F(x)$.

\section{Additional Experiments}
In this section we present some additional experiments to demonstrate our theoretical results.

\textbf{Robust Policy Gradient.}
We provide more experiment results on robust policy gradient v.s. non-robust one. In \cref{Fig.arpg} we compare the two algorithms on $\mathcal{G}(12,6)$, and in \cref{Fig.ag20}, we compare them on $\mathcal{G}(20,10)$. All the results show that our robust policy gradient can find a policy that has higher accumulated discounted reward under the worst-case transition kernel.

\begin{figure}[htbp]
\begin{center}
\subfigure[$R=0.12$.]{
\label{Fig.ag1}
\includegraphics[width=0.47\linewidth]{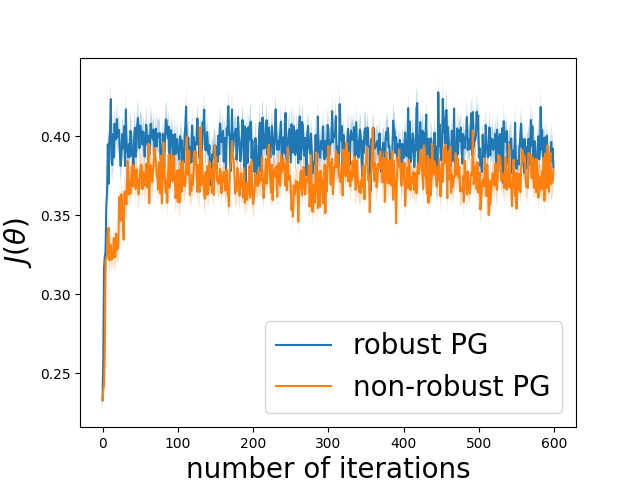}}
\subfigure[$R=0.15$.]{% of $G_{t+1}(\theta_t,\omega_t)$.]{
\label{Fig.ag2}
\includegraphics[width=0.47\linewidth]{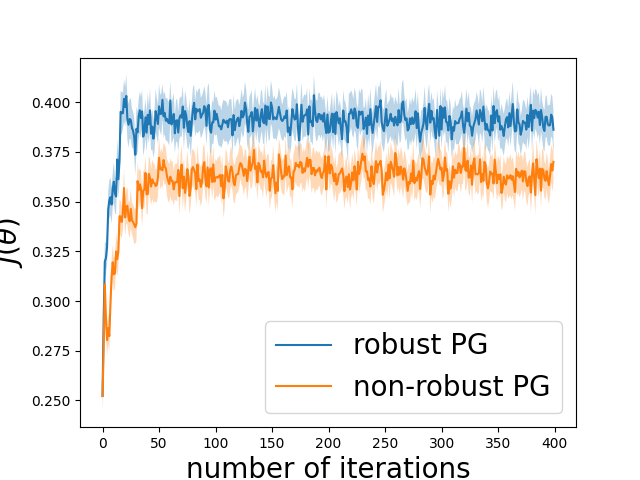}}
\subfigure[$R=0.2$.]{
\label{Fig.ag3}
\includegraphics[width=0.47\linewidth]{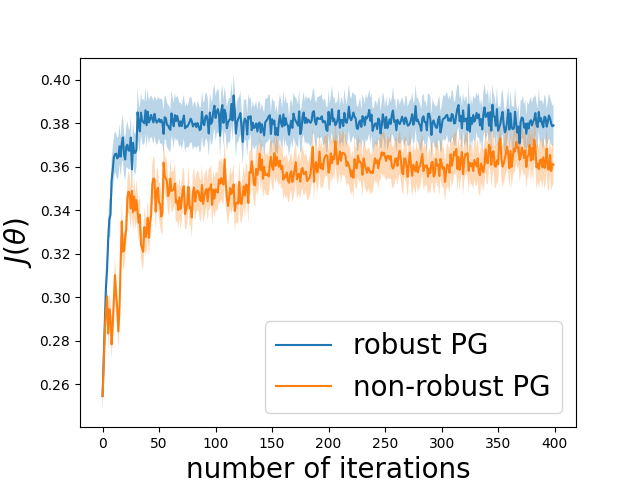}}
\subfigure[$R=0.25$.]{
\label{Fig.ag4}
\includegraphics[width=0.47\linewidth]{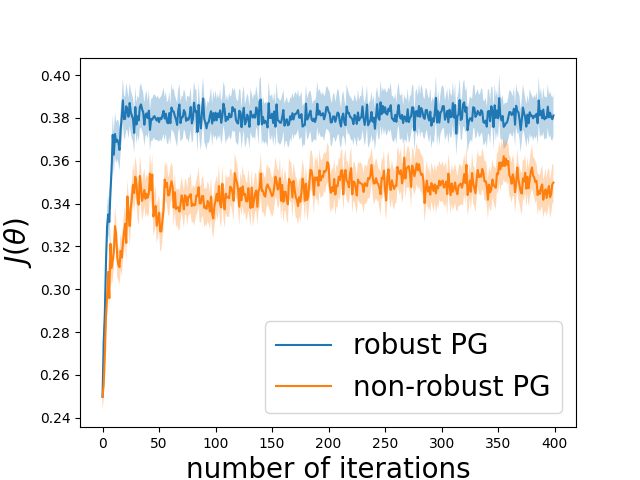}}
\caption{Robust Policy Gradient v.s. Non-robust Policy Gradient on Garnet Problem $\mathcal{G}(12,6)$.}
\label{Fig.arpg}
\end{center}
\vskip -0.2in
\end{figure}

\begin{figure}[ht]
\vskip 0.2in
\begin{center}
\subfigure[$R=0.1$.]{
\label{Fig.ag201}
\includegraphics[width=0.47\linewidth]{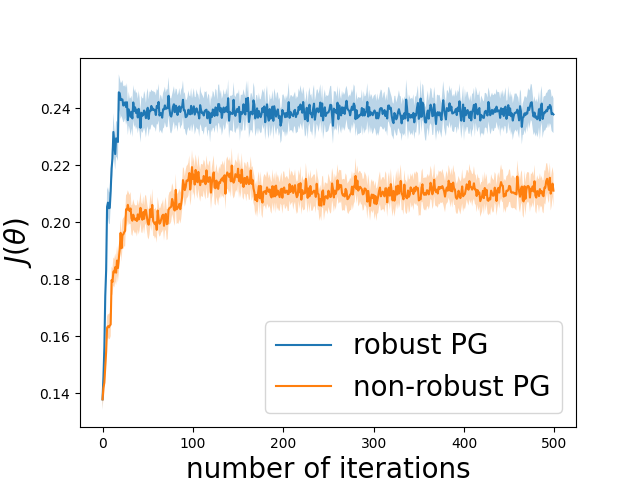}}
\subfigure[$R=0.2$.]{% of $G_{t+1}(\theta_t,\omega_t)$.]{
\label{Fig.ag202}
\includegraphics[width=0.47\linewidth]{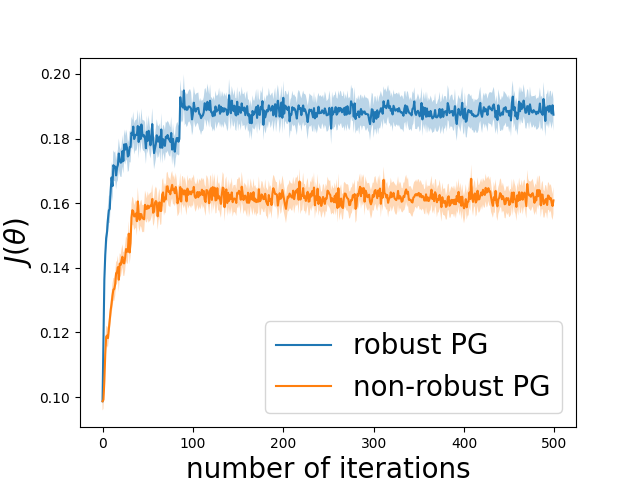}}
\caption{Robust Policy Gradient v.s. Non-robust Policy Gradient on Garnet Problem $\mathcal{G}(20,10)$.}
\label{Fig.ag20}
\end{center}
\vskip -0.2in
\end{figure}

\newpage
\textbf{Robust Actor-Critic.}

In \cref{Fig.AC2}, we compare robust actor-critic and vanilla one on Garnet problem $\mathcal{G}(40,15)$ using neural policy and neural network to approximate robust value functions. We plot the discounted accumulative reward under the worst case v.s. number of iterations. Our results suggest that robust actor-critic is more robust than vanilla actor-critic under the model mismatch. 
\begin{figure}[ht]
\vskip 0.2in
\begin{center}
\subfigure[$R=0.15$.]{
\label{Fig.ac3}
\includegraphics[width=0.47\linewidth]{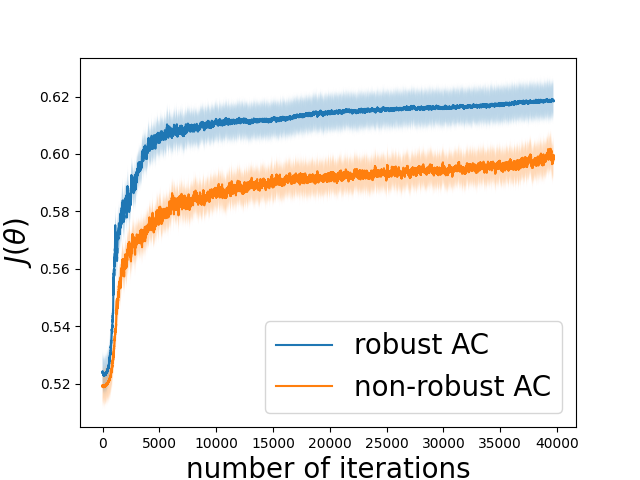}}
\subfigure[$R=0.2$.]{% of $G_{t+1}(\theta_t,\omega_t)$.]{
\label{Fig.ac4}
\includegraphics[width=0.47\linewidth]{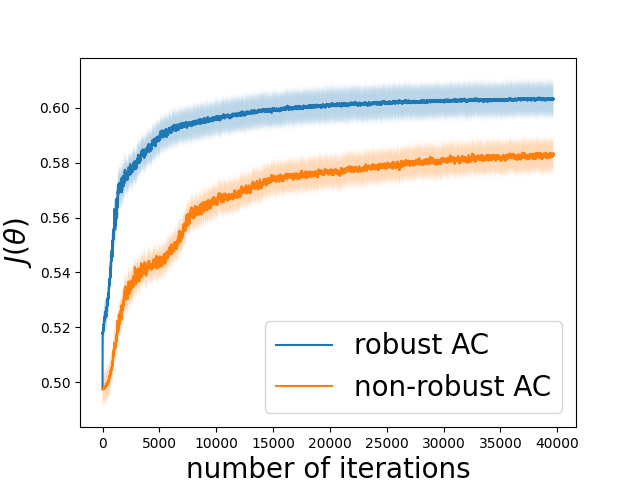}}
\caption{Robust Actor-Critic v.s. Non-robust Actor-Critic on Garnet Problem $\mathcal{G}(40,15)$.}
\label{Fig.AC2}
\end{center}
\vskip -0.2in
\end{figure}

In \cref{Fig.AC}, we consider Garnet problem $\mathcal{G}(30,10)$ using direct policy parameterization, and we use a two-layer neural network (with 20 neurons in the hidden layer) in the critic to approximate the robust value function.
% At each time step, we run \cref{alg:NTD} for 30 times to estimate the robust value function. We then use the estimate to simulate \cref{alg:act}. We plot $J(\theta_t)$ v.s. the number of iterations in \cref{Fig.AC}, and the upper and lower envelopes of the curves correspond to the 95 and 5 percentiles of the 30 trajectories. 
As the results show, our robust actor-critic algorithm finds a policy that achieves a higher accumulated discounted reward under the worst-case transition kernel than the vanilla actor-critic algorithm. 
\begin{figure}[ht]


\vskip 0.2in
\begin{center}
\subfigure[$R=0.15$.]{
\label{Fig.ac1}
\includegraphics[width=0.47\linewidth]{AC2.png}}
\subfigure[$R=0.2$.]{% of $G_{t+1}(\theta_t,\omega_t)$.]{
\label{Fig.ac2}
\includegraphics[width=0.47\linewidth]{AC3.png}}
\caption{Robust Actor-Critic v.s. Non-robust Actor-Critic on Garnet Problem $\mathcal{G}(30,10)$.}
\label{Fig.AC}
\end{center}
\vskip -0.2in
\end{figure}

\newpage
\section{Constants}
In this section we list the definition of all the constants in this paper.
\begin{align}
    L_V&=\frac{k_\pi |\mca|}{(1-\gamma)^2},\nn\\
    C_{PL}&={\frac{1}{(1-\gamma)\mu_{\min}}},\nn\\
    C_\sigma&=\frac{1}{1-\gamma}(1+2\gamma R \frac{\log|\mcs|}{\sigma}),\nn\\
    C^V_\sigma&=\frac{1}{1-\gamma} |\mca|k_\pi C_\sigma,\nn\\
    k_B&=\frac{1}{1-\gamma +\gamma R} \left( |\mca|C_\sigma l_\pi+|\mca|k_\pi C^V_\sigma\right) +\frac{2|\mca|^2\gamma (1-R)}{(1-\gamma+\gamma R)^2}k_\pi^2C_\sigma,\nn\\
    L_\sigma&=k_B+\frac{\gamma R}{1-\gamma}\left(\sqrt{|\mcs|}k_B+ 2\sigma|\mcs|C^V_\sigma \frac{1}{1-\gamma+\gamma R}k_\pi|\mca|C_\sigma\right),\nn\\
    b_g&=2\sigma\epe e^{\sigma\epe}\frac{\gamma R}{1-\gamma} \frac{|\mca|(\epe+C_\sigma)}{1-\gamma+\gamma R}+\frac{\gamma R}{1-\gamma} \frac{|\mca|\epe}{1-\gamma+\gamma R}+\frac{|\mca| \epe}{1-\gamma+\gamma R},\nn\\
    C_g&= \left(\frac{\gamma R}{1-\gamma}+1\right)\frac{|\mca|}{(1-\gamma +\gamma R)} (C_\sigma+\epe),\nn\\
    C_\Omega&=b_g^2+\frac{ 2 \left(C_g^2 +{C^V_\sigma}^2\right)}{M},\nn\\
    \epsilon_G&= \frac{8L_\sigma(J_\sigma(\theta_0)-J_\sigma^*)}{T}+4C^V_\sigma b_g+6 C_\Omega.
\end{align}

\end{document}